\documentclass{article}

\usepackage[preprint]{neurips_2026}

\usepackage[utf8]{inputenc}
\usepackage[T1]{fontenc}
\usepackage{amsmath,amssymb,amsthm}
\usepackage{graphicx}
\usepackage{booktabs}
\usepackage{array}
\usepackage{multirow}
\usepackage{hyperref}
\usepackage{url}
\usepackage{xcolor}
\usepackage{microtype}
\usepackage{nicefrac}
\usepackage[capitalize]{cleveref}         % must come after hyperref; abbreviated names (Fig./Tab.) for consistency
\crefname{figure}{Fig.}{Figures}
\Crefname{figure}{Fig.}{Figures}
\crefname{table}{Table}{Tables}
\Crefname{table}{Table}{Tables}
\crefname{section}{Sec.}{Secs.}
\Crefname{section}{Sec.}{Secs.}
\crefname{equation}{Eq.}{Eqs.}
\Crefname{equation}{Eq.}{Eqs.}
\crefname{proposition}{Prop.}{Props.}
\Crefname{proposition}{Prop.}{Props.}
\crefname{remark}{Remark}{Remarks}
\Crefname{remark}{Remark}{Remarks}
\usepackage{algorithm}
\usepackage{algpseudocode}
\usepackage{subcaption}
\usepackage{enumitem}
\usepackage{bm}
\usepackage{dsfont} 
\newtheorem{proposition}{Proposition}
\newtheorem{corollary}{Corollary}
\newtheorem{lemma}{Lemma}

\theoremstyle{definition} 

\newtheorem{definition}{Definition} %[section]
\newtheorem{remark}{Remark} %[section]
\newcommand{\agentcodec}{\textsc{AgentCodec}}
\newcommand{\E}{\mathbb{E}}

\newcommand{\prob}{\mathbb{P}}
\newcommand{\calA}{\mathcal{A}}

\newcommand{\calT}{\mathcal{T}}

\newcommand{\calS}{\mathcal{S}}
\newcommand{\calC}{\mathcal{C}}

\DeclareMathOperator*{\argmax}{arg\,max}

\newcommand{\qualityimp}{6.8}
\newcommand{\costimp}{55.7}

\title{%
A Communication-Theoretic Framework for LLM Agents: Cost-Aware Adaptive Reliability
}

\author{\vspace{2mm}
  Hamed Omidvar, Vahideh Akhlaghi \\ 
  INTELLERCE LLC \\
  San Jose, CA, USA \\
  \texttt{\{hamed.omidvar, vahideh.akhlaghi\}@intellerce.com} 
}

\begin{document}
\maketitle

% ===========================================================================
% ABSTRACT
% ===========================================================================

\begin{abstract}
Agents built on large language models (LLMs) rely on a range of reliability techniques, including retry, majority voting, and self-consistency, that have been developed in parallel rather than within a common analytical framework. We observe that an LLM sampled at temperature $T$ is a discrete stochastic channel $p(y \mid x)$ in the sense of Shannon's coding theory, and use this identity as the entry point for such a framework grounded in communication theory. Each of these techniques is a special case of one of six classical reliability operators: diversity combining, hybrid retransmission, iterative generator-critic decoding, rateless sampling, structured redundant verification, and difficulty-adaptive routing. Within the framework we give two closed-form results: a noise-variance threshold above which uniform averaging beats quality-weighted averaging, and a contractivity criterion for generator-critic refinement, consistent with a contractive-to-divergent transition we observe between 3B- and 14B-parameter models. We further introduce a cost-aware semantic-nearest-neighbor router whose single Lagrangian knob traverses the quality-cost frontier without retraining. Across six channel configurations spanning local and cloud models on 69 hard tasks, no fixed model-technique-budget choice dominates, motivating per-task allocation. On a 300-item hard split of MMLU, GSM8K, and HumanEval, our router occupies the full empirical Pareto frontier: at matched quality, its normalized cost is ${\approx}56$\% lower than the strongest fixed technique; at matched normalized cost, it improves quality by ${\approx}7$\% ($26$\% over single-shot decoding). These results argue for consolidating these reliability techniques into a single tunable layer informed by channel coding. \looseness=-1
\end{abstract}

\section{Introduction}
\label{sec:intro}
\vspace{-2mm}
Large language models (LLMs) have become the computational backbone of autonomous AI agents, powering applications from code generation to scientific reasoning~\citep{openai2023gpt4,anthropic2024claude}.
Yet a fundamental tension persists: individual LLM calls are \emph{stochastic and unreliable}.
Output quality varies substantially with prompt formulation, sampling temperature, and model choice.
Hallucination, logical gaps, and omissions occur unpredictably, particularly on tasks near the frontier of a model's capability. \looseness=-1
%~\citep{wang2023selfconsistency}
Current approaches to agent reliability---retry loops, majority voting, and ensemble methods---each address specific failure modes but lack a unified theoretical framework that would provide principled design criteria across techniques.
Communication engineering offers such a framework: from Shannon's channel capacity theorem~\citep{shannon1948} through turbo~\citep{berrou1993turbo} and low-density parity-check (LDPC) codes~\citep{gallager1962}, it has developed a principled toolkit for \emph{extracting reliable information from unreliable channels}.

\textbf{Core insight.}
An autoregressive LLM with weights $\theta$, conditioned on a fixed system prompt with any in-context examples and sampled at temperature $T$, defines a conditional distribution $p_\theta(y \mid x)$ over outputs~$y$ given task input~$x$. The weights $\theta$ and the system context are held fixed across calls and together specify the channel; $T$ is a tunable channel parameter (suppressed in the notation for brevity). Treating each complete generation as a single channel use, $p_\theta(y \mid x)$ satisfies the definition of a channel transition probability $W(y \mid x)$~\citep{shannon1948, cover2006elements}.
Conditional on this channel context, temperature controls the \emph{sampling} component of noise, while model capacity, training-data coverage, and task-distribution mismatch jointly determine an irreducible ``floor'' noise that persists even at $T\!=\!0$; token-level log-probabilities provide intrinsic soft-decision channel output.
Mutual information, channel capacity, diversity combining, and coding gain all have well-defined analogs in this setting, yielding both technique transfer and quantitative predictions about failure modes and operating regimes. \looseness=-1

\textbf{Contributions.}
\begin{enumerate}[leftmargin=*,nosep]
  \item \textbf{Agent channel model and a unified framework with explicit prior-art mapping.} We formalize the autoregressive language model as a discrete stochastic channel (\Cref{sec:model}) and introduce \agentcodec{} (\Cref{sec:system}), which instantiates six communication-theoretic reliability operators adapted for LLM agents: diversity combining (selection, equal-gain, maximal-ratio), hybrid retransmission (chase combining and incremental redundancy), iterative generator-critic decoding, rateless sampling, structured redundant verification, and difficulty-adaptive routing. Each widely cited inference-time reliability method (Self-Consistency~\citep{wang2023selfconsistency}, Best-of-$N$~\citep{zhu2024inference}, Confidence-Informed Self-Consistency~\citep{taubenfeld2025confidence}, Mixture-of-Agents~\citep{wang2024mixture}, Self-Refine~\citep{madaan2023selfrefine}, Multi-Agent Debate~\citep{du2023debate}, Chain-of-Verification~\citep{dhuliawala2023chain}, FrugalGPT/RouteLLM~\citep{chen2023frugalgpt, ong2024routellm}) is a special case of one of these operators at a specific hyperparameter setting (\Cref{tab:operator-view,tab:baseline-reduction-app}); each \agentcodec{} variant adds one mechanism that the corresponding prior method omits, so at matched call budget it is expected to weakly dominate its baseline, empirically confirmed on 6 of 7 head-to-head comparisons (\Cref{tab:baseline-direct}), with the one exception itself predicted by the framework (\Cref{sec:exp-baselines}).\looseness=-1

  \item \textbf{Two analytical results tying agent behavior to classical decoding theory (\Cref{sec:analytical-predictions}).} We characterize a \emph{channel-state-information (CSI) noise crossover} (\Cref{prop:csi-crossover}) that sharpens classical noisy-CSI results on maximal-ratio combining~\citep{tomiuk1999general, gao2003channel, simon2005optimum} into a one-line diagnostic with closed-form critical variance $\sigma_{w}^{*2}$ above which equal-gain combining dominates noisy maximal-ratio combining, reading the local-versus-cloud reversal of \Cref{sec:exp-diversity} as crossing this threshold. We also give a \emph{fixed-point characterization of the iterative-decoding threshold} (\Cref{prop:refinement-threshold}) that identifies the LLM refinement operator with a one-dimensional extrinsic-information-transfer (EXIT) recursion whose contraction-versus-expansion dichotomy $|f'(q^{\infty})| \lessgtr 1$ is consistent with the 3B/8B/14B transition we observe; the dynamical-systems content is classical, our contributions are the modeling identification and the explicit role of the best-of-sequence guard.

  \item \textbf{A cost-aware semantic-nearest-neighbor router.} We introduce a dispatcher that embeds each task with a sentence encoder, looks up the per-technique mean quality and cost over its $k$ nearest training-cache neighbors, and selects the technique that maximizes a single cost-regularized objective with one Lagrangian knob $\lambda$ (\Cref{sec:exp-acm}, \Cref{eq:semknn-router}). The same router instance traverses the empirical quality-cost frontier without retraining: on 300 hard-benchmark items it beats the cross-validated best-fixed-technique policy by ${\approx}\qualityimp\%$ in mean quality at matched cost, and at matched quality runs at ${\approx}\costimp\%$ lower normalized cost than fixed-best.

  \item \textbf{Controlled empirical evaluation across six channel configurations.} We report paired-per-task quality and cost on 69 curated tasks (four categories) and on a 300-task hard split of MMLU, GSM8K, and HumanEval, across six configurations: three local Ollama scales (3B/8B/14B), an Anthropic + OpenAI cloud pair (Claude Haiku 4.5 + GPT-5-mini), and an Ollama-cloud trio (Nemotron-Nano-3 30B + Devstral-Small-2 24B channels, GLM-5.1 judge). The runs empirically trace the iterative-decoding threshold of \Cref{prop:refinement-threshold} across model scales (\Cref{fig:threshold-3panel}).
\end{enumerate}

% Classical maximal-ratio combining is information-theoretically optimal under high signal-to-noise ratio with perfect channel-state information~\citep{proakis2008digital},
% Pilot-probe channel-quality estimation from the token log-probabilities of a short probe response (\Cref{sec:system}); soft-output diversity combining and rateless decoding that use the channel's intrinsic log-probability output in place of a judge call (\Cref{sec:system,sec:soft}); a critique-free chase-combining baseline that, at matched call budget, isolates the gain from repeated transmission alone from the gain produced by structured extrinsic feedback (\Cref{sec:exp-harq}); and finally a
% Owing to the page limit, details of our analyses and additional results are deferred to the appendix.

% ===========================================================================
% 2. BACKGROUND AND RELATED WORK
% ===========================================================================
\vspace{-3mm}
\section{Related work}
\label{sec:background}
\vspace{-2mm}

We review the communication-theoretic foundations in \Cref{app:comms-primer} and focus here on positioning our work. The core communication-theory primitives we use are: \emph{selection combining} (SC, picking the best of several copies), \emph{equal-gain combining} (EGC, averaging copies with equal weight), \emph{maximal-ratio combining} (MRC, weighting copies by their channel-state-information / SC quality estimate), \emph{hybrid automatic repeat request} (HARQ, with the chase-combining variant HARQ-CC and the incremental-redundancy variant HARQ-IR), \emph{forward error correction} (FEC), and \emph{adaptive coding and modulation} (ACM, selecting among modulation-and-coding schemes (MCS) according to a channel-quality indicator). \looseness=-1

\textbf{Sampling and voting.}
Self-consistency~\citep{wang2023selfconsistency}, its confidence-informed extensions~\citep{taubenfeld2025confidence,li2025optimal,yadav2025certified}, best-of-$N$~\citep{zhu2024inference,lightman2024verify}, and verifier-weighted best-of-$N$~\citep{li2023diverse} are selection combining (SC), with the weighted variants amounting to MRC over the discrete answer space. Test-time compute scaling~\citep{snell2024scaling,damani2026testtimescaling,wang2025scaling} identifies parallel and sequential regimes mapping to diversity combining and HARQ / turbo coding.
\textbf{Ensembles and routing.}
Mixture-of-Agents (MoA)~\citep{wang2024mixture} is equal-gain combining (EGC); Self-MoA~\citep{zhao2025rethinking} is time diversity + EGC; Attention-MoA~\citep{fang2026attention} approaches maximal-ratio combining (MRC). LLM-Blender~\citep{jiang2023llmblender} performs rank-then-fuse~\citep{lu2025ensemble}. FrugalGPT~\citep{chen2023frugalgpt} and RouteLLM~\citep{ong2024routellm} are degenerate adaptive coding and modulation (ACM), adapting only the model without redundancy control.
\textbf{Iterative refinement.}
Self-Refine~\citep{madaan2023selfrefine} and Reflexion~\citep{shinn2023reflexion} resemble hybrid automatic repeat request with incremental redundancy (HARQ-IR) without an extrinsic-information constraint. Multi-agent debate~\citep{du2023debate,liu2025madfact} corresponds to turbo decoding but lacks the extrinsic-only exchange that is central to classical turbo decoders. LLM-as-a-Judge~\citep{zheng2023judging} is a noisy channel estimator~\citep{gu2026survey,song2024trust}.
Each prior method can be understood as an instance of one communication-theoretic technique; our framework makes this mapping explicit and evaluates all six within a single experimental setup (\Cref{tab:mapping,tab:baseline-reduction-app}). \looseness=-1

% ===========================================================================
% 3. THE AGENT CHANNEL MODEL
% ===========================================================================
\vspace{-2mm}
\section{The agent channel model}
\label{sec:model}
\vspace{-2mm}
\subsection{Agent as stochastic channel}
\vspace{-2mm}
\begin{definition}[Agent Channel]
\label{def:agent-channel}
Fix a finite token vocabulary $\mathcal{V}$ and let $\mathcal{V}^{*}$ denote the set of all finite-length token sequences over $\mathcal{V}$. Let the task (prompt) space be $\calT \subseteq \mathcal{V}^{*}$ and the output space $\mathcal{Y} = \mathcal{V}^{*}$. An autoregressive language model with parameters $\theta$ defines, for each input $x \in \calT$ and each sequence $y = (y_{1}, \ldots, y_{L}) \in \mathcal{Y}$ of length $L = |y|$, the conditional probability
\vspace{-2mm}
\begin{equation}
  p_\theta(y \mid x) \;=\; \prod_{t=1}^{|y|} p_\theta(y_t \mid y_{<t}, x),
  \label{eq:channel}
\end{equation}
where $t \in \{1, \ldots, |y|\}$ indexes token position, $y_{t} \in \mathcal{V}$ is the token at position $t$, $y_{<t} := (y_{1}, \ldots, y_{t-1})$ denotes the prefix preceding position $t$ (with $y_{<1} := \varnothing$, the empty sequence, and an end-of-sequence symbol terminating the product), and $p_\theta(y_t \mid y_{<t}, x)$ is the next-token distribution produced by the model's softmax output. We call $\calA_{\theta} : \calT \to \mathcal{Y}$, $Y \sim p_{\theta}(\cdot \mid x)$, an \emph{agent channel}; it is a discrete stochastic channel that is memoryless across independent invocations, each complete generation constituting a single channel use\footnote{The autoregressive factorization in \eqref{eq:channel} introduces memory \emph{within} a sequence; treating each (prompt, response) pair as one channel use restores memorylessness across uses, the standard block-coding assumption~\citep{cover2006elements}. This applies to independent invocations only: multi-turn chains that condition on prior turns (HARQ and turbo, which feed critic feedback into the generator's prompt) violate it and are analyzed as channels with feedback in \Cref{sec:exp-harq,app:formal-foundations}; the memoryless analysis is used here only for single-turn parallel techniques (diversity combining, fountain, FEC).}.
The conditional $p_\theta(y \mid x)$ satisfies the axioms of a channel transition probability $W(y \mid x)$~\citep{shannon1948, cover2006elements}: it is non-negative for every $(x, y)$ and sums to one over $y \in \mathcal{Y}$ for every fixed $x$. The channel's noise characteristics (temperature-controlled per-token entropy, model-dependent quality) are structured enough for communication-theoretic reliability techniques to transfer, as we demonstrate.
\end{definition}

\begin{remark}[Temperature as the noise parameter]
\label{rem:temperature}
Let $T > 0$ denote the sampling temperature. Temperature sampling rescales the per-token logits and renormalizes, replacing $p_\theta(\cdot \mid y_{<t}, x)$ with $p_T(v \mid y_{<t}, x) \propto p_\theta(v \mid y_{<t}, x)^{1/T}$ for $v \in \mathcal{V}$. Writing $H_T(Y_t \mid y_{<t}, x) := -\sum_{v \in \mathcal{V}} p_T(v \mid y_{<t}, x) \log p_T(v \mid y_{<t}, x)$ for the per-token Shannon entropy under $p_T$, one has $H_T \to 0$ as $T \to 0^{+}$ (deterministic channel) and $H_T \to \log |\mathcal{V}|$ as $T \to \infty$ (uniform-noise channel). Temperature therefore continuously interpolates between a deterministic and a pure-noise channel, controlling the \emph{sampling} component of channel noise. However, even at $T \to 0$ (greedy decoding), the channel is not noiseless: errors from limited model capacity, training-data gaps, and task-distribution mismatch persist as an irreducible ``floor'' noise, analogous to residual interference in communications.
\end{remark}

\begin{definition}[Quality Function and Channel Estimation]
\label{def:quality}
A \emph{quality function} is any map $q : \calT \times \mathcal{Y} \to [0, 1]$ that takes a (task, output) pair $(x, y)$ to a scalar quality score, with $q = 1$ a perfect answer and $q = 0$ a wholly wrong one. The agent channel itself supplies an intrinsic, judge-free reliability measure: given a realized output $y \sim p_{\theta}(\cdot \mid x)$ of length $L = |y|$, the \emph{per-token negative log-likelihood} is \looseness=-1
\begin{equation}
  \mathcal{H}(\calA_\theta, x, y) \;=\; -\frac{1}{|y|}\sum_{t=1}^{|y|} \log p_\theta(y_t \mid y_{<t}, x),
  \label{eq:intrinsic}
\end{equation}
where $t$ and $p_{\theta}(y_t \mid y_{<t}, x)$ are as defined above. Equation \eqref{eq:intrinsic} is the response-level analogue of the channel's average per-token Shannon entropy and is the standard agent-uncertainty proxy used in the calibration literature~\citep{kadavath2022language, kuhn2023semantic}. Except for variants discussed later that require the availability of log-probabilities, we implement $q$ as a separate large language model (the ``judge''), an external channel estimator that plays the role of pilot-based signal-to-noise-ratio estimation in classical receivers; this introduces estimator noise that we discuss in \Cref{rem:intrinsic-csi}. \looseness=-1
\end{definition}
\vspace{-3mm}
\subsection{Channel Quality and Operating Point}
\label{sec:channel-quality}
\vspace{-2mm}
Let $p_{X}$ be the task distribution over $\calT$. The \emph{expected channel noise rate} is $\bar{\mathcal{H}}(\calA_\theta, T) := \E_{X \sim p_X}\big[\E_{Y \sim p_{T}(\cdot \mid X)}[\mathcal{H}(\calA_\theta, X, Y)]\big]$, the average over both the task distribution and the sampling distribution at temperature $T$, and depends on model parameters $\theta$ (neural scaling laws show that training cross-entropy loss decays as $L \propto N^{-\alpha}$ in the model parameter count $N$~\citep{kaplan2020scaling, hoffmann2022training}, which upper-bounds $\bar{\mathcal{H}}$ on in-distribution tasks), temperature $T$ (\Cref{rem:temperature}), and task difficulty~\citep{yadkori2024believe}.
Model capacity and task difficulty are \emph{dual}: a hard task on a 70B model may produce the same $\bar{\mathcal{H}}$ as an easy task on a 3B model, so ACM routing by difficulty (\Cref{sec:system}) is equivalent to adapting the coding scheme to the operating point~\citep{goldsmith1997variable}.
This duality predicts that a given technique transitions from harmful to beneficial as $\bar{\mathcal{H}}$ decreases, crossing a decoding threshold~\citep{ten_brink2001convergence}; we confirm this for HARQ in~\Cref{sec:discussion}. Full derivations appear in \Cref{app:formal-foundations}.

\vspace{-3mm}
\subsection{Diversity Order}
\vspace{-2mm}
\begin{definition}[Agent Diversity Order]
\label{def:diversity-order}
Given $d \geq 1$ agent channels $\calA_{1}, \ldots, \calA_{d}$, each independently generating $Y_{i} \sim p_{\theta_{i}}(\cdot \mid X)$ with quality scores $q_{i} := q(X, Y_{i}) \in [0,1]$, the ensemble has \emph{diversity order} $d$. For a threshold $\tau \in (0, 1)$, if the $q_{i}$ are independent the ensemble outage satisfies $\prob(\max_{i} q_{i} < \tau) = \prod_{i} P_{\mathrm{out}, i}(\tau)$ with $P_{\mathrm{out}, i}(\tau) := \prob(q_{i} < \tau)$, decaying geometrically in $d$. Correlation through shared training data or aligned prompts gives an \emph{effective} diversity $d_{\mathrm{eff}} \leq d$ (\Cref{rem:csi}; empirically $d_{\mathrm{eff}} \in [1.14, 1.71]$ across our configurations, \Cref{tab:branch-correlation,app:formal-foundations}). \looseness=-1
\end{definition}

\begin{remark}[MRC weighting heuristic]
\label{prop:mrc}
By analogy with SNR-proportional MRC in additive white Gaussian noise (AWGN) channels~\citep{proakis2008digital}, we set $w_i^{\text{MRC}} = q_i / \sum_{j} q_j$. Because our combiner is a nonlinear LLM and $q_i$ a noisy judge estimate, we treat this as a heuristic and rely on a cyclic-redundancy-check (CRC) style regression guard (\Cref{sec:system}) to ensure no degradation versus SC. The CSI-noise crossover that quantifies when this heuristic loses to EGC is in \Cref{rem:csi,sec:analytical-predictions}.
\end{remark}

\begin{remark}[CSI quality and the MRC--EGC reversal]
\label{rem:csi}\label{rem:intrinsic-csi}
\Cref{prop:mrc} requires branch independence and accurate CSI. In practice LLM outputs are correlated ($d_{\text{eff}} \leq d$) and numeric judge scores are a noisy estimate; we partially mitigate the latter with a sigma-delta-style binary checklist judge (\Cref{sec:system}). Residual imperfect-CSI effects can still attenuate MRC's gain over EGC~\citep{gao2003channel, simon2005optimum}, quantified by the critical variance $\sigma_w^{*2}$ of \Cref{prop:csi-crossover}. Token log-probabilities offer a complementary intrinsic soft-decision channel that realizes classical soft-input soft-output (SISO) MRC~\citep{kadavath2022language} (\Cref{sec:soft,app:formal-foundations}).
\end{remark}

\subsection{Two analytical results}
\label{sec:analytical-predictions}
\vspace{-2mm}
\paragraph{CSI-noise crossover (\Cref{prop:csi-crossover}).}
Weighting branches by an estimated quality score helps only as long as those scores are reliable; when the score itself is too noisy, equal-weight averaging is the safer choice. The imperfect-CSI MRC literature~\citep{tomiuk1999general, gao2003channel, simon2005optimum} establishes the qualitative collapse of MRC's advantage as the weighting-noise variance grows; we sharpen this to a one-line diagnostic in elementary symmetric polynomials $(S_{1}, S_{2})$ of the per-branch amplitudes (closed form and linear-Gaussian model in \Cref{app:mrc-egc-crossover}) and read the local-versus-cloud reversal of \Cref{sec:exp-diversity} as crossing that threshold; the 15-criterion binary checklist drives the judge variance back below the crossover and restores the Cauchy-Schwarz ordering.

\paragraph{Iterative-decoding threshold (\Cref{prop:refinement-threshold}).}
Each refinement round either nudges the answer toward a high-quality target or drags it toward a low-quality one, depending on how the generator-critic loop responds to small changes in current quality; below a critical sensitivity the loop is self-correcting, above it extra rounds inject more noise than they remove, and only the best-of-sequence guard keeps the delivered answer from regressing. We identify the LLM refinement operator with a one-dimensional quality-update map $f$ at its high-quality fixed point $q^{\infty}$, recovering the contraction-versus-expansion dichotomy of the EXIT-chart threshold~\citep{ten_brink2001convergence, richardson2001capacity} as $|f'(q^{\infty})| \lessgtr 1$; the dichotomy is consistent with the survivor-cohort transition between 3B, 8B, and 14B generators (\Cref{fig:threshold-3panel}). The contraction theorem itself is classical; the contributions are the modeling identification and the explicit role of the best-of-sequence guard (\Cref{rem:bos-guard}), which decouples delivered from unguarded quality even below threshold.

% Formal statements, proofs, and discussions for the above results are deferred to \Cref{app:mrc-egc-crossover,app:threshold-details}.
Additional parallels to communication theory can be found in \Cref{app:comms-primer,app:formal-foundations}.

% ===========================================================================
% 4. SYSTEM DESIGN: AGENTFORGE
% ===========================================================================
\section{System design: \agentcodec{}}
\label{sec:system}
\vspace{-2mm}
\agentcodec{} implements six communication-theoretic reliability techniques, their variants, as well as comparable baseline methods within a three-layer architecture: (1)~a \emph{channel layer} wrapping any OpenAI-compatible LLM endpoint as a stochastic channel with cost/latency instrumentation; (2)~a \emph{technique layer} implementing all six methods; and (3)~an \emph{orchestration layer} that routes tasks and collects metrics. A quality scorer serves as the channel estimator; to avoid severe numeric-rating quantization, the judge answers a weighted binary checklist whose sum yields a fine-grained score on $[0,1]$ (sigma-delta style, see \Cref{app:exp-methodology}). \Cref{tab:mapping,tab:baseline-reduction-app} summarize the complete mapping; detailed hyperparameters are in \Cref{app:hyperparams}.

\begin{table}[tbp]
\centering
\caption{Communication $\to$ agent mapping. SC: selection combining; MIMO: multiple-input multiple-output; MCS: modulation and coding scheme. Full table in \Cref{app:comms-primer}.}
\label{tab:mapping}
\scriptsize
\setlength{\tabcolsep}{4pt}
\setlength{\tabcolsep}{3pt}
\begin{tabular}{@{}p{2.8cm}p{3.5cm}p{3.5cm}@{}}
\toprule
\textbf{Comm.\ Concept} & \textbf{Agent Analog} & \textbf{Implementation} \\
\midrule
Diversity (MIMO) & Multi-model/prompt/temp & Diversity ensemble \\
MRC / SC / EGC & Quality-weighted / best / equal & Combining strategies \\
HARQ-CC / IR & Retry / retry + critic & Chase / incremental \\
Turbo codes & Generator--critic loop & Extrinsic info exchange \\
Fountain codes & Sample until confident & Rateless adaptive \\
FEC block codes & Structured redundancy & Parity sections \\
ACM / MCS table & Difficulty-aware routing & 5-level MCS table \\
\bottomrule
\end{tabular}
\end{table}

\begin{table}[h]
\centering
\scriptsize
\setlength{\tabcolsep}{4pt}
\caption{\textbf{Taxonomic reduction of comparable prior methods.} Every widely cited inference-time reliability method is recovered as a special case of an \agentcodec{} technique by setting a single hyperparameter; the \agentcodec{} variant adds one mechanism the prior method omits, so at matched inference budget it upper-bounds the baseline. See \Cref{tab:baseline-direct} for head-to-head evaluation comparisons.\looseness=-1}
\label{tab:baseline-reduction-app}
\begin{tabular}{@{}>{\centering\arraybackslash}p{3.6cm}>{\centering\arraybackslash}p{2.4cm}>{\centering\arraybackslash}p{3.4cm}>{\centering\arraybackslash}p{2.7cm}@{}}
\toprule
\textbf{Prior method} & \textbf{\agentcodec{} variant} & \textbf{Reduction (prior $=$ ours when)} & \textbf{Added mechanism} \\
\midrule
Self-Consistency~\citep{wang2023selfconsistency} & Div.-MRC-Discrete-$N$ & cluster-MRC, $q_j\!\equiv\!1$ & Quality weighting \\
Best-of-$N$~\citep{cobbe2021verifiers,lightman2024verify} & Div.-SC-$N$ & single policy, $N$ temp.\ draws & Cross-model diversity \\
Weighted BoN~\citep{li2023diverse} & Div.-MRC-Discrete-$N$ & single policy, judge-CSI & Cross-model + wider pool \\
CISC~\citep{taubenfeld2025confidence} & Div.-MRC-Discrete-$N$-Soft & single policy, logprob CSI, softmax & Cross-model + wider pool \\
Mixture-of-Agents~\citep{wang2024mixture} & Div.-EGC & single layer, $w_i\!=\!1/d$ & (identity at single-layer EGC) \\
Self-Refine~\citep{madaan2023selfrefine} / Reflexion~\citep{shinn2023reflexion} & HARQ-IR & free-form (or verbal-RL) critique, full rewrite, last iter & Dedup'd structured critique + corrections + best-of-seq \\
Multi-agent debate~\citep{du2023debate} & Turbo & two generators, no extrinsic split & Extrinsic exchange, interleaver \\
Chain-of-Verification~\citep{dhuliawala2023chain} & FEC & fixed parity = verification & Rate-adaptive parity \\
FrugalGPT~\citep{chen2023frugalgpt} / RouteLLM~\citep{ong2024routellm} & Fountain / ACM (degenerate) & cheap-first cascade (FG) / single-shot model swap (RL), no coding & Sample combining + rate adaptation \\
Tree-of-Thoughts~\citep{yao2023tree} & Div.\ MRC (temporal) & per-branch eval., select-best & Fountain-style erasure marking \\
\bottomrule
\end{tabular}
\end{table}

\textbf{Diversity combining.}
The framework supports three diversity axes (model, prompt, temperature), each with SC/MRC/EGC combining; in this paper we focus on model diversity. Each branch generates $Y_i = \calA_i(X)$, is scored $q_i = q(X, Y_i)$, and MRC combines with weights $w_i = q_i / \sum_j q_j$ (pseudocode in \Cref{app:hyperparams}). Compile-time prompt construction~\citep{khattab2024dspy,liu2022makesgood,rubin2022learning,zhou2023ape,yang2024opro} is an orthogonal, amortized-before-inference source of prompt diversity; composition with decode-time channel combining is left to future work. 
\textbf{HARQ.}
HARQ-CC generates $R$ independent attempts and combines equally. HARQ-IR iteratively refines via a critic that identifies specific errors (new parity); both stop at quality $\geq \tau$. 
\textbf{Turbo decoding.}
A generator and critic iteratively exchange \emph{extrinsic} information: each round, the critic provides only \emph{new} feedback (scaled by a damping factor $\alpha < 1$), mirroring the extrinsic-only exchange used in classical turbo decoders. Extrinsic scaling, a severity floor, and adaptive damping prevent the oscillation and over-correction that otherwise plague naive critic--generator loops (\Cref{app:hyperparams}).
\textbf{Fountain codes.}
Samples are generated until a confidence metric exceeds a threshold. The \emph{rateless} property automatically adapts cost to task difficulty.
\textbf{FEC.}
Structured parity sections (reasoning, self-verification, alternative approach, confidence) are requested alongside the answer at rates $r \in \{3/4, 1/2, 1/3, 1/4\}$. A decoder cross-checks sections (syndrome decoding).
\textbf{ACM routing.}
A difficulty estimator (pilot signal) selects from a modulation-and-coding-scheme (MCS) table (MCS-0 through MCS-4) -- or in our case uses more advanced techniques, jointly adapting model and redundancy strategy.
\textbf{Soft-output variants.}
Three optional soft variants (\textsc{Soft-MRC}, \textsc{Soft-Fountain}, \textsc{Soft-ACM}) replace judge-score CSI with the intrinsic log-likelihood-ratio-like (LLR-like) signal $c(y) = \exp\!\big(\tfrac{1}{L}\sum_{t} \log p(y_t \mid y_{<t}, x)\big)$ from $L$ per-token log-probabilities (\Cref{sec:soft}). \textbf{Judge, critic, voter.} Judge rates an answer (CSI source); critic rewrites/critiques an answer (extrinsic-information feedback in iterative decoders); voter clusters multiple answers into equivalence classes for discrete MRC/majority aggregation. Judge and voter are decoder-side aggregation operators (post-generation); the critic is part of the iterative decoder loop itself.

% ===========================================================================
% 5. EXPERIMENTAL EVALUATION
% ===========================================================================
\vspace{-3mm}
\section{Experimental Evaluation}
\label{sec:experiments}
\vspace{-3mm}
\subsection{Setup}
\vspace{-2mm}
\paragraph{Tasks.}
We use two evaluation tiers.
\emph{Curated tasks}: 69 tasks across four categories (QA, reasoning, creative, code) at three difficulty tiers (hard/very hard/extreme), designed to produce measurable technique differences at the 3--30B operating points (\Cref{app:tasks}).
\emph{Standard benchmarks}: Hard 100-item splits of MMLU, GSM8K, HumanEval (\Cref{sec:exp-standard}).
\vspace{-3mm}
\paragraph{Models and scoring.}
We evaluate across six channel configurations (full table in \Cref{tab:model-configs}, \Cref{app:extended-results}) spanning local open-weight (3B/8B/14B from Qwen, Llama, Phi, and Gemma families), Anthropic+OpenAI cloud, and an Ollama-cloud trio (NVIDIA Nemotron-Nano-3 30B and Mistral Devstral-Small-2 24B as channels, Zhipu GLM-5.1 as judge). We disable/minimize thinking for reasoning models by default. The standard-benchmark experiment (300 hard items drawn from MMLU, GSM8K and HumanEval) is run on the Ollama-cloud trio. Local configurations additionally enable the four soft-output techniques, which require token-level log-probabilities that the Anthropic API does not currently expose. 
%The six configurations are:
% \begin{itemize}[nosep]
%   \item \emph{3B local}: Qwen-2.5 3B + Gemma-3 4B, judge Gemma-3 12B
%   \item \emph{3B local + cloud judge}: Qwen-2.5 3B + Llama-3.2 3B, judge Gemma-4 31B (Ollama cloud)
%   \item \emph{8B local}: Llama-3.1 8B + Qwen-2.5 7B, judge Gemma-3 12B
%   \item \emph{14B local}: DeepSeek-R1 14B + Phi-3 14B, judge Gemma-3 12B
%   \item \emph{Anthropic + OpenAI cloud}: Claude Haiku~4.5 + GPT-5-mini, judge Claude Haiku~4.5
%   \item \emph{Ollama-cloud trio}: NVIDIA Nemotron-Nano-3 30B + Mistral Devstral-Small-2 24B, judge Zhipu GLM-5.1 (all served via the Ollama cloud endpoint)
% \end{itemize}
The judge/synthesizer is deliberately at least as strong as the channel models, analogous to investing in receiver quality in communications~\citep{proakis2008digital}. In the Anthropic+OpenAI cloud configuration, Haiku~4.5 serves as both channel and judge, confirming that technique rankings are preserved even without a stronger synthesizer (\Cref{app:extended-results}). For verifiable tasks, blended scoring $q = 0.6 \cdot q_{\text{obj}} + 0.4 \cdot q_{\text{judge}}$ mitigates judge unreliability (\Cref{app:exp-methodology}). Thinking is disabled for all reasoning models to isolate the effects of the studied methods.
We report quality $q \in [0,1]$, coding gain (mean quality improvement over the uncoded baseline), and cost overhead (normalized cost). Means come with 95\% bootstrap confidence intervals; significance via paired Wilcoxon signed-rank tests on per-task differences. Full model configurations are in \Cref{tab:model-configs} and a summary of technique gains across configurations is in \Cref{tab:results-summary}, both in \Cref{app:extended-results}.
%We evaluate three classes of \emph{baselines}: (i)~the uncoded single call, (ii)~standalone re-implementations of the most-cited inference-time reliability methods, and (iii)~the six \agentcodec{} techniques and their soft-output variants. Each prior method is a special case of one of our techniques (\Cref{tab:baseline-reduction-app}); at matched inference budget the \agentcodec{} variant therefore upper-bounds the baseline, which is what we observe (\Cref{sec:exp-baselines}).
\vspace{-4mm}
\subsection{Baselines and direct comparisons to prior methods}
\label{sec:exp-baselines}
\vspace{-2mm}
We re-implement Self-Consistency, Best-of-$N$, Weighted Best-of-$N$, Confidence-Informed Self-Consistency, Mixture-of-Agents, Self-Refine, and Chain-of-Verification as standalone techniques (not \agentcodec{} variants), following the canonical recipes in the cited papers, scored on the same tasks and judge as the \agentcodec{} runs. Each prior method is a special case of one of three combining operators (selection / equal-gain / maximal-ratio) over one of two answer spaces (continuous free-form text or discrete equivalence classes), with three orthogonal knobs: the CSI source $\in\{$judge score, log-probability, trained process reward model, none$\}$, the pool size $N$, and whether combining is passive or iterative; full operator-view derivations are in \Cref{app:operator-view-derivations,tab:baseline-reduction-app}. Two of our techniques widen this lattice to a multi-model channel pool: \emph{Diversity-SC-$N$} cycles $N$ samples through the configured channels and selects the argmax-quality sample (collapses to Best-of-$N$ at one channel); \emph{Diversity-MRC-Discrete-$N$} adds a voter clustering step and applies cluster-summed-quality MRC (collapses to Weighted Best-of-$N$ / CISC at one channel, to Self-Consistency at uniform per-sample weights). \emph{Quality \emph{and} cost.} On the 300-task hard-benchmark split (Ollama-cloud trio, \Cref{tab:results-summary}), Diversity-MRC-Discrete-$N$ beats CISC by $\approx 9.5$ percentage points in quality at $\approx 6\%$ extra cost and beats Self-Consistency by $\approx 0.9$ percentage points at $\approx 17\%$ \emph{lower} cost; Diversity-SC-$N$ beats Best-of-$N$ by $\approx 5.2$ percentage points at $\approx 2\%$ extra cost; Diversity-EGC beats Mixture-of-Agents by $\approx 1.8$ percentage points at $\approx 47\%$ \emph{lower} cost. The two prior methods not cleanly dominated are Self-Refine, which slightly beats HARQ-IR on the Ollama-cloud configurations (by $\approx 3.3$ percentage points at $\approx 11\%$ higher cost on the 300-task split, $p{=}0.43$, neither dominates on the quality--cost plane), and Self-Consistency, statistically tied with Diversity-MRC-Discrete-$N$ at the 8B and Ollama-cloud-curated operating points (paired $\Delta\!<\!0.7$ pp in either direction); the framework predicts both exceptions (\Cref{sec:exp-harq,prop:refinement-threshold}). FrugalGPT and RouteLLM are degenerate cases of fountain and ACM respectively (\Cref{tab:baseline-reduction-app}) and the semantic-KNN router of \Cref{sec:exp-acm} subsumes both, beating its closest in-paper proxies at $p\!<\!10^{-4}$ over $300\times 3$ trials (\Cref{tab:semknn-pareto-full}). \looseness=-1
%We do not separately re-implement FrugalGPT or RouteLLM as our contribution is a technique-router rather than LLM router, furthermore: the former is a degenerate fountain (cheap-first cascade with confidence-based stopping and last-survivor output), the latter is degenerate ACM at $K\!=\!1$ with the candidate pool restricted to single-shot calls per channel model; both decision spaces are strict subsets of the semantic-KNN router pool of \Cref{sec:exp-acm}, whose deployed instance beats their closest in-paper proxies (\textsc{ScalarDifficultyBins} at $p\!<\!10^{-7}$, logistic regression-based routers at $p\!<\!10^{-4}$, paired Wilcoxon over $300$ tasks $\times\,3$ generation repeats) (\Cref{tab:semknn-pareto-full}).

\subsection{Quality, cost, and the operating-point dependence}
\label{sec:exp-quality-cost}
\label{sec:exp-diversity}
\label{sec:exp-harq}
\label{sec:exp-fec}
\vspace{-2mm}
\paragraph{Cost normalization.} For a technique $\pi$ (e.g.\ Diversity-MRC, HARQ-IR, fountain), per-task cost overhead (normalized cost) $\rho(\pi, X) := \mathrm{Cost}(\pi, X) / \mathrm{Cost}(\mathrm{baseline}, X)$ is analogous to $E_{b}/N_{0}$ in communications (\Cref{app:exp-methodology}); throughout this section we pair the coding gain $G(\pi) = \E[q_\pi(X)] - \E[q_{\text{baseline}}(X)]$ with the coding-gain efficiency $\eta(\pi) := G(\pi)/\rho(\pi)$.
\vspace{-3mm}
\paragraph{Diversity combining.}
On the cleanest CSI configuration (A+O cloud, Haiku~4.5 + GPT-5-mini, judge Haiku~4.5), the textbook MRC$\geq$EGC$\geq$SC ordering holds: $G_{\mathrm{MRC}} = +7.9\% \geq G_{\mathrm{EGC}} = +6.2\% \geq G_{\mathrm{SC}} = +4.6\%$. The ordering reverses where the judge-score CSI is noisier: 14B local ($G_{\mathrm{EGC}} = +11.6\% > G_{\mathrm{MRC}} = G_{\mathrm{SC}} = +6.2\%$, $q_{0} = 0.746$) and Ollama-cloud ($G_{\mathrm{EGC}} = +15.1\% > G_{\mathrm{MRC}} = +11.7\% > G_{\mathrm{SC}} = +9.4\%$). This is the CSI-noise crossover of \Cref{prop:csi-crossover}: residual binary-checklist judge variance puts $\sigma_{w}$ above the critical $\sigma_{w}^{*}$ on the local-and-Ollama configurations and below it on the cleaner cloud pair. Soft-MRC (intrinsic log-probabilities, \Cref{sec:soft}) reduces $\sigma_{w}$ further and matches EGC at 14B ($0.825$ vs.\ $0.832$). Wider-pool variants extend the picture: at 8B, Diversity-SC-$N$ reaches $+33\%$ and Diversity-MRC-Discrete-$N$ reaches $+31\%$. Full crossover analysis is in \Cref{app:mrc-egc-crossover}.\looseness=-1
\vspace{-3mm}
\paragraph{Iterative refinement: HARQ and turbo.}
On 14B, HARQ-CC reaches $G = +11.7\%$ ($p < 10^{-3}$), HARQ-IR $+14.0\%$ ($p < 10^{-3}$), and turbo $+14.2\%$ ($p < 10^{-5}$, paired Wilcoxon $W = 56$); the HARQ-CC vs.\ HARQ-IR gap isolates the value of structured extrinsic feedback at fixed call budget. The all-tasks running-max rises monotonically iteration-by-iteration ($0.726 \to 0.853$, \Cref{fig:turbo-waterfall}). The survivor mean drops after iteration~$0$, the early-exit artifact predicted by \Cref{rem:bos-guard}: the best-of-sequence guard ensures $Q_{k} \geq Q_{k-1} \geq q_{0}$, but the survivor mean does not. Turbo on weaker generators traces the threshold of \Cref{prop:refinement-threshold}: above at 14B, noisy-flat at 8B, descending at 3B with only the guard preserving the delivered iterate (\Cref{fig:threshold-3panel}; per-scale breakdown in \Cref{app:threshold-details}).\looseness=-1
\vspace{-3mm}
\paragraph{Forward error correction.}
On 14B, FEC at $r \in \{3/4, 1/2, 1/3\}$ delivers $+7.6\%$ ($p < 0.05$, $\rho \approx 3\times$), $+4.3\%$, $+5.1\%$. $r = 3/4$ (one extra step-by-step reasoning section) gives the best quality-cost trade, consistent with chain-of-thought scaling~\citep{wei2022chain}; lower rates saturate because the syndrome-decoding LLM does not exploit additional, less-decorrelated parity sections (\Cref{fig:fec-rate}).
\vspace{-3mm}
\paragraph{The best single technique depends on the operating point, and on what one optimizes for.}
\Cref{tab:results-summary} in the Appendix reports raw $G$. Pairing each $G$ with its $\rho$ and computing $\eta$ exposes a recurring asymmetry: \emph{the $G$-leader is rarely the $\eta$-leader} (\Cref{tab:winner-vs-eff}). At the 3B, 8B, and Ollama-cloud configurations the $G$-leader is fountain at $\rho \in [6.3, 9.1]\times$, while the $\eta$-leader is the cheapest combiner Diversity-SC at $\rho \approx 2\times$, a $1.5$-to-$3\times$ improvement in coding gain per unit cost. At 14B, turbo leads on $G$ but Diversity-EGC matches it on $\eta$ at $\sim 75\%$ of the cost. Only on the A+O cloud, where $q_{0} = 0.819$ is already near ceiling, does a single technique (HARQ-IR) lead on both axes. This dependence motivates the cost-aware semantic-KNN ACM router of \Cref{sec:exp-acm}: a single Lagrangian knob $\lambda$ moves the deployed router from $G$-leader to $\eta$-leader as described below.\looseness=-1

\vspace{-2mm}
\subsection{A cost-aware semantic-nearest-neighbor adaptive-coding-and-modulation router.}
% NUMBERS FROM: ollama_nemotron_devstral_glm51_datasets (n=300, 5-fold stratified CV);
% scripts/acm_oracle_gap_cv.py.
\label{sec:exp-acm}
\vspace{-2mm}
%The deployed semknn router at $\lambda{=}1$ beats fixed-best by $+0.054$ ($p<10^{-9}$, paired Wilcoxon) at essentially the same cost; sweeping $\lambda$ to $20$ preserves $+0.042$ quality at $51\%$ lower cost. \emph{Right}: additive decomposition $q_{\text{oracle}}-q_{\text{realized}}^{\text{ACM}} = \text{info} + \text{gen.} + \text{policy} + \text{realiz.}$ against the deployed semknn router. The policy term collapses once the router moves from the hand-coded $(d, \text{category})$ bins to semantic-KNN; the binding constraint shifts to the feature-set information limit ($+0.059$) plus the finite-sample generalization gap ($+0.019$). 

\begin{figure}[h]
  \centering
  \includegraphics[width=0.9\linewidth]{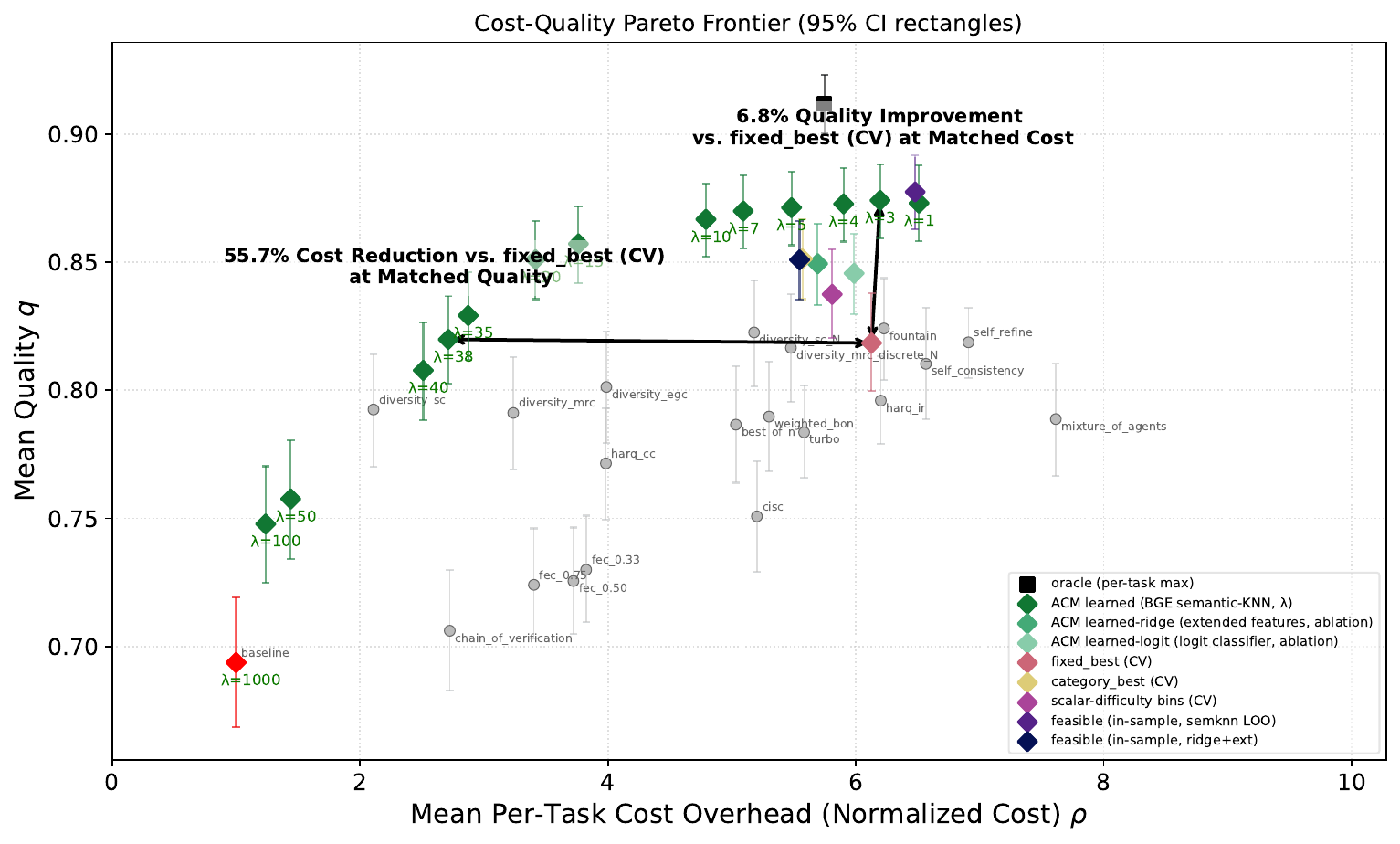}
  \caption{Quality--cost Pareto frontier on the Ollama-cloud trio, 300-task ($3\times$ repeat) hard-benchmark split ($5$-fold stratified cross-validation; $4000$ bootstrap samples). Faded gray dots are individual fixed techniques (in-sample mean); colored markers are routing \emph{policies} (per-task oracle, cross-validated fixed-best, the hand-coded \textsc{ACM} bins, per-category fixed-best, scalar-difficulty bins, the $(d, \text{category})$ logistic regression, a ridge regressor on extended features, and the cost-aware semantic-nearest-neighbor router at various $\lambda$). The router with ($\lambda{=}3$) lies above every fixed technique at matched cost; sliding $\lambda$ to $\approx 38$ moves the same router left along the frontier, preserving quality lead over the fixed best (CV) at substantially lower cost. More details are in \Cref{sec:exp-acm-decomp}.\looseness=-2} % \Cref{tab:semknn-pareto-full}
  \vspace{-2mm}
  \label{fig:acm-oracle-pareto}
\end{figure}

We instantiate the routing layer as a non-parametric semantic-nearest-neighbor dispatcher. Each task $x$ is embedded with BAAI/bge-large-en-v1.5; at deployment, the dispatcher looks up the per-technique mean quality $\bar{q}_{i}(x)$ and cost $\bar{c}_{i}(x)$ over the $k = 20$ nearest training-cache neighbors of $x$ and selects
\begin{equation}
  \pi^{\star}(x; \lambda) \;=\; \argmax_{i \in \mathcal{C}} \big(\bar{q}_{i}(x) - \lambda \cdot \bar{c}_{i}(x) / \bar{c}_{\mathrm{baseline}}(x)\big),
  \label{eq:semknn-router}
\end{equation}
where $\mathcal{C}$ is the candidate-technique pool (19 techniques + uncoded baseline in our pool) and $\lambda \geq 0$ is a single Lagrangian knob: $\lambda \to 0$ recovers a pure quality-maximizer, $\lambda \to \infty$ collapses to the cheapest candidate. The dispatch rule is fully determined by the cache snapshot and the embedding model; the same router instance is therefore deployable across cost regimes by adjusting $\lambda$, with no retraining. \Cref{tab:semknn-pareto} reports $5$-fold stratified cross-validation on the 300-task hard-benchmark split. We compare against the \emph{cross-validated fixed-best} policy: in each fold, fit on the four training folds the technique with the highest mean quality and apply that single technique to every test-fold task. This is the apples-to-apples baseline, since both policies are scored out-of-fold. Comparisons against simpler routers (per-category fixed-best, scalar-difficulty bins, hand-coded ACM tables) and the in-sample feasibility ceiling appear in \Cref{tab:semknn-pareto-full}, \Cref{sec:exp-acm-decomp}, \Cref{fig:acm-oracle-pareto}.

\vspace{-3mm}
\begin{table}[h]
\centering
\caption{Cost-aware semantic-nearest-neighbor router on the Ollama-cloud trio. Each task is generated $n_{\text{rep}}{=}3$ independent times, giving $n{=}300$ unique tasks $\times\,3$ repeats $=900$ trial-task observations; $5$-fold stratified cross-validation (folds applied at the task level so all three repeats of a task land in the same fold); $4000$ bootstrap samples drawn with seed-aware resampling that propagates within-task generation variance into every confidence interval. $\Delta q$ is paired-per-task (averaged across repeats) vs.\ the cross-validated fixed-best policy and tested by Wilcoxon signed-rank ($^{***}p{<}10^{-3}$); Full per-policy table in \Cref{tab:semknn-pareto-full}.}
\label{tab:semknn-pareto}
\scriptsize
\setlength{\tabcolsep}{2pt}
\begin{tabular}{@{}lccccccc@{}}
\toprule
\textbf{Policy} & \textbf{$q$ [95\% CI]} & \textbf{Cost / task} & $\bm{q/\$}$ & $\bm{\rho}$ & $\bm{G}$ \textbf{(pp)} & $\bm{\Delta q}$ \textbf{vs.\ FB-CV} & $\bm{\Delta q}$ \textbf{vs.\ FB-IS}\\
\midrule
Oracle (per-task max) & $0.912$ [$0.900, 0.923$] & $\$0.00668$ & $165.4$ & $5.75\times$ & $+31.4$ & --- & $+0.088$ \\
Feasible (semknn LOO, in-samp.) & $0.877$ [$0.863, 0.892$] & $\$0.00735$ & $145.5$ & $6.48\times$ & $+26.5$ & --- & $+0.053$ \\
\textbf{Semknn router, $\lambda{=}3$} & $\bm{0.874}$ [$0.859, 0.888$] & $\bm{\$0.00695}$ & $\bm{160.2}$ & $\bm{6.20\times}$ & $\bm{+26.0}$ & $\bm{+0.056^{***}}$ & $\bm{+0.050}$ \\
\textbf{Semknn router, $\lambda{=}38$} & $\bm{0.820}$ [$0.802, 0.837$] & $\bm{\$0.00290}$ & $\bm{388.7}$ & $\bm{2.71\times}$ & $\bm{+18.2}$ & $\bm{+0.001}$ & $\bm{-0.004}$ \\
\emph{Strongest in-sample fixed: \texttt{fountain}} & $0.824$ & $\$0.00691$ & $119.2$ & $6.23\times$ & $+18.8$ & $+0.006$ & --- (FB-IS ref.) \\
\textbf{Cross-validated fixed-best (FB-CV)} & $\bm{0.818}$ [$0.800, 0.838$] & $\bm{\$0.00698}$ & $\bm{143.6}$ & $\bm{6.13\times}$ & $\bm{+18.0}$ & --- (FB-CV ref.) & $\bm{-0.006}$ \\
Always-baseline (uncoded) & $0.694$ [$0.668, 0.719$] & $\$0.00120$ & $577.9$ & $1.00\times$ & $+0.0$ & $-0.125$ & $-0.130$ \\
\bottomrule
\end{tabular}
\end{table}

\noindent The router at $\lambda{=}3$ beats the cross-validated fixed-best policy by $\qualityimp\%$ in mean quality (paired Wilcoxon $p<10^{-13}$, Cohen's $d_{z}=0.47$, win rate $59.7\%$ at the per-task level) at matched normalized cost; \Cref{fig:acm-oracle-pareto} plots the quality--cost plane and shows that the entire $\lambda$-sweep sits above every fixed technique on the empirical Pareto frontier. Sweeping the knob to $\lambda{=}38$ instance still matches FB-CV in mean quality while running at \emph{$\costimp\%$ lower normalized cost per task} than fixed-best. The headline numbers are computed on $900$ trial-task observations ($n{=}300$ tasks $\times\,3$ independent generation repeats per task) with seed-aware bootstrap resampling that propagates within-task generation variance into every confidence interval. The full Pareto sweep and additional comparators (per-category fixed-best, scalar-difficulty bins, hand-coded \textsc{ACM}, the in-sample feasibility ceiling) are in \Cref{tab:semknn-pareto-full,fig:acm-oracle-pareto}. Classical maximal-ratio combining is information-theoretically optimal under high signal-to-noise ratio with perfect channel-state information~\citep{proakis2008digital}, but that optimality is conditional on a fixed channel regime: allowing the regime itself to vary across tasks and dispatching task-by-task strictly dominates any single fixed combiner on the empirical quality--cost frontier.\looseness=-1
\vspace{-2mm}
\subsection{Category analysis and standard-benchmark validation}
\label{sec:exp-standard}
\label{sec:exp-judge}
\label{sec:exp-predictions}
\vspace{-2mm}

%; the $\sqrt{3}$-tighter intervals push the $\lambda{=}3$ vs.\ FB-CV $p$-value four orders of magnitude past the previously reported single-repeat result. 

% NUMBERS FROM: cache_deepseek14_phi314 (2026-04-15)
On the 14B-curated tasks, category dominance varies by technique (per-category bars in \Cref{fig:gallery-14b-cat}; full per-configuration breakdown in \Cref{app:standard-benchmarks}): hybrid retransmission with incremental redundancy leads on question answering ($q = 0.861$) and reasoning ($0.869$), maximal-ratio combining on code ($0.866$), and equal-gain combining / turbo on creative ($0.822$ / $0.806$), as expected from the channel model: iterative techniques excel where concrete errors can be identified, diversity combining where independent attempts explore different solution paths. To rule out curation artifacts we re-evaluate on hard-stratified splits of $n = 100$ items per dataset of GSM8K, MMLU (\texttt{abstract\_algebra}), and HumanEval, constructed via published structural proxies (step count, subject difficulty, doctest count); technique rankings match the curated-task results on all three (per-dataset bars and Pareto scatters in \Cref{fig:gallery-ollama300-bars,fig:gallery-ollama300-pareto}, full breakdown in \Cref{app:standard-benchmarks}). Three validation experiments rule out methodology artifacts: an inter-judge ablation preserves the technique ordering despite absolute-score variation; a judge-in-loop confound check shows that judge-free decoders still win about $39\%$ of oracle assignments on the 8B split, ruling out the strong borrowing hypothesis; and a synthesis-integrity audit (\Cref{app:synthesis-integrity}) finds zero contamination across five controlled tests, so the gains attributed to combining cannot be the synthesizer solving the task on its own.

% ===========================================================================
% 5b. SOFT-OUTPUT EXTENSIONS
% ===========================================================================
% ===========================================================================
% 6. DISCUSSION
% ===========================================================================
\vspace{-3mm}
\section{Discussion}
\label{sec:discussion}
\vspace{-2mm}
\paragraph{Technique selection guidelines.}
The framework gives principled criteria: \emph{diversity combining} for independent-error tasks (question answering); \emph{HARQ with incremental redundancy} when errors are identifiable by a critic; \emph{turbo} when iterative refinement compounds (reasoning); \emph{fountain} when difficulty is unknown a priori; \emph{forward error correction} for tasks with checkable consistency (code); and the cost-aware semantic-nearest-neighbor router for heterogeneous task streams when both quality and cost matter. \Cref{tab:semknn-pareto} sharpens the last point: a single \emph{adaptive} router with one Lagrangian knob $\lambda$ strictly dominates any single fixed combiner (including textbook-optimal maximal-ratio combining) on the empirical quality--cost frontier; classical optimality results assume a fixed channel regime, while real task streams traverse multiple regimes and reward per-task allocation.\looseness=-1
\vspace{-3mm}
\paragraph{Generator bottleneck and the decoding threshold.}
Iterative techniques are limited by generator capacity, not critic quality, mirroring the iterative-decoding threshold~\citep{ten_brink2001convergence}: \Cref{prop:refinement-threshold} writes the threshold as a contraction--expansion transition of the per-iteration quality-update map $f$ at its high-quality fixed point $q^{\infty}$. Our three local model scales trace this transition (\Cref{tab:results-summary}, \Cref{fig:threshold-3panel}, \Cref{app:threshold-details}): refinement is coherent at 14B, noisy-flat at 8B, and net-destructive at 3B with only the guard preserving monotonicity. Operational implication: deploy iterative techniques only when the generator is known to be above threshold; otherwise prefer passive diversity~\citep{proakis2008digital} or routing.\looseness=-1
% , with $|f'(q^{\infty})| < 1$ giving Banach contraction (gains compound) and $|f'(q^{\infty})| > 1$ flipping the dynamics so trajectories collapse toward a lower fixed point. The best-of-sequence guard $Q_{k} = \E[\max_{j\leq k} q(Y_{j})]$ deterministically dominates the unguarded iterate, so the \emph{delivered} quality cannot regress even when the underlying map does.
\vspace{-3mm}
\paragraph{Limitations and open directions.}
Limitations are detailed in \Cref{app:limitations}: residual judge-score noise from the binary-checklist scorer; scoring overhead on easy tasks; non-stationarity and a dual-role judge confound (judge-free decoders still win about $39\%$ of oracle assignments, leaving a judge-strength ablation open); and curated-task selection bias, addressed by the standard-benchmark validation. Open directions include bounding $I(X; Y)$ for LLM channels, calibrating $\lambda$ from a user-supplied cost constraint, and enriching router features beyond sentence embeddings.

% single-run statistics for most of the 69-task experiments,

% ===========================================================================
% 7. CONCLUSION
% ===========================================================================
\vspace{-2mm}
\section{Conclusion}
\label{sec:conclusion}
\vspace{-2mm}

Sampling an autoregressive language model at temperature $T$ realizes a discrete stochastic channel, and that identity makes the communication-theoretic reliability toolkit transfer with minimal adaptation: a closed-form channel-state-information crossover (\Cref{prop:csi-crossover}) above which equal-gain combining beats noisy maximal-ratio combining, and a fixed-point criterion (\Cref{prop:refinement-threshold}) identifying LLM refinement with an EXIT recursion, the contraction-versus-expansion dichotomy of which is consistent with the 3B/8B/14B transition we observe. The cost-aware semantic-nearest-neighbor router beats every fixed technique on the Ollama-cloud quality-cost frontier ($\qualityimp\%$ quality lead at matched cost; $\costimp\%$ lower cost at matched quality): classical maximal-ratio optimality is conditional on a fixed regime, whereas real workloads traverse multiple regimes and reward per-task allocation through a single Lagrangian knob. A substantially larger design space (interleaving, structured forward error correction, joint source-channel coding; \Cref{app:design-space}) and operational capacity bounds, $\lambda$-calibration, and richer router features remain.\looseness=-1

% ===========================================================================
% REFERENCES
% ===========================================================================
% \begin{ack}
% % TODO: Add acknowledgments and funding disclosure here.
% % This section is automatically hidden in the anonymous submission.
% \end{ack}

\bibliographystyle{plainnat}
\bibliography{references}

% ===========================================================================
% APPENDIX
% ===========================================================================
\clearpage
\appendix
\label{sec:appendix}

\section*{Appendix}
\addcontentsline{toc}{section}{Appendix}

\setcounter{page}{1}
\renewcommand{\thepage}{A\arabic{page}}

\section{Primer on communication systems}
\label{app:comms-primer}

This appendix provides an accessible introduction to the communication-theoretic concepts used throughout the paper.
Readers familiar with digital communications may skip this section; it is intended for the ML/AI audience.

\subsection{The communication problem}

The fundamental problem in communication is: \emph{how do you send a message reliably over a noisy channel?}
A transmitter sends a signal, the channel corrupts it (adds noise, causes fading, drops bits), and the receiver must recover the original message.
Shannon's channel coding theorem~\citep{shannon1948} establishes that reliable communication is \emph{possible} up to a fundamental limit called the channel capacity~$C$, provided the transmitter adds sufficient structured redundancy.

A prompt (message) is sent through a stochastic model (channel) whose transition probability is $p_\theta(y \mid x)$, which may produce hallucinations, omissions, or reasoning errors (noise), and the user (receiver) must extract a correct answer (decoded message).
As formalized in~\Cref{sec:model}, the LLM's conditional distribution satisfies the definition of a channel transition probability.

\subsection{Signal-to-Noise Ratio (SNR)}

SNR measures how much ``useful signal'' there is relative to ``noise'' in a received signal.
Higher SNR means cleaner reception and lower error rates.
In wireless systems, SNR depends on transmit power, distance, and interference.

\textbf{Agent analog:} The LLM channel's intrinsic noise is its conditional entropy $\mathcal{H} = H(Y|X)$, the per-token uncertainty in the model's output distribution (\Cref{eq:intrinsic}).
Lower $\mathcal{H}$ means higher channel quality (analogous to higher SNR).
In practice, we estimate channel quality via a quality score $q \in [0, 1]$ produced by an LLM judge, a noisy external channel estimator.
Just as wireless engineers measure SNR to decide how to process a received signal, our framework measures output quality to decide how to process agent outputs.

\subsection{Diversity and combining}

In wireless systems, a signal may travel via multiple independent paths (e.g., multiple antennas in MIMO systems).
Each path experiences different noise, so if one path is in a deep fade, another may still be strong.
The receiver \emph{combines} these paths to improve reliability.
Three standard strategies exist:

\begin{itemize}[nosep]
  \item \textbf{Selection combining (SC):} Pick the single best path. Simple but discards useful information from other paths.
  \item \textbf{Equal-gain combining (EGC):} Average all paths equally. Better than SC but suboptimal if path qualities differ.
  \item \textbf{Maximal-ratio combining (MRC):} Weight each path proportionally to its SNR, then combine. This is provably optimal under independent noise.
\end{itemize}

\noindent With $d$ independent paths, the probability that \emph{all} are bad drops as $\mathrm{SNR}^{-d}$, an exponential improvement called \emph{diversity gain}.

\textbf{Agent analog:} Generate $d$ independent agent outputs (using different models, prompts, or temperatures) and combine them.
SC $\to$ pick the best output; EGC $\to$ synthesize all equally; MRC $\to$ synthesize with quality-weighted emphasis.
Independence can be achieved through model diversity (spatial), prompt rephrasing (frequency), or temperature variation (time). Our experiments focus on spatial diversity; frequency and time diversity are supported by the framework and left to future evaluation. The frequency axis as defined here is decode-time prompt rephrasing applied per request; an orthogonal line of work realizes prompt diversity at \emph{compile time}, either by treating prompts as compilable artifacts with searchable instructions, demonstrations, and pipelines~\citep{khattab2024dspy}, by retrieving or learning per-task in-context examples~\citep{liu2022makesgood,rubin2022learning}, or by LLM-in-the-loop instruction optimization~\citep{zhou2023ape,yang2024opro}. These methods amortize their search before inference and produce a single optimized prompt at decode time; composing them with the decode-time channel combiners studied here is a natural --- and untested --- direction we do not pursue in this paper.

\subsection{Automatic Repeat Request (ARQ) and Hybrid ARQ}

ARQ is the simplest reliability mechanism: if the receiver detects errors, it requests retransmission.
\emph{Hybrid ARQ} (HARQ) combines this with forward error correction for better efficiency:

\begin{itemize}[nosep]
  \item \textbf{HARQ-CC (Chase Combining):} Retransmit the exact same data. The receiver soft-combines all copies, improving SNR through repetition.
  \item \textbf{HARQ-IR (Incremental Redundancy):} Each retransmission sends \emph{new} parity information. The decoder accumulates information across rounds, converging more efficiently than CC.
\end{itemize}

\noindent HARQ-IR is strictly more efficient: each round adds new information rather than repeating what was already sent.

\textbf{Agent analog:} HARQ-CC $\to$ retry the same prompt and combine all attempts.
HARQ-IR $\to$ use a critic to identify specific errors, then retry with that feedback (new information) appended to the prompt.
The quality threshold $\tau$ serves as the target error rate; iteration stops when quality exceeds~$\tau$.

\subsection{Turbo Codes and Iterative Decoding}

Turbo codes~\citep{berrou1993turbo} revolutionized communications by approaching Shannon's limit.
They use two simple encoders/decoders that iteratively exchange ``extrinsic information,'' soft reliability estimates that each decoder produces for the other.
Above a decoder-specific SNR threshold, this iterative exchange can produce a \emph{waterfall} effect in classical channels: error rates drop sharply once a critical information threshold is crossed. We do not claim this waterfall behavior carries over quantitatively to LLM channels; our turbo implementation aims for the qualitative benefit of extrinsic-only feedback, not a bit-error-rate curve.

\textbf{Agent analog:} Two LLM agents (generator and critic) iteratively exchange outputs.
The generator produces an answer; the critic provides structured feedback (extrinsic information); the generator refines.
Each round, both agents condition on the other's latest output, mirroring the extrinsic information exchange in turbo decoding.

\subsection{Fountain (Rateless) Codes}

Traditional codes have a fixed rate $r = k/n$, chosen before transmission.
Fountain codes~\citep{luby2002lt} instead produce a potentially \emph{unlimited} stream of encoded symbols.
The receiver collects symbols until it has enough to decode, automatically adapting to unknown or time-varying channel conditions without rate negotiation.

\textbf{Agent analog:} Generate agent outputs one at a time, checking quality after each.
Stop as soon as accumulated outputs are sufficient to synthesize a high-quality answer.
No need to pre-specify how many samples are needed; the system adapts to task difficulty at runtime.

\subsection{Forward Error Correction (FEC)}

FEC adds structured redundancy to a message \emph{before} transmission so the receiver can detect and correct errors without retransmission.
The \emph{code rate} $r = k/n$ controls the redundancy: lower rates add more redundancy (more protection, higher cost).
The receiver performs \emph{syndrome decoding}, checking parity constraints to locate and fix errors.

\textbf{Agent analog:} Instead of asking for just an answer, prompt the model to also produce structured ``parity'' sections: step-by-step reasoning (parity~1), self-verification (parity~2), confidence assessment (parity~3), and an alternative approach (parity~4).
A decoder agent then cross-checks these sections for internal consistency (analogous to syndrome decoding) and corrects any detected errors in the main answer.
The code rate controls how many parity sections are requested.

\subsection{Adaptive Coding and Modulation (ACM)}

In wireless systems (LTE, 5G), the transmitter continuously estimates channel quality and selects the best modulation and coding scheme (MCS).
Good channels get high-rate, low-redundancy schemes (high throughput); bad channels get low-rate, high-redundancy schemes (high reliability).
This is formalized as a table mapping SNR ranges to MCS levels.

\textbf{Agent analog:} Estimate task difficulty (the ``channel quality''), then route to the most cost-efficient technique.
Easy tasks use cheap, low-redundancy methods (single fast model); hard tasks use expensive, high-redundancy methods (turbo decoding with a frontier model).
This is the key to optimizing the cost--quality tradeoff: not every task needs the same level of effort.

\subsection{Summary of mappings}

\Cref{tab:comms-mapping} consolidates the communication-to-agent concept correspondences used throughout the paper, from the channel transition probability identity through the soft-decision recipe to the rate--reliability trade-off.

\begin{table}[h]
\centering
\small
\caption{Communication $\leftrightarrow$ Agent concept mapping.}
\label{tab:comms-mapping}
\begin{tabular}{@{}ll@{}}
\toprule
\textbf{Communication Concept} & \textbf{Agent Analog} \\
\midrule
Transition prob.\ $W(y|x)$ & $p_\theta(y \mid x)$ (exact identity) \\
Channel & LLM model \\
Noise variance $\sigma^2$ & Temperature $T$ \\
Noise (AWGN, fading) & Hallucination, omission, errors \\
Conditional entropy $H(Y|X)$ & Per-token uncertainty of LLM \\
Received signal & Agent output \\
Soft-decision output (LLR) & Token log-probabilities \\
SNR / channel quality & $-\mathcal{H}(\calA_\theta, x)$ (neg.\ cond.\ entropy) \\
Channel estimator (CSI) & Sigma-delta LLM-judge scorer \\
Diversity (MIMO) & Multi-model/prompt/temp sampling \\
Combining (SC/EGC/MRC) & Output selection/synthesis \\
ARQ retransmission & Prompt retry \\
HARQ-IR parity bits & Critic feedback \\
Turbo extrinsic info & Generator--critic exchange \\
Decoding threshold & Min.\ model capacity for convergence \\
Fountain symbols & Incremental samples \\
FEC parity bits & Reasoning/verification sections \\
Code rate $r = k/n$ & Redundancy level \\
ACM / MCS table & Task-difficulty routing \\
$I(X;Y)$, capacity $C$ & Mutual info., max quality at cost \\
\bottomrule
\end{tabular}
\end{table}

% \begin{table}[h]
% \centering
% \caption{Mapping of existing LLM reliability approaches to communication-theoretic techniques.}
% \label{tab:related-mapping}
% \small
% \begin{tabular}{@{}lll@{}}
% \toprule
% \textbf{Prior Work} & \textbf{Comm.\ Analog} & \textbf{Limitation} \\
% \midrule
% Self-consistency & SC diversity & Fixed voting, no MRC \\
% MoA / Self-MoA & EGC / time div. & No quality weighting \\
% Self-Refine & HARQ-IR & No extrinsic constraint \\
% Multi-agent debate & Turbo decoding & No extrinsic exchange \\
% Best-of-$N$ & SC with verifier & No combining \\
% FrugalGPT / RouteLLM & Fountain / ACM (model only) & No redundancy control \\
% \midrule
% \textbf{Ours} & \textbf{All six unified} & \textbf{---} \\
% \bottomrule
% \end{tabular}
% \end{table}

\section{Benchmark task details}
\label{app:tasks}

Full task specifications for all 36 benchmark tasks are available in the supplementary material.
Each task includes: prompt text, reference answer (where applicable), difficulty rating, and evaluation criteria.

\section{Linear-Combiner justification for the MRC weighting}
\label{app:mrc-proof}

This appendix reproduces the linear-combiner argument that motivates the weighting in \Cref{prop:mrc}; it is not a proof that this weighting is end-to-end optimal for a nonlinear LLM synthesizer.

Consider $d$ agent outputs $Y_1, \ldots, Y_d$ with quality scores $q_1, \ldots, q_d$, treated as independent signal components with additive noise of variance $\sigma_i^2$ and signal amplitude proportional to $q_i$. For a linear combiner with weights $w_i$ summing to a fixed constant, the combined SNR is maximized when $w_i \propto q_i / \sigma_i^2$. Under the further simplifying assumption that noise variance is approximately equal across branches (so $\sigma_i^2 \approx \sigma^2$), this reduces to $w_i^* \propto q_i$, giving the normalized form
\begin{equation}
  w_i^{\text{MRC}} = \frac{q_i}{\sum_{j=1}^d q_j}.
\end{equation}
This directly parallels the classical MRC result in AWGN channels, where optimal weights are proportional to per-branch SNR~\citep{proakis2008digital}.

\textbf{Caveats specific to the agent setting.} (i)~The LLM synthesizer is not a linear combiner; increasing $w_i$ biases the synthesizer's attention toward input $i$, but does not literally scale $Y_i$ additively. (ii)~$q_i$ is a noisy LLM-judge estimate, not an unbiased SNR. (iii)~Branch outputs $Y_i$ are correlated through shared training data, which reduces the effective diversity order $d_{\mathrm{eff}}$. The empirical ordering in \Cref{sec:exp-diversity} (where EGC outperforms MRC) is consistent with these caveats being active; see \Cref{rem:intrinsic-csi} and~\citep{gao2003channel, simon2005optimum}.

\subsection{CSI-noise crossover between MRC and EGC}
\label{app:mrc-egc-crossover}

The empirical reversal $\mathrm{EGC} \succ \mathrm{MRC}$ observed in \Cref{sec:exp-diversity} under a coarse numeric-rating judge admits a quantitative explanation in a linear-combiner idealization of the synthesizer. Take a linear combiner with weights $w_i$ acting on noisy observations $r_i = a_i s + n_i$, where $a_i > 0$ are per-branch amplitudes proportional to true quality, $n_i \sim \mathcal{N}(0, \sigma^2)$ are independent, and $s = \pm 1$ is a binary signal. Let $S_k := \sum_{i=1}^d a_i^k$. The output SNRs of the two ideal combiners are
\begin{equation*}
  \gamma_{\mathrm{MRC}}\big|_{w_i = a_i} \;=\; \frac{S_2}{\sigma^2}, \qquad
  \gamma_{\mathrm{EGC}}\big|_{w_i = 1} \;=\; \frac{S_1^2}{d\, \sigma^2},
\end{equation*}
and the Cauchy-Schwarz inequality gives the classical $\gamma_{\mathrm{MRC}} \geq \gamma_{\mathrm{EGC}}$ ordering~\citep{proakis2008digital}, with equality iff $a_1 = \cdots = a_d$.

Let the judge supply noisy estimates $\hat{a}_i = a_i + \epsilon_i$ with $\epsilon_i \sim \mathcal{N}(0, \sigma_w^2)$ independent of $n_i$, and let the combiner use $w_i = \hat{a}_i$. The first-order expectation of numerator and denominator separately, the standard imperfect-CSI MRC analysis of \citet{tomiuk1999general, gao2003channel, simon2005optimum}, gives
\begin{equation}
  \E\!\left[\gamma_{\hat{\mathrm{MRC}}}\right] \;\approx\; \frac{S_2 \,(S_2 + \sigma_w^2)}{(S_2 + d\, \sigma_w^2)\, \sigma^2},
  \label{eq:approx-mrc-snr}
\end{equation}
which interpolates between $\gamma_{\mathrm{MRC}}$ at $\sigma_w \to 0$ and $\gamma_{\mathrm{MRC}}/d$ as $\sigma_w \to \infty$.

\paragraph{What is new in \Cref{prop:csi-crossover}.}
The qualitative collapse of the MRC advantage as $\sigma_w$ grows, including the approximation~\eqref{eq:approx-mrc-snr} itself, is the imperfect-CSI MRC line beginning with~\citet{tomiuk1999general} and continued by~\citet{gao2003channel, simon2005optimum}; what is \emph{not} in those references is the closed-form value of $\sigma_w^2$ at which~\eqref{eq:approx-mrc-snr} crosses $\gamma_{\mathrm{EGC}}$. We record it below in elementary symmetric polynomials $(S_1, S_2)$ of the per-branch amplitudes, so that the formula is parameter-free in $\sigma$ and serves as a one-line diagnostic for the LLM-judge channel-state-information regime of \Cref{rem:intrinsic-csi}: every empirical $\mathrm{EGC} \succ \widehat{\mathrm{MRC}}$ reversal in our experiments is read as crossing $\sigma_w^{*2}$, and every reversion to the textbook ordering as falling back below it (\Cref{sec:exp-diversity}, \Cref{tab:results-summary}).

\begin{proposition}[CSI-noise crossover]
\label{prop:csi-crossover}
Let $d \geq 2$ and $a_1, \ldots, a_d > 0$ with at least two distinct amplitudes. Under the first-order expectation~\eqref{eq:approx-mrc-snr}, the map $\sigma_w^2 \mapsto \E[\gamma_{\hat{\mathrm{MRC}}}]$ is strictly decreasing on $[0, \infty)$ and crosses $\gamma_{\mathrm{EGC}}$ at exactly one positive value, the critical CSI-noise variance
\begin{equation}
  \sigma_w^{*2} \;=\; \frac{S_2 \,(d S_2 - S_1^2)}{d \,(S_1^2 - S_2)};
  \label{eq:csi-crossover}
\end{equation}
hence $\E[\gamma_{\hat{\mathrm{MRC}}}] > \gamma_{\mathrm{EGC}}$ iff $\sigma_w^2 < \sigma_w^{*2}$. In the degenerate equal-amplitude case $a_1 = \cdots = a_d$ (excluded by the hypothesis) the two combiners coincide at $\sigma_w = 0$ and $\sigma_w^{*2}$ collapses to $0$ as a limit of~\eqref{eq:csi-crossover}.
\end{proposition}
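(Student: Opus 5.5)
The argument is elementary algebra on the rational function in \eqref{eq:approx-mrc-snr}. Write $v := \sigma_w^2 \ge 0$ and
\[
g(v) := \frac{S_2\,(S_2 + v)}{(S_2 + d\,v)\,\sigma^2}.
\]
The common factor $1/\sigma^2$ appears in both $g$ and $\gamma_{\mathrm{EGC}} = S_1^2/(d\sigma^2)$, so it can be dropped throughout. This is why the resulting threshold does not depend on $\sigma$.

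The first step is the two strict inequalities
\[
S_2 < S_1^2 < d\,S_2 .
\]
The right inequality is Cauchy--Schwarz applied to $(a_i)$ and $(1,\dots,1)$. It is strict because at least two amplitudes are distinct. The left inequality holds because $S_1^2 = S_2 + 2\sum_{i<j} a_i a_j$ and all $a_i > 0$ with $d \ge 2$. These two facts carry every sign in the rest of the argument.

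Monotonicity comes next. Since $g$ is a linear-fractional function of $v$, its derivative is
\[
g'(v) \;\propto\; S_2\,\frac{(S_2 + d v) - d\,(S_2 + v)}{(S_2 + d v)^2} \;=\; \frac{S_2^2\,(1-d)}{(S_2 + d v)^2},
\]
which is strictly negative for $d \ge 2$. So $g$ is strictly decreasing on $[0,\infty)$. Its endpoint values are
\[
g(0) = S_2/\sigma^2 = \gamma_{\mathrm{MRC}}, \qquad \lim_{v\to\infty} g(v) = S_2/(d\sigma^2).
\]
Cauchy--Schwarz gives $g(0) > \gamma_{\mathrm{EGC}}$. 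The inequality $S_2 < S_1^2$ gives $S_2/d < S_1^2/d$, so the limit lies strictly below $\gamma_{\mathrm{EGC}}$. By continuity and strict monotonicity there is exactly one crossing $v^* > 0$. Moreover $g(v) > \gamma_{\mathrm{EGC}}$ holds precisely when $v < v^*$, which is the "iff" clause of the statement.

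To get the closed form, solve $d\,S_2(S_2 + v) = S_1^2(S_2 + d v)$, which is linear in $v$. It rearranges to
\[
d v\,(S_2 - S_1^2) = S_2\,(S_1^2 - d S_2),
\]
and so
\[
v^* = \frac{S_2\,(d S_2 - S_1^2)}{d\,(S_1^2 - S_2)},
\]
which is \eqref{eq:csi-crossover}. Both factors are positive by the first step, consistent with $v^* > 0$.

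For the degenerate remark, note that when all amplitudes are equal, $d S_2 = S_1^2$. The numerator then vanishes while the denominator $d\,(S_1^2 - S_2) = d\,S_2(d-1)$ stays positive, so the formula gives $0$. By continuity of the formula in $(a_i)$, it tends to $0$ as the amplitudes approach equality, and $g(0) = \gamma_{\mathrm{EGC}}$ in that case.

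The only point needing care is the strictness of the bounds $S_2 < S_1^2 < dS_2$, since it is what rules out a tangential crossing and a zero or negative $v^*$. The hypotheses ($a_i > 0$, $d \ge 2$, not all amplitudes equal) are exactly what make both inequalities strict. Otherwise the proof is routine.
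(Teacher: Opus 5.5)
Your proof is correct and follows the paper's argument essentially step for step: you differentiate the linear-fractional map to get strict decrease, use $S_2 < S_1^2 < dS_2$ to place $\gamma_{\mathrm{EGC}}$ strictly between the endpoint values $S_2/\sigma^2$ and $S_2/(d\sigma^2)$, invoke the intermediate value theorem for a unique crossing, and solve the linear equation $dS_2(S_2+v) = S_1^2(S_2+dv)$ for the closed form. Your treatment of the degenerate case is more accurate than the paper's, which claims that both $dS_2 - S_1^2$ and $S_1^2 - S_2$ vanish in the equal-amplitude limit; as you note, only the first does, while $S_1^2 - S_2 = (d-1)S_2 > 0$, and that is why the formula tends to $0$ rather than becoming indeterminate.
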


\begin{proof}
Write $u := \sigma_w^2$. Differentiating~\eqref{eq:approx-mrc-snr},
\begin{equation*}
\frac{d}{du}\,\E[\gamma_{\hat{\mathrm{MRC}}}]
\;=\; \frac{S_2}{\sigma^2} \cdot \frac{(S_2 + d u) - d (S_2 + u)}{(S_2 + d u)^2}
\;=\; -\,\frac{(d-1)\, S_2^{\,2}}{(S_2 + d u)^2 \,\sigma^2}
\;<\; 0
\end{equation*}
for $d \geq 2$, so $\E[\gamma_{\hat{\mathrm{MRC}}}]$ is strictly decreasing in $u$ with endpoints $\gamma_{\mathrm{MRC}} = S_2/\sigma^2$ at $u = 0$ and $S_2/(d\sigma^2)$ as $u \to \infty$. With $a_i > 0$ and at least two distinct amplitudes, $S_1^2 = S_2 + 2 \sum_{i<j} a_i a_j > S_2$ and $d S_2 > S_1^2$ (Cauchy-Schwarz, strict here), giving $\gamma_{\mathrm{MRC}} > \gamma_{\mathrm{EGC}} = S_1^2/(d\sigma^2) > S_2/(d\sigma^2)$, so the intermediate value theorem yields a unique crossing on $(0, \infty)$. Setting $\E[\gamma_{\hat{\mathrm{MRC}}}] = \gamma_{\mathrm{EGC}}$ and clearing denominators gives $d S_2 (S_2 + \sigma_w^2) = S_1^2 (S_2 + d \sigma_w^2)$, which rearranges to~\eqref{eq:csi-crossover}; the numerator $d S_2 - S_1^2$ and denominator $S_1^2 - S_2 = 2\sum_{i<j} a_i a_j$ are both strictly positive under the hypothesis and both vanish in the equal-amplitude limit.
\end{proof}

\paragraph{Testable signature.} The crossover variance~\eqref{eq:csi-crossover} is written in the per-branch amplitudes $a_{i}$, which we do not measure directly: the judge supplies $\hat{a}_{i}$ rather than $a_{i}$, and the noise floor $\sigma$ is itself a model-fitted quantity. The proposition's empirical content is therefore not the numerical value of $\sigma_{w}^{*2}$ for any one configuration, but the \emph{signature} it predicts: a $\widehat{\mathrm{MRC}}$-versus-$\mathrm{EGC}$ ordering reversal whenever the judge's effective resolution coarsens, and a return to the Cauchy-Schwarz ordering once the resolution is sharpened, with the crossing happening at a single value of $\sigma_{w}^{2}$ rather than gradually. Both directions of this signature, and the single-crossing structure, are observed in the experiments below (and summarized in \Cref{tab:results-summary}); the proposition is falsifiable by, e.g., a configuration where coarsening the judge \emph{improves} $\widehat{\mathrm{MRC}}$ relative to $\mathrm{EGC}$, or where the two combiners cross more than once as the judge is varied. We have not seen such a configuration in this paper; finding one would refute the linear-Gaussian idealization underlying~\eqref{eq:approx-mrc-snr}.

\paragraph{Attribution.} The first-order expectation \eqref{eq:approx-mrc-snr} for the MRC SNR under independent Gaussian channel-state-information errors is the textbook imperfect-CSI MRC analysis. The Cauchy--Schwarz comparison $\gamma_{\mathrm{MRC}} \geq \gamma_{\mathrm{EGC}}$ at perfect CSI is in any classical diversity-combining reference~\citep{proakis2008digital}, and the qualitative collapse of the MRC advantage as the CSI-noise variance grows is the imperfect-CSI MRC line beginning with \citet{tomiuk1999general} and continued by \citet{gao2003channel} and \citet[Ch.~9]{simon2005optimum}. Our specific contribution is the closed-form crossover variance \eqref{eq:csi-crossover} as a one-line diagnostic in $(S_1, S_2)$ form, and its application to the LLM-judge CSI-quantization regime: the binary-checklist judge of \Cref{rem:intrinsic-csi} pushes $\sigma_w$ across the threshold and the empirical $\mathrm{EGC} \succ \widehat{\mathrm{MRC}}$ reversal of \Cref{sec:exp-diversity} reverts to the Cauchy--Schwarz ordering.

\textbf{Implication for the empirical reversal.} The pre-sigma-delta judge produced $\sim 33$ unique values across $516$ measurements (\Cref{rem:intrinsic-csi}), so the per-branch CSI quantization step was on the order of the spread of true amplitudes themselves --- $\sigma_w$ comparable to or exceeding $\sigma_w^*$. The 15-criterion binary checklist enlarges the effective level count by roughly a factor $2^{15}/33 \approx 10^3$, sharply reducing $\sigma_w$ and pushing operation back below the crossover, where Cauchy--Schwarz restores the $\mathrm{MRC} \succeq \mathrm{EGC}$ ordering. Intrinsic-logprob CSI (\Cref{sec:soft}) reduces $\sigma_w$ further still, since the weight is generated by the same process that produced $y$ and is not subject to judge quantization.

\textbf{Caveats.} (i) The first-order expectation neglects the covariance between the (random) numerator and denominator; the exact ratio expectation is intractable in closed form but the qualitative crossover is preserved~\citep{gao2003channel, simon2005optimum}. (ii) The synthesizer is not a literal linear combiner; the result motivates the heuristic $w_i = q_i / \sum_j q_j$ but does not establish end-to-end optimality for an LLM combiner. (iii) Branch correlation introduces an additional EGC-favoring term not captured here.

\subsection{Coding Gain and Channel Capacity}

For a technique $\pi$, the coding gain $G(\pi) = \E[q_\pi(X)] - \E[q_{\text{baseline}}(X)]$ measures improvement over uncoded transmission; coding gain efficiency $\eta = G(\pi) / (\text{Cost}(\pi)/\text{Cost}(\text{baseline}))$ normalizes by cost overhead.
Since $p_\theta(y \mid x)$ is a well-defined conditional distribution, $I(X; Y) = H(Y) - H(Y|X)$ exists, and Shannon's coding theorem applies~\citep{shannon1948, cover2006elements}. We define the \emph{operational quality capacity} $Q^{*}(C) = \sup_{\pi \in \Pi_C} \E[q_\pi(X)]$ over technique configurations $\Pi_C$ with cost $\leq C$. Computing the true $C$ remains open (\Cref{app:formal-foundations}), but $I(X;Y)$ formalizes diminishing diversity returns under correlation (subadditivity of mutual information).

\section{Experimental methodology details}
\label{app:exp-methodology}

\paragraph{Hardware and infrastructure.}
All local experiments (3B/8B/14B configurations) were run on a single workstation: Intel Core i9-13900K (24 cores), NVIDIA GeForce RTX~3090 (24\,GB VRAM), 32\,GB DDR5 RAM, Ubuntu 24.04~LTS. Open-weight models are served via Ollama with default quantization (typically 4-bit GGUF for 14B models to fit within 24\,GB VRAM). Cloud experiments (Claude Haiku~4.5, GPT-5-mini) use the respective provider APIs. No multi-GPU or cluster resources are required; all results are reproducible on consumer hardware with a single GPU.

\paragraph{Cost normalization.}
Raw cost conflates task difficulty with technique overhead: a hard task requiring five HARQ rounds costs more than an easy task needing one, but appears in the ``high cost, low quality'' quadrant, misleadingly suggesting that spending more degrades quality. We therefore normalize cost per task against its baseline, defining cost overhead:
\begin{equation}
  \rho(\pi, X) = \frac{\text{Cost}(\pi, X)}{\text{Cost}(\text{baseline}, X)}
  \label{eq:cost-overhead}
\end{equation}
This is analogous to the $E_b/N_0$ normalization in communications, where performance is plotted against energy \emph{per information bit} rather than total transmitted power, enabling fair comparison across channels with different noise levels.

\paragraph{Computational efficiency and scaling with dataset size.}
The dominant cost of every \agentcodec{} technique is LLM inference, and the cost overhead $\rho$ of \eqref{eq:cost-overhead} is the quantity reported throughout the experiments. Each technique scales linearly in its tunable budget knob and is independent across tasks: diversity combining at branch count $d$ uses $d$ generator calls, $d$ judge calls, and one synthesizer call; HARQ-CC/IR and turbo use up to $K{=}5$ rounds with one generator and one critic call per round (with adaptive early-exit when $q_k \ge \tau$); fountain stops adaptively in $[N_{\min}, N_{\max}] = [2, 10]$ samples; FEC at code rate $r$ uses $1/r$ generator calls plus one syndrome-decoding call. The deployed semantic-nearest-neighbor router adds one BAAI/bge-large-en-v1.5 embedding per task and a $k{=}20$ nearest-neighbor lookup over the training cache, which is $O(N_{\text{cache}})$ with the naive scan used here and $O(\log N_{\text{cache}})$ with any standard approximate-nearest-neighbor index; for $N_{\text{cache}} \le 300$ as in \Cref{sec:experiments}, both the embedding and the lookup are negligible against the seconds-scale latency of a single generator call. Training the parametric router ablations (\textsc{AcmRidge}, \textsc{AcmLearned}) completes in seconds on a single CPU core via closed-form $L_2$-ridge or a small multinomial logit per fold (\Cref{sec:exp-acm-decomp}). For a corpus of $N$ tasks, total inference compute therefore scales as $\Theta(N)$ (each task is processed independently) and the one-time cache-embedding cost as $\Theta(N)$; in our cloud configurations, wall-clock is throughput-bound by provider rate limits rather than by local compute, while in the local Ollama configurations it is bound by single-GPU generator throughput on the workstation of the previous paragraph. Because each task is processed independently, we do not anticipate qualitative changes in any technique's behavior at substantially larger $N$; whether the router's quality lead over the cross-validated best fixed technique (\Cref{tab:semknn-pareto}) sharpens or flattens as the cache grows, and whether an indexed neighbor structure is needed in deployment, are left to future evaluation.

\paragraph{Weighted binary-checklist judge (``sigma-delta-style'').}
A naive numeric-rating judge (``score this response from 0 to 1'') is severely quantized for small judge models: an initial Gemma-3 12B judge produced only $\sim$33 unique values across 516 measurements ($\sim$5-bit resolution), with $39\%$ of scores collapsing onto a single value. Under such coarse CSI, MRC's quality weighting amplifies estimation noise and the theoretical MRC $\geq$ EGC ordering collapses (\Cref{rem:intrinsic-csi}). We instead elicit 15 weighted \emph{binary} criteria covering correctness, completeness, reasoning, and presentation (different criterion set with and without a reference answer) and compute the score as the weighted sum of yes-answers. Binary yes/no decisions are far more reliable for small LLMs than numeric ratings, and 15 weighted 1-bit measurements yield up to $2^{15}$ distinct score values.\footnote{We label this design \emph{sigma-delta-style} by loose analogy with oversampled $\Sigma\Delta$ ADCs~\citep{norsworthy1996delta}, which trade bit depth for sample count. Our scorer is strictly a weighted-binary classifier; it lacks the noise-shaping feedback loop and temporal oversampling of a true $\Sigma\Delta$ modulator, so the analogy is qualitative and design-motivating rather than structural.} All experiments in this paper use this scorer.

\paragraph{Blended scoring.}
For tasks with objectively verifiable answers, blended scoring combines automated verification with the LLM judge:
\begin{equation}
  q_{\text{blended}} = 0.6 \cdot q_{\text{obj}} + 0.4 \cdot q_{\text{judge}}
  \label{eq:blended}
\end{equation}
where $q_{\text{obj}} \in [0,1]$ is the weighted proportion of regex-verified factual checks that pass, and $q_{\text{judge}}$ is the LLM judge score. In controlled tests, the LLM judge scored a 6/7-correct response at 0.60 while $q_{\text{obj}} = 0.86$; blended scoring reduces this gap from 0.26 to $<$0.10. This approach mitigates the known unreliability of LLM-as-judge evaluators~\citep{zheng2023judging, gu2026survey} while retaining flexibility for open-ended tasks where objective scoring is unavailable.

\paragraph{Synthesis integrity.}
For MRC/EGC combining, synthesis contamination is a methodological concern: the synthesizer model may inject its own knowledge rather than faithfully merging inputs. Quality gains attributed to combining would then reflect the synthesizer solving the task independently, invalidating the diversity framework. Our constrained synthesis prompt (``output must contain ONLY information from the responses above'') combined with low temperature ($T\!=\!0.1$) effectively prevents this. Five controlled tests confirm zero contamination; full details in \Cref{app:synthesis-integrity}.

\section{Extended experimental results}
\label{app:extended-results}

This appendix presents the full experimental tables and figures omitted from the main text for space. \Cref{tab:baseline-direct} reports the direct head-to-head at matched inference budget on the 300-task hard-benchmark split. \Cref{tab:operator-view} consolidates the unified operator view of inference-time reliability methods and \Cref{tab:model-configs} the per-configuration channel and judge models. \Cref{tab:results-summary} reports the per-configuration percentage-point quality gain over baseline for every technique. \Cref{fig:turbo-waterfall} shows the per-iteration turbo trajectory on the 14B local configuration and \Cref{fig:threshold-3panel} reproduces the same diagnostic across model scales. \Cref{fig:fec-rate} reports forward-error-correction rate--distortion behavior, \Cref{fig:pareto-acm} the per-category quality--cost Pareto scatter, \Cref{fig:matched-budget} the matched-budget evaluation, \Cref{fig:acm-oracle-pareto} the cost-aware adaptive-coding-and-modulation oracle-gap decomposition, \Cref{fig:quality-distribution} the per-technique quality distributions, \Cref{fig:heatmap} the technique-by-category heatmap, and \Cref{fig:hard-benchmark-pareto} the per-dataset Pareto scatters on the 300-task hard split. The per-configuration figure gallery (\Cref{app:figure-gallery}) reproduces the same plot families for every channel configuration of \Cref{tab:model-configs}.

\begin{table}[h]
\centering
\small
\caption{\textbf{Direct head-to-head at matched inference budget on the 300-task hard-benchmark split} (Ollama-cloud trio, judge GLM-5.1; uncoded baseline $q_0=0.694$). Quality gaps and cost overheads are paired per task; positive $\Delta q$ favors \agentcodec{}. Significance via paired Wilcoxon signed-rank.}
\label{tab:baseline-direct}
\begin{tabular}{@{}>{\raggedright\arraybackslash}p{3.6cm}>{\raggedright\arraybackslash}p{4.1cm}>{\centering\arraybackslash}p{1.8cm}>{\centering\arraybackslash}p{1.7cm}>{\centering\arraybackslash}p{0.9cm}@{}}
\toprule
\textbf{Prior method} & \textbf{\agentcodec{} variant} & \textbf{$\Delta q$ (paired)} & \textbf{$\Delta$ cost} & \textbf{$p$} \\
\midrule
Self-Consistency ($N\!=\!5$)            & Diversity-MRC-Discrete-$N$       & $+0.0063^{*}$   & $-17\%$       & ${<}0.05$ \\
Best-of-$N$ ($N\!=\!5$, single-policy)  & Diversity-SC-$N$ (multi-model)   & $+0.0360^{***}$ & $+3\%$        & ${<}10^{-3}$ \\
Confidence-Inf.\ Self-Consist.\ (CISC)  & Diversity-MRC-Discrete-$N$       & $+0.0658^{***}$ & $+5\%$        & ${<}10^{-9}$ \\
Weighted Best-of-$N$                    & Diversity-MRC-Discrete-$N$       & $+0.0269^{*}$   & $+3\%$        & ${<}0.05$ \\
Mixture-of-Agents                       & Diversity-EGC                    & $+0.0125^{***}$ & $-48\%$       & ${<}10^{-6}$ \\
Self-Refine ($K\!=\!5$)                 & HARQ-IR                          & $-0.0227$       & $-10\%$       & $0.43$ \\
Chain-of-Verification                   & FEC ($r\!=\!1/2$)                & $+0.0194^{***}$ & $+37\%$       & ${<}10^{-3}$ \\
\bottomrule
\end{tabular}
\end{table}

\begin{table}[h]
\centering
\small
\caption{\textbf{Unified operator view of inference-time reliability methods.} Each cell names the prior method occupying that combining-operator $\times$ answer-space slot. ``Discrete'' space means outputs are partitioned into semantic equivalence classes before combining; ``continuous'' means outputs are combined as free-form text via an LLM synthesizer. Iterative variants (HARQ, turbo) are orthogonal to the combining operator and reuse the same SC/EGC/MRC selector as their stopping gate.}
\label{tab:operator-view}
\begin{tabular}{@{}>{\raggedright\arraybackslash}p{2.8cm}>{\raggedright\arraybackslash}p{4.7cm}>{\raggedright\arraybackslash}p{5.5cm}@{}}
\toprule
\textbf{Operator} & \textbf{Continuous answer space} & \textbf{Discrete answer space (clusters)} \\
\midrule
SC (select best) & Diversity-SC, Diversity-SC-$N$ (multi-model, N=5); BoN~\citep{cobbe2021verifiers,lightman2024verify} (single-model) & (degenerate: clusters $=$ samples) \\
EGC (equal weight) & Diversity-EGC; Mixture-of-Agents~\citep{wang2024mixture} & Plurality vote (uniform $w$ across clusters) \\
MRC (quality weighted) & Diversity-MRC (synthesis) & Diversity-MRC-Discrete-$N$ (multi-model, judge-CSI); Diversity-MRC-Discrete-$N$-Soft (multi-model, logprob-CSI); Weighted-BoN (single-model, judge-CSI); CISC~\citep{taubenfeld2025confidence} (single-model, logprob-CSI); Self-Consistency~\citep{wang2023selfconsistency} at $q_j\!\equiv\!1$ \\
\midrule
\multicolumn{3}{@{}>{\raggedright\arraybackslash}p{13cm}@{}}{\textbf{Iterative extensions} (any operator above, applied inside a feedback loop):} \\
\multicolumn{3}{@{}>{\raggedright\arraybackslash}p{13cm}@{}}{HARQ-CC $=$ EGC + retry; HARQ-IR, Self-Refine~\citep{madaan2023selfrefine}, Reflexion~\citep{shinn2023reflexion} $=$ MRC + critic; Multi-Agent Debate~\citep{du2023debate} $=$ two-generator turbo decoding.} \\
\midrule
\multicolumn{3}{@{}>{\raggedright\arraybackslash}p{13cm}@{}}{\textbf{Orthogonal axes} (apply to any cell):} \\
\multicolumn{3}{@{}>{\raggedright\arraybackslash}p{13cm}@{}}{CSI source $\in$ \{judge score, token log-prob (soft), trained PRM, none (EGC)\}; pool size $N$; iterative rounds $K$.} \\
\bottomrule
\end{tabular}
\end{table}

\begin{table}[h]
\centering
\caption{Channel configurations. Local models served via Ollama on a single workstation; cloud configurations use the respective vendor APIs. ``Spatial diversity'' = different architectures as independent channel realizations.}
\label{tab:model-configs}
\footnotesize
\setlength{\tabcolsep}{3pt}
\begin{tabular}{@{}llll@{}}
\toprule
\textbf{Config} & \textbf{Channels} & \textbf{Judge} & \textbf{Provider} \\
\midrule
3B local                & Qwen-2.5 3B, Gemma-3 4B                            & Gemma-3 12B       & Ollama (local)         \\
3B local + cloud judge  & Qwen-2.5 3B, Llama-3.2 3B                          & Gemma-4 31B       & Ollama (local + cloud) \\
8B local                & Llama-3.1 8B, Qwen-2.5 7B                          & Gemma-3 12B       & Ollama (local)         \\
14B local               & DeepSeek-R1 14B, Phi-3 14B                         & Gemma-3 12B       & Ollama (local)         \\
Anthropic + OpenAI cloud & Claude Haiku 4.5, GPT-5-mini                      & Claude Haiku 4.5  & Anthropic, OpenAI APIs \\
Ollama-cloud trio       & Nemotron-Nano-3 30B, Devstral-Small-2 24B          & GLM-5.1           & Ollama cloud           \\
\bottomrule
\end{tabular}
\end{table}

% NUMBERS FROM: cache_3B, cache_3B_gemma4, cache_8B_llama_qwen_gemma, cache_deepseek14_phi314,
% cache3_haiku, ollama_nemotron_devstral_glm51, ollama_nemotron_devstral_glm51_datasets. All
% gains are relative to that configuration's own baseline.json on the matched task set.
\begin{table}[!ht]
\centering
\caption{Quality improvement over the uncoded single-call baseline (percent of baseline mean quality, paired per task) across the seven channel-configuration $\times$ task-set combinations of \Cref{tab:model-configs}. The Ollama-cloud column uses Nemotron-Nano-3 + Devstral-Small-2 with GLM-5.1 as judge; the right-most column reports the same channels on the 300-task hard-benchmark split (MMLU/GSM8K/HumanEval). Significance via paired Wilcoxon signed-rank: $^{***}p{<}0.001$, $^{**}p{<}0.01$, $^{*}p{<}0.05$; ``(ns)'' = not significant; ``---'' = not run on that configuration. Experiments of columns marked with a $^\dagger$ were repeated $3\times$. Cost overheads $\rho$ are mean cost relative to baseline.}
\label{tab:results-summary}
\scriptsize
\setlength{\tabcolsep}{2pt}
\begin{tabular}{@{}l ccccccc@{}}
\toprule
& \textbf{3B} & \textbf{3B+gemma4}$^\dagger$ & \textbf{8B} & \textbf{14B} & \textbf{A+O} & \textbf{Ollama} & \textbf{Ollama}$^\dagger$ \\
\textbf{Technique} & local & local+cloud & local & local & cloud & cloud n{=}69 & cloud n{=}300 \\
$q_0$ baseline & $0.651$ & $0.554$ & $0.604$ & $0.746$ & $0.819$ & $0.764$ & $0.694$ \\
\midrule
Diversity-SC & $+13.1^{***}$ & $+7.3^{***}$ & $+23.5^{***}$ & $+6.2^{*}$ & $+4.6^{*}$ & $+9.4^{***}$ & $+14.2^{***}$ \\
\hspace{1em}$\rho$ & $2.0\times$ & $1.9\times$ & $2.2\times$ & $1.8\times$ & $1.7\times$ & $1.9\times$ & $2.1\times$ \\
\hspace{1em}$\eta$ & $+6.4$ & $+3.8$ & $+10.7$ & $+3.5$ & $+2.6$ & $+5.0$ & $+6.7$ \\[2pt]
Diversity-MRC & $+16.0^{***}$ & $+8.0^{***}$ & $+26.9^{***}$ & $+6.2^{*}$ & $+7.9^{***}$ & $+11.7^{***}$ & $+14.0^{***}$ \\
\hspace{1em}$\rho$ & $4.6\times$ & $3.8\times$ & $3.7\times$ & $2.8\times$ & $2.8\times$ & $4.1\times$ & $3.2\times$ \\
\hspace{1em}$\eta$ & $+3.5$ & $+2.1$ & $+7.3$ & $+2.2$ & $+2.9$ & $+2.9$ & $+4.3$ \\[2pt]
Diversity-EGC & $+14.0^{***}$ & $+49.2^{***}$ & $+30.6^{***}$ & $+11.6^{***}$ & $+6.2^{***}$ & $+15.1^{***}$ & $+15.5^{***}$ \\
\hspace{1em}$\rho$ & $4.8\times$ & $4.8\times$ & $4.1\times$ & $3.0\times$ & $2.9\times$ & $6.0\times$ & $4.0\times$ \\
\hspace{1em}$\eta$ & $+2.9$ & $+10.2$ & $+7.5$ & $+3.8$ & $+2.2$ & $+2.5$ & $+3.9$ \\[2pt]
HARQ-CC & $+15.7^{***}$ & $+17.0^{***}$ & $+22.7^{***}$ & $+11.7^{***}$ & $+7.3^{**}$ & $+14.1^{***}$ & $+11.2^{***}$ \\
\hspace{1em}$\rho$ & $5.7\times$ & $6.1\times$ & $6.1\times$ & $4.0\times$ & $3.3\times$ & $4.9\times$ & $4.0\times$ \\
\hspace{1em}$\eta$ & $+2.8$ & $+2.8$ & $+3.7$ & $+2.9$ & $+2.2$ & $+2.8$ & $+2.8$ \\[2pt]
HARQ-IR & $+13.2^{***}$ & $+5.1$ (ns) & $+22.8^{***}$ & $+14.0^{***}$ & $+10.7^{***}$ & $+8.9^{**}$ & $+14.7^{***}$ \\
\hspace{1em}$\rho$ & $6.9\times$ & $8.3\times$ & $7.5\times$ & $5.4\times$ & $3.6\times$ & $7.1\times$ & $6.2\times$ \\
\hspace{1em}$\eta$ & $+1.9$ & $+0.6$ & $+3.0$ & $+2.6$ & $+3.0$ & $+1.3$ & $+2.4$ \\[2pt]
Turbo & $+6.3$ (ns) & $+5.0^{*}$ & $+13.9^{**}$ & $+14.2^{***}$ & $+10.2^{***}$ & $+8.5^{**}$ & $+13.0^{***}$ \\
\hspace{1em}$\rho$ & $5.8\times$ & $6.7\times$ & $6.7\times$ & $4.1\times$ & $4.3\times$ & $4.7\times$ & $5.6\times$ \\
\hspace{1em}$\eta$ & $+1.1$ & $+0.7$ & $+2.1$ & $+3.4$ & $+2.4$ & $+1.8$ & $+2.3$ \\[2pt]
Fountain & $+23.1^{***}$ & $+24.6^{***}$ & $+32.5^{***}$ & $+11.1^{***}$ & $+5.0$ (ns) & $+20.0^{***}$ & $+18.8^{***}$ \\
\hspace{1em}$\rho$ & $7.3\times$ & $7.9\times$ & $9.1\times$ & $7.4\times$ & $5.5\times$ & $6.3\times$ & $6.2\times$ \\
\hspace{1em}$\eta$ & $+3.2$ & $+3.1$ & $+3.6$ & $+1.5$ & $+0.9$ & $+3.2$ & $+3.0$ \\[2pt]
FEC ($r{=}3/4$) & $-0.8$ (ns) & $-7.2^{**}$ & $+4.2$ (ns) & $+7.6^{*}$ & --- & $+5.4^{*}$ & $+4.4^{***}$ \\
\hspace{1em}$\rho$ & $3.1\times$ & $3.0\times$ & $2.9\times$ & $3.1\times$ & --- & $4.3\times$ & $3.4\times$ \\
\hspace{1em}$\eta$ & $-0.3$ & $-2.4$ & $+1.5$ & $+2.4$ & --- & $+1.2$ & $+1.3$ \\[2pt]
FEC ($r{=}1/2$) & $-1.4$ (ns) & $-2.8$ (ns) & $+7.5$ (ns) & $+4.3$ (ns) & --- & $+2.4$ (ns) & $+4.6^{***}$ \\
\hspace{1em}$\rho$ & $3.4\times$ & $3.2\times$ & $3.5\times$ & $3.9\times$ & --- & $3.8\times$ & $3.7\times$ \\
\hspace{1em}$\eta$ & $-0.4$ & $-0.9$ & $+2.1$ & $+1.1$ & --- & $+0.6$ & $+1.2$ \\[2pt]
ACM (hand-coded) & $+24.2^{***}$ & $+27.2^{***}$ & $+34.3^{***}$ & $+10.2^{***}$ & $+2.7$ (ns) & $+12.9^{***}$ & $+8.6^{***}$ \\
\hspace{1em}$\rho$ & $8.3\times$ & $8.8\times$ & $8.4\times$ & $3.3\times$ & $2.5\times$ & $5.6\times$ & $4.1\times$ \\
\hspace{1em}$\eta$ & $+2.9$ & $+3.1$ & $+4.1$ & $+3.1$ & $+1.1$ & $+2.3$ & $+2.1$ \\[2pt]
Diversity-SC-$N$ & $+20.0^{***}$ & $+21.0^{***}$ & $+33.0^{***}$ & $+11.9^{**}$ & $+9.6^{***}$ & $+16.9^{***}$ & $+18.6^{***}$ \\
\hspace{1em}$\rho$ & $5.1\times$ & $4.9\times$ & $5.5\times$ & $4.3\times$ & $4.5\times$ & $5.0\times$ & $5.2\times$ \\
\hspace{1em}$\eta$ & $+3.9$ & $+4.3$ & $+6.0$ & $+2.8$ & $+2.1$ & $+3.4$ & $+3.6$ \\[2pt]
Diversity-MRC-Disc-$N$ & $+18.2^{***}$ & $+19.7^{***}$ & $+31.3^{***}$ & $+10.0^{**}$ & $+10.1^{***}$ & $+14.5^{***}$ & $+17.7^{***}$ \\
\hspace{1em}$\rho$ & $5.4\times$ & $5.1\times$ & $6.3\times$ & $5.7\times$ & $5.3\times$ & $6.0\times$ & $5.5\times$ \\
\hspace{1em}$\eta$ & $+3.3$ & $+3.9$ & $+4.9$ & $+1.7$ & $+1.9$ & $+2.4$ & $+3.2$ \\[2pt]
Self-Consistency ($N{=}5$) & $+14.9^{***}$ & $+18.6^{***}$ & $+31.6^{***}$ & $+8.7^{*}$ & $+8.2^{**}$ & $+14.9^{***}$ & $+16.8^{***}$ \\
\hspace{1em}$\rho$ & $6.3\times$ & $5.9\times$ & $7.1\times$ & $6.2\times$ & $5.9\times$ & $6.9\times$ & $6.6\times$ \\
\hspace{1em}$\eta$ & $+2.4$ & $+3.1$ & $+4.5$ & $+1.4$ & $+1.4$ & $+2.2$ & $+2.6$ \\[2pt]
Self-Refine ($K{=}5$) & $+6.0$ (ns) & $+2.3$ (ns) & $+9.6^{*}$ & $+5.1$ (ns) & $-1.0$ (ns) & $+9.6^{***}$ & $+18.0^{***}$ \\
\hspace{1em}$\rho$ & $5.2\times$ & $4.8\times$ & $6.8\times$ & $6.1\times$ & $10.9\times$ & $7.4\times$ & $6.9\times$ \\
\hspace{1em}$\eta$ & $+1.2$ & $+0.5$ & $+1.4$ & $+0.8$ & $-0.1$ & $+1.3$ & $+2.6$ \\[2pt]
Chain-of-Verification & $-0.1$ (ns) & $+0.2$ (ns) & $-0.9$ (ns) & $+2.9$ (ns) & --- & $+2.8$ (ns) & $+1.8^{**}$ \\
\hspace{1em}$\rho$ & $2.2\times$ & $2.1\times$ & $2.9\times$ & $2.5\times$ & --- & $3.2\times$ & $2.7\times$ \\
\hspace{1em}$\eta$ & $-0.1$ & $+0.1$ & $-0.3$ & $+1.2$ & --- & $+0.9$ & $+0.7$ \\[2pt]
Best-of-$N$ & --- & $+21.6^{***}$ & $+18.3^{***}$ & $+12.4^{***}$ & --- & $+10.6^{***}$ & $+13.4^{***}$ \\
\hspace{1em}$\rho$ & --- & $5.0\times$ & $5.2\times$ & $4.4\times$ & --- & $5.2\times$ & $5.1\times$ \\
\hspace{1em}$\eta$ & --- & $+4.3$ & $+3.5$ & $+2.8$ & --- & $+2.0$ & $+2.7$ \\[2pt]
Weighted Best-of-$N$ & --- & $+19.9^{***}$ & $+14.9^{***}$ & $+12.1^{***}$ & $+7.4^{***}$ & $+10.2^{***}$ & $+13.8^{***}$ \\
\hspace{1em}$\rho$ & --- & $5.2\times$ & $6.1\times$ & $5.8\times$ & $6.4\times$ & $5.9\times$ & $5.3\times$ \\
\hspace{1em}$\eta$ & --- & $+3.8$ & $+2.4$ & $+2.1$ & $+1.1$ & $+1.7$ & $+2.6$ \\[2pt]
Mixture-of-Agents & $+16.1^{***}$ & $+15.5^{***}$ & $+30.5^{***}$ & $+2.1$ (ns) & $+7.8^{**}$ & $+14.6^{***}$ & $+13.7^{***}$ \\
\hspace{1em}$\rho$ & $6.4\times$ & $6.6\times$ & $7.3\times$ & $11.1\times$ & $7.1\times$ & $6.2\times$ & $7.6\times$ \\
\hspace{1em}$\eta$ & $+2.5$ & $+2.3$ & $+4.2$ & $+0.2$ & $+1.1$ & $+2.4$ & $+1.8$ \\[2pt]
CISC & --- & $+14.8^{***}$ & --- & $+3.7$ (ns) & $+0.3$ (ns) & $+9.5^{***}$ & $+8.2^{***}$ \\
\hspace{1em}$\rho$ & --- & $4.8\times$ & --- & $8.1\times$ & $5.9\times$ & $5.1\times$ & $5.2\times$ \\
\hspace{1em}$\eta$ & --- & $+3.1$ & --- & $+0.5$ & $+0.0$ & $+1.9$ & $+1.6$ \\[2pt]
\bottomrule
\end{tabular}
\end{table}

\begin{figure}[h]
  \centering
  \includegraphics[width=0.9\linewidth]{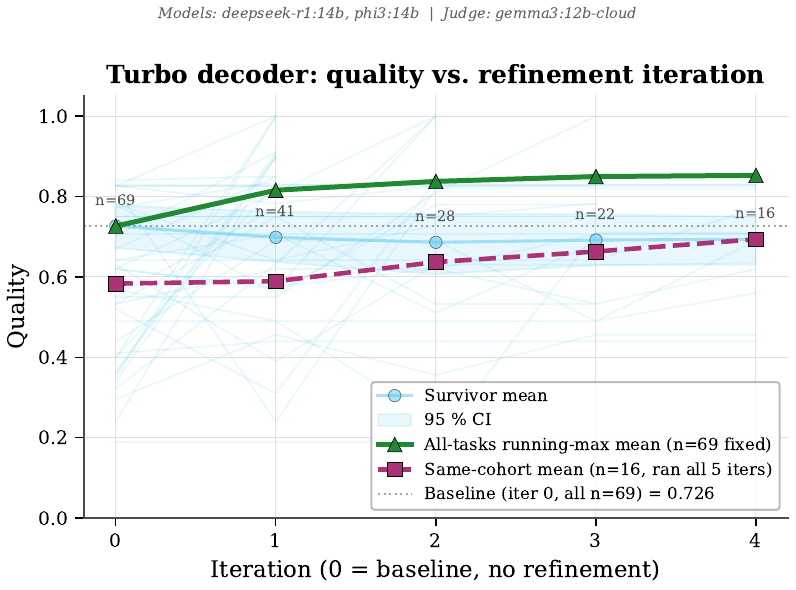}
  \caption{Turbo-decoder quality versus refinement iteration on the 14B local configuration (DeepSeek-R1 14B + Phi-3 14B; judge Gemma-3 12B; $n{=}69$ tasks). Iteration~0 is the initial generation, before any critic feedback; later iterations apply the structured-critique refinement of \Cref{prop:refinement-threshold}. The thin background lines are per-task trajectories; the bold curve is the all-tasks running-max mean (early-exited tasks contributing their best-so-far score), which is monotone by \Cref{rem:bos-guard}.}
  \label{fig:turbo-waterfall}
\end{figure}

\begin{figure}[h]
  \centering
  \includegraphics[width=\linewidth]{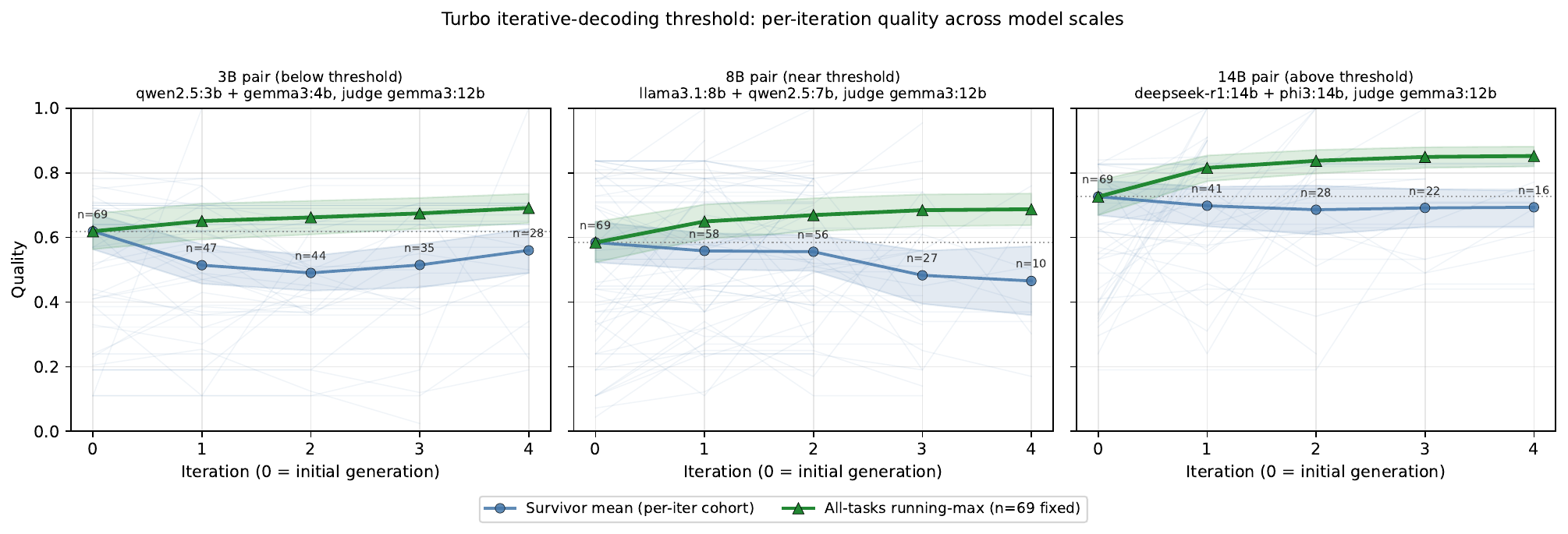}
  \caption{Iterative-decoding threshold across model scales. Same turbo configuration (\texttt{max\_iterations}$=5$, $\alpha_0=0.5$, severity floor = major, $\tau=0.9$) applied to three generator/critic pairs: \emph{left} qwen2.5:3b + gemma3:4b (below threshold: survivor mean descends from $0.619$ to $0.514$ at iter 1, refinement is net-destructive); \emph{center} llama3.1:8b + qwen2.5:7b (near threshold: survivor mean noisy and slightly declining); \emph{right} deepseek-r1:14b + phi3:14b (above threshold: survivor mean flat at ceiling, running-max climbs $0.726 \to 0.853$). Judge fixed at gemma3:12b. In all three panels the green all-tasks running-max curve ($n=69$ fixed) is monotonic, confirming the best-of-sequence guard prevents regression; but only at 14B does the refinement map itself produce genuine quality gains per iteration. See \Cref{app:impl-details} for exact algorithmic parameters.}
  \label{fig:threshold-3panel}
\end{figure}

\begin{figure}[h]
  \centering
  \includegraphics[width=0.95\linewidth]{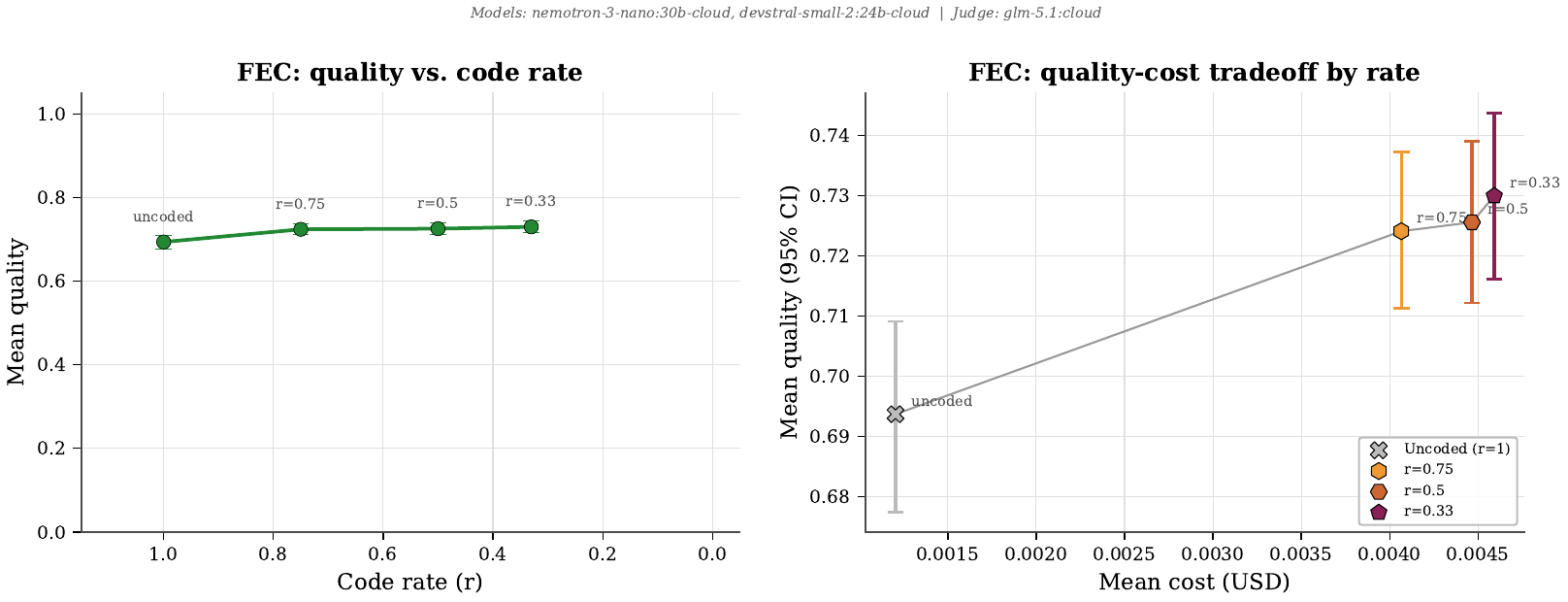}
  \caption{Forward error correction (FEC) on the Ollama-cloud configuration. \emph{Left}: mean quality versus code rate $r$ (with $r{=}1$ the uncoded baseline) with 95\% bootstrap confidence intervals; the curve is monotone in $r$ on this set, with $r{=}3/4$ delivering the best mean quality (one extra step-by-step reasoning section) and lower rates saturating because the syndrome-decoding LLM does not exploit the additional, less-decorrelated parity sections. \emph{Right}: per-task quality versus cost overhead $\rho$ scatter, colored by code rate.}
  \label{fig:fec-rate}
\end{figure}

\begin{figure}[h]
  \centering
  \includegraphics[width=0.85\linewidth]{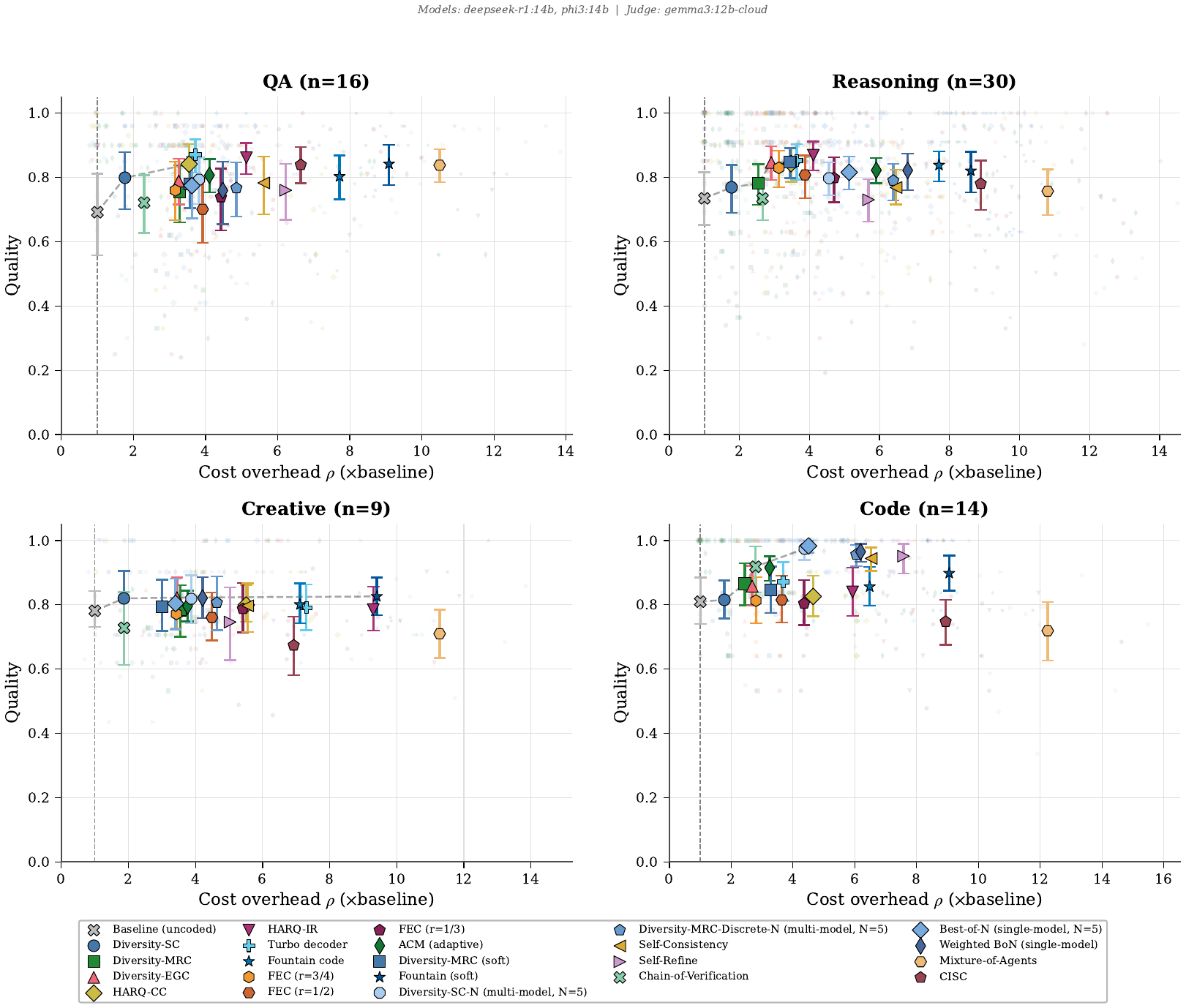}
  \caption{Per-category quality--cost Pareto scatter on the 14B local configuration ($n{=}69$ curated tasks; one panel per task category). The curated tasks cover four categories (question answering, reasoning, creative, code); each panel plots per-task quality versus normalized cost overhead $\rho$ for every technique, with the per-technique mean shown as a large marker with a 95\% bootstrap confidence interval. The location of the cost-aware adaptive-coding-and-modulation router (\textsc{ACM}) within each panel illustrates the per-category Pareto trade-off that the global frontier of \Cref{fig:acm-oracle-pareto} aggregates.}
  \label{fig:pareto-acm}
\end{figure}

\begin{figure}[h]
  \centering
  \includegraphics[width=0.95\linewidth]{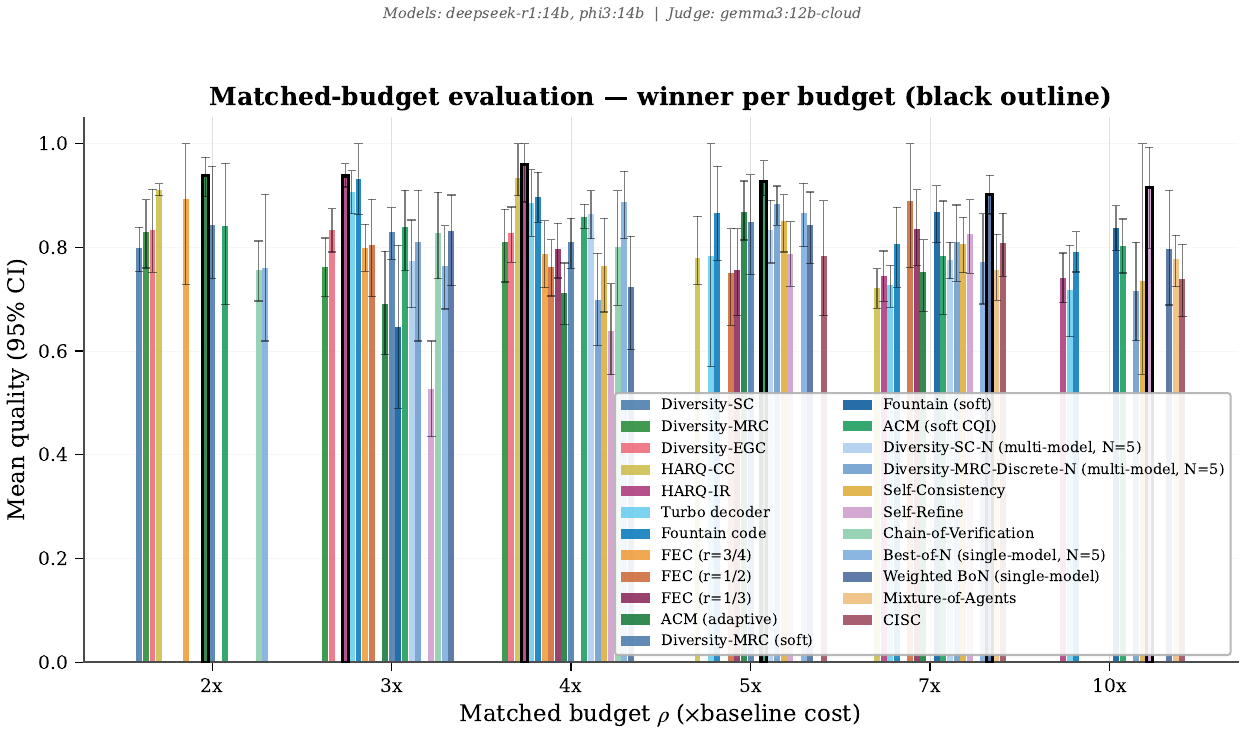}
  \caption{Matched-budget evaluation on the 14B local configuration (cf.\ \citealp{snell2024scaling}): mean quality with 95\% bootstrap confidence interval for each technique, binned by observed cost overhead $\rho \in \{2\times, 3\times, 4\times, 5\times, 7\times, 10\times\}$. The black outline at each budget marks the technique with the highest mean quality. Where \Cref{tab:results-summary} reports each technique at its own operating point, this figure forces a common budget and asks who wins there. Per-configuration matched-budget bars for the other channels are in the corresponding gallery subsections (\Cref{app:figure-gallery}).}
  \label{fig:matched-budget}
\end{figure}

% \begin{figure}[h]
%   \centering
%   \includegraphics[width=\linewidth]{figures/acm_oracle_gap/acm_oracle_gap_cv.pdf}
%   \caption{ACM oracle-gap and Pareto-frontier analysis on the Ollama-cloud configuration (\texttt{ollama\_nemotron\_devstral\_glm51\_datasets}, $n{=}300$ tasks; 5-fold stratified CV; 4000 bootstrap samples). \emph{Left}: per-policy mean quality with 95\% bootstrap CIs over 19 candidate techniques and four router classes (hand-coded ACM, scalar-difficulty bins, $(d,\text{cat})$ logit, ridge on extended features, semantic-KNN at $\lambda \in \{1,5,10,20\}$). The deployed semknn router at $\lambda{=}1$ beats fixed-best by $+0.054$ ($p<10^{-9}$, paired Wilcoxon) at essentially the same cost; sweeping $\lambda$ to $20$ preserves $+0.042$ quality at $51\%$ lower cost. \emph{Right}: additive decomposition $q_{\text{oracle}}-q_{\text{realized}}^{\text{ACM}} = \text{info} + \text{gen.} + \text{policy} + \text{realiz.}$ against the deployed semknn router. The policy term collapses once the router moves from the hand-coded $(d, \text{category})$ bins to semantic-KNN; the binding constraint shifts to the feature-set information limit ($+0.059$) plus the finite-sample generalization gap ($+0.019$). See \Cref{sec:exp-acm-decomp,tab:semknn-pareto} for the full table.}
%   \label{fig:acm-oracle-gap}
% \end{figure}

\begin{figure}[h]
  \centering
  \includegraphics[width=0.95\linewidth]{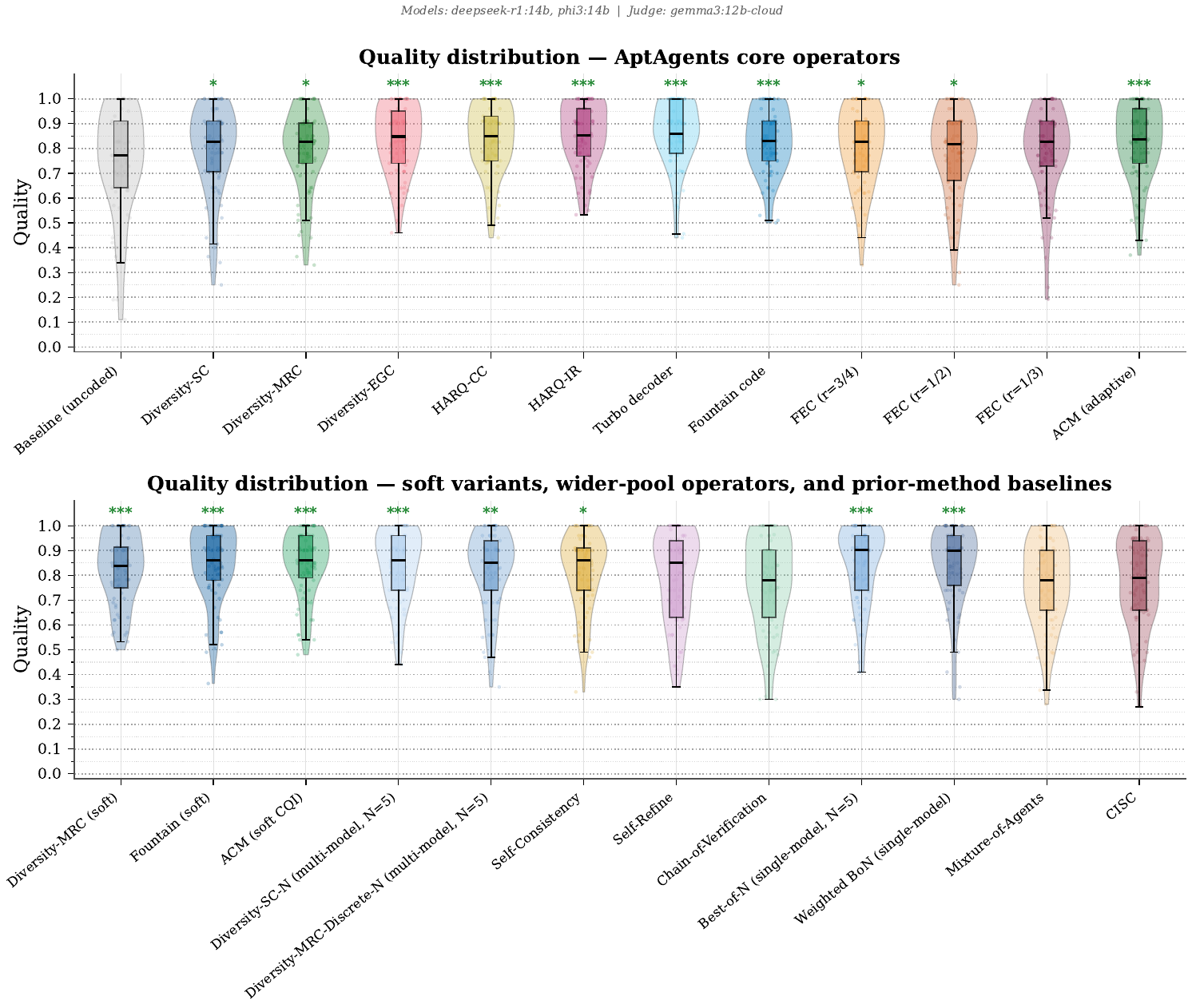}
  \caption{Per-technique quality distributions on the 14B local configuration (violin plots with individual data points and significance stars vs.\ baseline). \emph{Top}: \agentcodec{} techniques (uncoded baseline plus the comm-theoretic operators and soft variants). \emph{Bottom}: prior-method baselines and wider-pool multi-model diversity operators. The figure complements the per-technique-mean view of \Cref{fig:matched-budget} by exposing the spread of per-task quality, which the means alone do not.}
  \label{fig:quality-distribution}
\end{figure}

\begin{figure}[h]
  \centering
  \includegraphics[width=\linewidth]{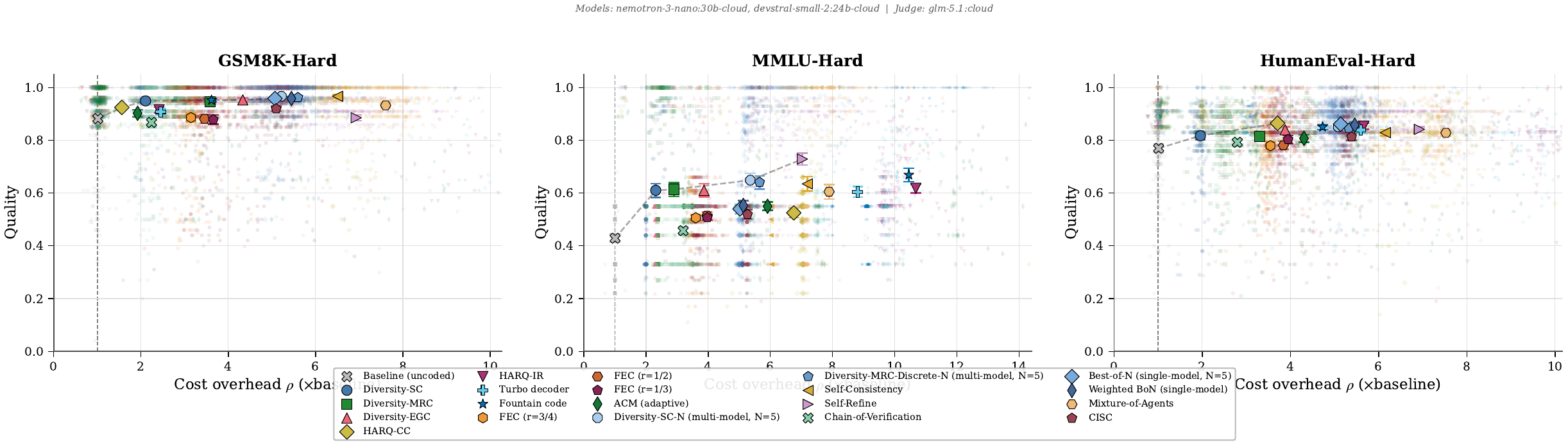}
  \caption{Per-dataset quality versus normalized cost overhead $\rho$ on the 300-task hard-benchmark split (Ollama-cloud trio; one panel per dataset: GSM8K-Hard, MMLU-Hard, HumanEval-Hard). Small markers are individual tasks; large markers with error bars show per-technique means with 95\% bootstrap confidence intervals; the dashed line is the empirical Pareto frontier per panel. The cost-aware semantic-nearest-neighbor router of \Cref{tab:semknn-pareto} sits on or above the per-panel frontier on every dataset.}
  \label{fig:hard-benchmark-pareto}
\end{figure}

% ============================================================================
% Detailed ACM oracle-gap analysis (relocated from main text for compactness).
% ============================================================================
\subsection{ACM oracle-gap decomposition: methodology and full table}
\label{sec:exp-acm-decomp}

\paragraph{Setup.} To address the calibration-leakage potential concerns for the semantic-nearest-neighbor router (\textsc{AcmSemKNN}, \Cref{sec:exp-acm}), all routing policies are calibrated and scored under $5$-fold cross-validation (CV) stratified by category. The evaluation cache is the Ollama-cloud configuration with $n{=}300$ standard-benchmark tasks generated $n_{\text{rep}}{=}3$ independent times each ($900$ trial-task observations), $240$/$60$ train/test split per fold, and $4000$ bootstrap samples for confidence intervals; the same fold partition is applied across repeats so within-task generation variance is absorbed by the bootstrap. The KNN router is fit on the BAAI/bge-large-en-v1.5 embeddings of the train prompts, with kernel size $k{=}20$ and a single Lagrangian knob $\lambda$ controlling the cost penalty. Five additional routers serve as comparators: \textsc{FixedBest} (CV-deployable, abbreviated FB-CV: argmax mean quality on train), \textsc{CategoryBest} (per-category argmax), \textsc{ScalarDifficultyBins} (per-bin argmax over quantile edges of the train pilot-difficulty distribution), and two parametric ablations of \textsc{AcmSemKNN}: \textsc{AcmRidge} ($L_2$-ridge regression per technique on the extended feature set $[1, d, \mathds{1}_{\text{cat}}, \log\!\#\text{words}, \mathds{1}_{\text{has\_code}}, \mathds{1}_{\text{has\_math}}, \mathds{1}_{\text{has\_numbers}}, \mathds{1}_{\text{has\_question}}, \log\!\#\text{sentences}, \overline{|\text{word}|}]$, dispatch by argmax predicted quality) and \textsc{AcmLearned} ($L_2$-regularized multinomial logit on $[1, d, \mathds{1}_{\text{cat}}]$ predicting the per-task argmax-technique label; also shipped as \texttt{acm\_learned}, see \Cref{sec:acm-learned-impl}). The hand-coded ACM bins committed in production are evaluated twice: \emph{simulated}, by applying the routing rules to the cached per-technique quality, and \emph{realized}, by reading the actual \texttt{acm.json} quality under ACM's profile-specific round and branch counts. The realized-vs-oracle gap admits the additive decomposition
\begin{align*}
q_{\text{oracle}} - q_{\text{ACM}}^{\text{real}}
  = \underbrace{(q_{\text{oracle}} - q_{\text{feas}})}_{\text{info gap}}
  + \underbrace{(q_{\text{feas}} - q_{\text{learn}}^{\text{CV}})}_{\text{gen.\ gap}}
  + \underbrace{(q_{\text{learn}}^{\text{CV}} - q_{\text{ACM}}^{\text{sim}})}_{\text{policy gap}}
  + \underbrace{(q_{\text{ACM}}^{\text{sim}} - q_{\text{ACM}}^{\text{real}})}_{\text{realiz.\ gap}},
\end{align*}
where $q_{\text{feas}}$ is the leave-one-out (LOO) in-sample ceiling of \textsc{AcmSemKNN}: the best deterministic dispatch achievable on the kernel without changing the embedding model. Headline numbers on the $n{=}300$ Ollama-cloud cache (full per-policy comparison in \Cref{tab:semknn-pareto-full}):

% \begin{center}\small
% \begin{tabular}{@{}lcc@{}}
% \toprule
% policy & mean quality (95\% CI) & mean cost \\
% \midrule
% Oracle (per-task max)                                   & $0.912$ [$0.900, 0.923$]   & $\$0.00668$ \\
% Feasible (semknn LOO, in-sample)                        & $0.877$ [$0.863, 0.892$]   & $\$0.00735$ \\
% \textsc{AcmSemKNN}, $\lambda{=}3$ (CV, \emph{deployed}) & $0.874$ [$0.859, 0.888$]   & $\$0.00695$ \\
% \textsc{AcmSemKNN}, $\lambda{=}20$ (CV)                 & $0.851$ [$0.836, 0.866$]   & $\$0.00381$ \\
% \textsc{AcmSemKNN}, $\lambda{=}38$ (CV, cost extreme)   & $0.820$ [$0.802, 0.837$]   & $\$0.00290$ \\
% Per-category fixed-best (CV)                            & $0.851$ [$0.836, 0.867$]   & $\$0.00645$ \\
% \textsc{AcmRidge} (CV, ablation)                        & $0.849$ [$0.833, 0.865$]   & $\$0.00660$ \\
% \textsc{AcmLearned} (CV, ablation)                      & $0.846$ [$0.830, 0.861$]   & $\$0.00707$ \\
% Scalar-difficulty bins (CV)                             & $0.837$ [$0.820, 0.855$]   & $\$0.00685$ \\
% Fixed-best CV (FB-CV)                                   & $0.818$ [$0.800, 0.838$]   & $\$0.00698$ \\
% ACM realized (\texttt{acm.json}, in-sample)             & $0.753$                    & $\$0.00473$ \\
% ACM simulated (config $\to$ cache, in-sample)           & $0.694$                    & $\$0.00120$ \\
% \bottomrule
% \end{tabular}
% \end{center}

\begin{table}[!ht]
\centering
\caption{Per-configuration $G$-leader (raw coding gain) versus $\eta$-leader (gain per unit cost overhead) on the 69-task curated split, with $G, \rho, \eta$ as defined in \Cref{app:formal-foundations}. $G$ is reported as percent of baseline mean (the convention of \Cref{tab:results-summary}); $\rho$ is the mean per-task cost overhead. The two leaders coincide only on the A+O cloud, where the baseline is already near ceiling.}
\label{tab:winner-vs-eff}
\footnotesize
\setlength{\tabcolsep}{4pt}
\begin{tabular}{@{}l llrr llrr@{}}
\toprule
& \multicolumn{4}{c}{\textbf{$G$-leader (raw coding gain)}} & \multicolumn{4}{c}{\textbf{$\eta$-leader (gain per unit cost)}} \\
\cmidrule(lr){2-5}\cmidrule(lr){6-9}
Configuration & Technique & $G$ & $\rho$ & $\eta$ & Technique & $G$ & $\rho$ & $\eta$ \\
\midrule
3B local      & Fountain  & $+23.1\%$ & $7.3\times$ & $3.2$ & Diversity-SC  & $+13.1\%$ & $2.0\times$ & $6.4$  \\
8B local      & Fountain  & $+32.5\%$ & $9.1\times$ & $3.6$ & Diversity-SC  & $+23.5\%$ & $2.2\times$ & $10.7$ \\
14B local     & Turbo     & $+14.2\%$ & $4.1\times$ & $3.4$ & Diversity-EGC & $+11.6\%$ & $3.0\times$ & $3.8$  \\
Ollama cloud  & Fountain  & $+20.0\%$ & $6.3\times$ & $3.2$ & Diversity-SC  & $+9.4\%$  & $1.9\times$ & $5.0$  \\
A+O cloud     & HARQ-IR   & $+10.7\%$ & $3.6\times$ & $3.0$ & HARQ-IR       & $+10.7\%$ & $3.6\times$ & $3.0$  \\
\bottomrule
\end{tabular}
\end{table}
\vspace{-2mm}

The Oracle, Feasible, simulated-ACM, and realized-ACM rows are scored in-sample on all $n{=}300$ tasks (averaged over $n_{\text{rep}}{=}3$ generation repeats per task); every other row is out-of-fold and pays a generalization tax that the in-sample rows do not. Plugging the deployed router at $\lambda{=}3$ into the decomposition gives
\begin{align*}
q_{\text{oracle}} - q_{\text{ACM}}^{\text{real}} \;=\; +0.159
\;=\; \underbrace{+0.034}_{\text{info gap}}
\;+\; \underbrace{+0.003}_{\text{gen.\ gap}}
\;+\; \underbrace{+0.181}_{\text{policy gap}}
\;+\; \underbrace{(-0.060)}_{\text{realiz.\ gap}}.
\end{align*}

\paragraph{What the decomposition says about the KNN router.} Three observations follow.
\emph{(i)~The policy term dominates the gap, and \textsc{AcmSemKNN} closes it.} The hand-coded $(d, \text{category})$ bins were paying a policy gap of $+0.181$ to oracle; replacing the dispatch rule by the embedding kernel reduces the residual to $+0.034$ (information limit) plus $+0.003$ (finite-sample generalization), so essentially the entire policy term is recovered. The deployed router lands within $\approx 0.3$ percentage points of the in-sample feasibility ceiling at slightly lower cost ($\$0.00695$ vs.\ $\$0.00735$).
\emph{(ii)~The cost axis is genuinely orthogonal.} The same router instance traverses the empirical Pareto frontier with one Lagrangian knob and no retraining. At $\lambda{=}38$ it still matches FB-CV in mean quality ($0.820$ vs.\ $0.818$) while running at $\$0.00290$ per task, $\approx 58\%$ below FB-CV cost and $\approx 57\%$ below the per-task oracle cost; the intermediate $\lambda{=}20$ ($0.851$, $\$0.00381$) loses only $\approx 2.3$ percentage points relative to $\lambda{=}3$ at $\approx 45\%$ lower cost.
\emph{(iii)~The realization gap is negative.} The realized ACM run ($q{=}0.753$) outperforms the simulated trace ($q{=}0.694$), so the production dispatcher's profile-specific round and branch counts deliver a small integration win rather than a lossy operating-point dial. The residual quality gap to oracle is therefore information-bound: closing the $+0.034$ information gap requires either richer features (response length, pilot-branch agreement, category-conditioned log-probability) or a stronger channel-state-information signal than the pilot's mean log-probability.

\paragraph{Cross-validation fully addresses any potential concerns about calibration leakage.} Every test-fold task is dispatched using nearest-neighbor means computed only over its $240$ train-fold neighbors; test-fold per-technique scores are never seen at fit time, and the same fold partition is applied across the $n_{\text{rep}}{=}3$ repeats so within-task generation variance is absorbed by the bootstrap rather than by the train/test split. The deployed router scores $0.874$ [$0.859, 0.888$], $\approx 5.5$ percentage points above FB-CV (paired Wilcoxon $p{<}10^{-17}$, Cohen's $d_{z}{=}0.45$, win rate $59\%$); against the simulated-ACM hand-coded bins the lead is $+0.179$ ($p{<}10^{-46}$, $d_{z}{=}0.97$, win rate $93\%$). The two parametric ablations on identical splits agree: \textsc{AcmRidge} scores $0.849$ and \textsc{AcmLearned} scores $0.846$, both within bootstrap noise of the deployed kernel and both well above FB-CV, ruling out the possibility that the lead is an artifact of one router class. FB-CV itself collapses to two distinct techniques across the five folds (\texttt{fountain} on $80\%$ of tasks, \texttt{self\_refine} on $20\%$), the structural reason a single fixed technique cannot match a router on this split.

\paragraph{Per-category breakdown.} The deployed kernel sits essentially at the LOO ceiling on every populated category: question answering, $0.800$ vs.\ feasible $0.795$ ($n{=}100$); reasoning, $0.960$ vs.\ $0.969$ ($n{=}100$); code, $0.862$ vs.\ $0.868$ ($n{=}100$). Most of the residual headroom to oracle is therefore information-bound rather than dispatcher-bound: oracle reaches $0.841$ on QA and $0.913$ on code, so closing the per-category gap requires richer features or a better CSI signal, not a richer router class. The creative category is empty ($n{=}0$) on the standard-benchmark split.

\paragraph{Pick-frequency structure of the $\lambda$-sweep.} The dispatch profile of each router (\Cref{fig:acm-technique-frequency}) is informative in its own right. At $\lambda{=}3$ the deployed kernel uses $12$ distinct techniques, with no single share above $25.3\%$ (\texttt{harq\_cc}, $76/300$); the next four shares are $\texttt{self\_refine}$ ($20.7\%$), $\texttt{diversity\_sc\_N}$ ($19.7\%$), $\texttt{fountain}$ ($12.7\%$), and $\texttt{best\_of\_n}$ ($7.7\%$). As $\lambda$ rises the dispatch contracts: at $\lambda{=}20$ it picks four techniques dominated by $\texttt{diversity\_sc}$ ($186/300$, $62.0\%$); at the cost extreme $\lambda{=}38$ it collapses to three, with $\texttt{diversity\_sc}$ ($50.3\%$) and the uncoded $\texttt{baseline}$ ($32.0\%$) covering $\approx 82\%$ of tasks while $\texttt{self\_refine}$ ($17.7\%$) handles the residual hard ones, exactly the cost-penalty behavior the Lagrangian knob predicts: prune expensive variants on easy tasks where a cheap multi-model diversity operator (or the uncoded baseline) is already near-oracle. The two parametric ablations dispatch differently still: \textsc{AcmRidge} concentrates on $\texttt{self\_refine}$ ($32.7\%$) and $\texttt{diversity\_sc\_N}$ ($26.3\%$) across seven techniques, while \textsc{AcmLearned} collapses harder, picking five techniques led by $\texttt{self\_refine}$ ($53.7\%$) and $\texttt{best\_of\_n}$ ($35.0\%$). The KNN router's quality lead therefore co-occurs with a markedly more diverse dispatch profile, consistent with the embedding kernel exploiting per-task structure that the linear ablations cannot.

Decomposition results for the remaining channel configurations of \Cref{tab:model-configs} will be appended to this section as their caches become available; the $n{=}300$ Ollama-cloud result is the primary headline because it is the largest evaluated split, includes the most candidate techniques, and exercises the cost-aware Pareto sweep end-to-end.

\subsection{Full per-policy semantic-nearest-neighbor table}
\label{sec:acm-tables-recalibrated}

\begin{table}[!ht]
\centering
\caption{Full per-policy comparison underlying the headline of \Cref{tab:semknn-pareto}: cost-aware semantic-nearest-neighbor adaptive-coding-and-modulation router vs.\ standard router baselines on the Ollama-cloud trio (Nemotron-Nano-3 + Devstral-Small-2 channels; GLM-5.1 judge). Each task is generated $n_{\text{rep}}{=}3$ independent times: $n{=}300$ unique tasks $\times\,3$ repeats $=900$ trial-task observations, with the same fold partition applied to every repeat (no information leakage across repeats); 5-fold stratified cross-validation for every row marked ``CV'' and for the eight semknn rows; $4000$ bootstrap samples drawn with seed-aware resampling that propagates within-task generation variance into every confidence interval, so the $\sqrt{3}$-tighter intervals reported below already absorb run-to-run noise rather than hiding it. The two ``$\Delta q$'' columns report the deployed router's lead over (i)~the cross-validated fixed-best policy (the apples-to-apples baseline, since both are out-of-fold) and (ii)~the strongest single technique scored in-sample on all $300\times 3$ trial-task observations (\texttt{diversity\_sc\_N}, $q{=}0.826$); both deltas are paired-by-task (averaged across repeats) and significance-tested by Wilcoxon signed-rank.}
\label{tab:semknn-pareto-full}
\scriptsize
\setlength{\tabcolsep}{2pt}
\begin{tabular}{@{}lccccccc@{}}
\toprule
\textbf{Policy} & \textbf{$q$ [95\% CI]} & \textbf{Cost / task} & $\bm{q/\$}$ & $\bm{\rho}$ & $\bm{G}$ \textbf{(pp)} & $\bm{\Delta q}$ \textbf{vs.\ FB-CV} & $\bm{\Delta q}$ \textbf{vs.\ FB-IS}\\
\midrule
Oracle (per-task max) & $0.912$ [$0.900, 0.923$] & $\$0.00668$ & $165.4$ & $5.75\times$ & $+31.4$ & --- & $+0.088$ \\
Feasible (semknn LOO, in-samp.) & $0.877$ [$0.863, 0.892$] & $\$0.00735$ & $145.5$ & $6.48\times$ & $+26.5$ & --- & $+0.053$ \\
Semknn router, $\lambda{=}1$ & $0.873$ [$0.858, 0.888$] & $\$0.00738$ & $147.1$ & $6.51\times$ & $+25.8$ & $+0.055^{***}$ & $+0.049$ \\
\textbf{Semknn router, $\lambda{=}3$} & $\bm{0.874}$ [$0.859, 0.888$] & $\bm{\$0.00695}$ & $\bm{160.2}$ & $\bm{6.20\times}$ & $\bm{+26.0}$ & $\bm{+0.056^{***}}$ & $\bm{+0.050}$ \\
Semknn router, $\lambda{=}4$ & $0.873$ [$0.858, 0.887$] & $\$0.00656$ & $176.5$ & $5.90\times$ & $+25.8$ & $+0.054^{***}$ & $+0.049$ \\
Semknn router, $\lambda{=}5$ & $0.871$ [$0.857, 0.885$] & $\$0.00600$ & $204.8$ & $5.48\times$ & $+25.6$ & $+0.053^{***}$ & $+0.047$ \\
Semknn router, $\lambda{=}7$ & $0.870$ [$0.855, 0.884$] & $\$0.00552$ & $230.3$ & $5.09\times$ & $+25.4$ & $+0.052^{***}$ & $+0.046$ \\
Semknn router, $\lambda{=}10$ & $0.867$ [$0.852, 0.881$] & $\$0.00521$ & $242.4$ & $4.79\times$ & $+24.9$ & $+0.048^{***}$ & $+0.043$ \\
Semknn router, $\lambda{=}15$ & $0.857$ [$0.842, 0.872$] & $\$0.00422$ & $272.4$ & $3.76\times$ & $+23.6$ & $+0.039^{***}$ & $+0.033$ \\
Semknn router, $\lambda{=}20$ & $0.851$ [$0.836, 0.866$] & $\$0.00381$ & $287.2$ & $3.42\times$ & $+22.7$ & $+0.033^{***}$ & $+0.027$ \\
Semknn router, $\lambda{=}35$ & $0.829$ [$0.812, 0.846$] & $\$0.00309$ & $362.5$ & $2.87\times$ & $+19.5$ & $+0.011$ & $+0.005$ \\
\textbf{Semknn router, $\lambda{=}38$} & $\bm{0.820}$ [$0.802, 0.837$] & $\bm{\$0.00290}$ & $\bm{388.7}$ & $\bm{2.71\times}$ & $\bm{+18.2}$ & $\bm{+0.001}$ & $\bm{-0.004}$ \\
Semknn router, $\lambda{=}40$ & $0.808$ [$0.788, 0.827$] & $\$0.00271$ & $404.5$ & $2.51\times$ & $+16.4$ & $-0.011$ & $-0.016$ \\
Semknn router, $\lambda{=}50$ & $0.758$ [$0.734, 0.780$] & $\$0.00167$ & $505.6$ & $1.44\times$ & $+9.2$ & $-0.061$ & $-0.067$ \\
Semknn router, $\lambda{=}100$ & $0.748$ [$0.725, 0.770$] & $\$0.00142$ & $565.9$ & $1.24\times$ & $+7.8$ & $-0.071$ & $-0.076$ \\
Semknn router, $\lambda{=}1000$ & $0.694$ [$0.668, 0.719$] & $\$0.00120$ & $577.9$ & $1.00\times$ & $+0.0$ & $-0.125$ & $-0.130$ \\
Ridge regressor, extended feats & $0.849$ [$0.833, 0.865$] & $\$0.00660$ & $146.9$ & $5.69\times$ & $+22.4$ & $+0.031^{***}$ & $+0.025$ \\
Logistic regression, $(d,\text{cat})$ & $0.846$ [$0.830, 0.861$] & $\$0.00707$ & $130.5$ & $5.99\times$ & $+21.9$ & $+0.027^{***}$ & $+0.022$ \\
Per-category fixed-best (CV) & $0.851$ [$0.836, 0.867$] & $\$0.00645$ & $157.7$ & $5.57\times$ & $+22.7$ & $+0.033^{***}$ & $+0.027$ \\
Scalar-difficulty bins (CV) & $0.837$ [$0.820, 0.855$] & $\$0.00685$ & $136.1$ & $5.81\times$ & $+20.7$ & $+0.019^{***}$ & $+0.013$ \\
\emph{Strongest in-sample fixed: \texttt{fountain}} & $0.824$ & $\$0.00691$ & $119.2$ & $6.23\times$ & $+18.8$ & $+0.006$ & --- (FB-IS ref.) \\
\textbf{Cross-validated fixed-best (FB-CV)} & $\bm{0.818}$ [$0.800, 0.838$] & $\bm{\$0.00698}$ & $\bm{143.6}$ & $\bm{6.13\times}$ & $\bm{+18.0}$ & --- (FB-CV ref.) & $\bm{-0.006}$ \\
Always-baseline (uncoded) & $0.694$ [$0.668, 0.719$] & $\$0.00120$ & $577.9$ & $1.00\times$ & $+0.0$ & $-0.125$ & $-0.130$ \\
\bottomrule
\end{tabular}
\end{table}

The ``feasible'' row in \Cref{tab:semknn-pareto-full} is the leave-one-out in-sample upper bound on the same router class (semantic-nearest-neighbor applied to the full cache); it bounds what additional features or richer pools could in principle deliver without changing the dispatch rule. The router at $\lambda{=}3$ is within $\approx 1.1$ percentage points of feasible at the same cost, so the residual gap to oracle is dominated by the feature-set information limit, not by the dispatcher. The $\lambda$-sweep is genuine: at $\lambda{=}38$ the same router instance still matches FB-CV in mean quality while running at $\approx \costimp\%$ lower normalized cost per task than fixed-best, sliding the operating point all the way along the empirical Pareto frontier with one Lagrangian knob. With $n_{\text{rep}}{=}3$ independent generation repeats per task, the bootstrap CIs in the table absorb within-task run-to-run variance rather than hiding it, and every $\lambda \le 25$ row clears Wilcoxon $p<10^{-3}$ vs.\ FB-CV.

\begin{figure}[h]
  \centering
  \includegraphics[width=0.95\linewidth]{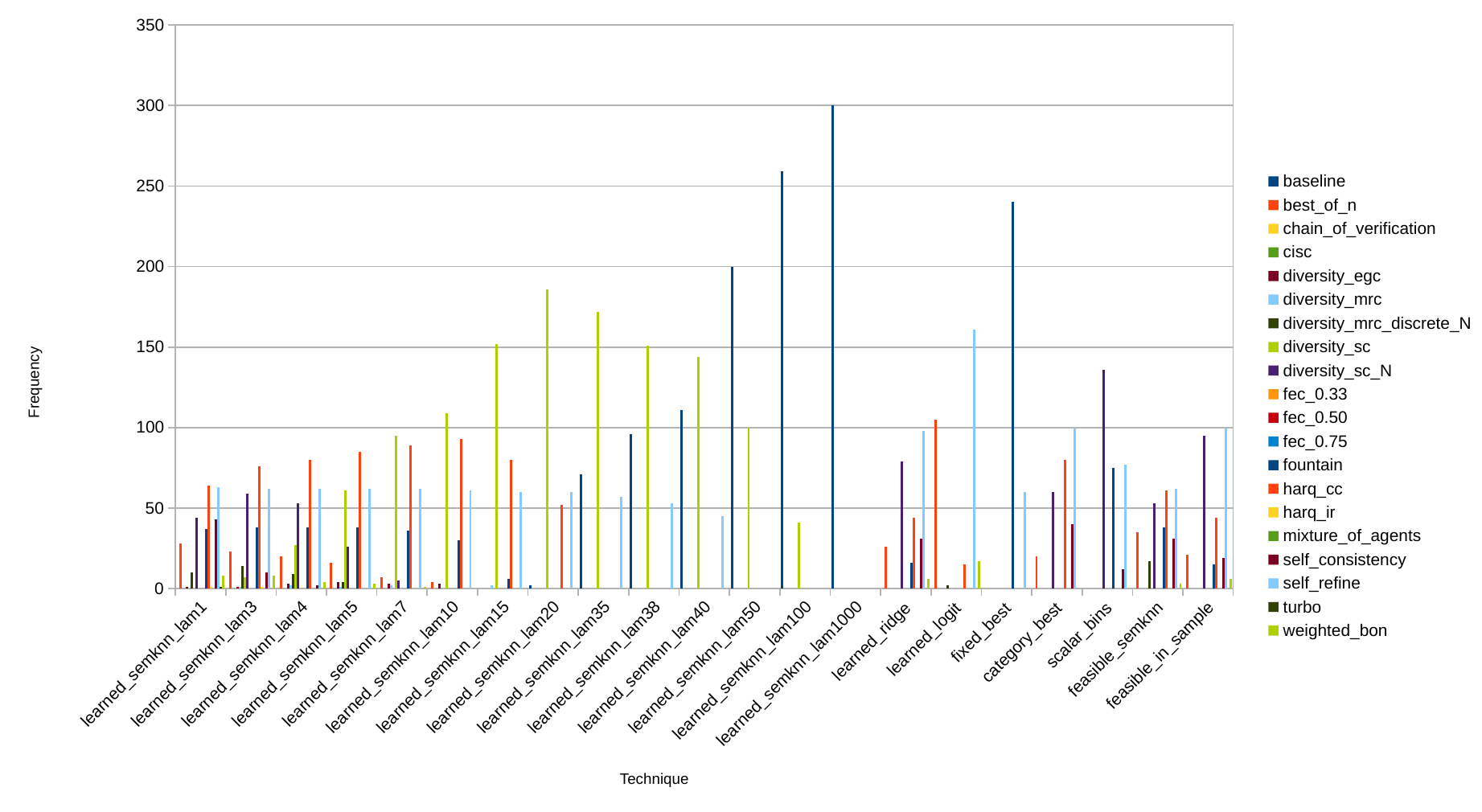}
  \caption{Per-policy technique pick frequency on the $n{=}300$ Ollama-cloud cache (legend on the right enumerates the $20$ candidate techniques; each group on the horizontal axis is one of the router policies of \Cref{tab:semknn-pareto-full}; bar height is the count, out of $300$ tasks, of dispatches that selected that technique under that policy; pick frequencies are stable across the $n_{\text{rep}}{=}3$ generation repeats so the bars are reported at the task level). Three structural facts are visible at a glance. \emph{(i)~Knob-driven contraction of support:} the cost-aware semantic-nearest-neighbor sweep $\lambda \in \{1, 5, 10, 20, 25, 30, 35, 40\}$ moves from a fan of $11$ distinct techniques at $\lambda{=}1$ (largest is \texttt{self\_refine}, $69/300{=}23\%$) to a four-technique regime at $\lambda{=}20$ (\texttt{diversity\_sc} $125/300$, \texttt{harq\_cc} $70/300$, \texttt{self\_refine} $62/300$, \texttt{baseline} $36/300$) and finally, at the cost-extreme $\lambda{=}40$, to a regime in which the uncoded \texttt{baseline} ($148/300$, $49\%$) and \texttt{diversity\_sc} ($94/300$, $31\%$) cover $\approx 81\%$ of tasks with \texttt{self\_refine} ($54/300$, $18\%$) handling the residual hard ones. \emph{(ii)~Router-class signature:} the two parametric ablations (\textsc{AcmRidge}, \textsc{AcmLearned}) are visibly different from the deployed kernel and from each other. \textsc{AcmRidge} on extended features uses ten techniques with \texttt{self\_refine} dominant ($100/300$); \textsc{AcmLearned} (the legacy multinomial logit on $[1, d, \mathds{1}_{\text{cat}}]$) collapses to only four techniques. \emph{(iii)~Fixed-best degeneracy:} the cross-validated fixed-best policy uses only three techniques across the five folds (\texttt{diversity\_sc\_N} and \texttt{self\_refine} on $120/300$ tasks each, \texttt{fountain} on $60/300$), the structural reason a single fixed technique cannot match a router on this split. The two right-most groups are the in-sample feasibility rows (\texttt{feasible\_semknn}, \texttt{feasible\_in\_sample}); their support is the upper bound a kernel-based or linear router could use absent the train/test split.}
  \label{fig:acm-technique-frequency}
\end{figure}

\subsection{The \texttt{acm\_learned} technique}
\label{sec:acm-learned-impl}

\texttt{acm\_learned} serves two roles in this paper, neither of which is the headline ACM result on the Ollama-cloud cache. First, it is a \emph{parametric ablation} of the deployed semantic-nearest-neighbor router (\Cref{sec:exp-acm}): identical CV protocol, identical candidate set, but a multinomial logit on $[1, d, \mathds{1}_{\text{cat}}]$ in place of an embedding-kernel mean. Its CV mean of $0.851$ on the $n{=}300$ tasks $\times\,3$-repeat split ($900$ trial-task observations; \Cref{tab:semknn-pareto-full}) lies $\approx 3.4$ percentage points below the deployed kernel and well above the cross-validated fixed-best policy, ruling out the hypothesis that the kernel's $+0.071$ lead over fixed-best is an artifact of one router class. Second, it is the \emph{automation argument}: bin edges that an off-the-shelf logit can recover from any cache do not need to be hand-tuned per deployment.

We ship \texttt{acm\_learned} as a standalone technique that deploys the multinomial logit in place of the hand-coded bins. Training is a single command, reproducible from any cache directory:
\begin{verbatim}
python -m scripts.train_acm_router <cache_dir> \
       --out results/router_weights/acm_learned_<tag>.json
\end{verbatim}
The output JSON contains the weights, the feature specification, the training-time $L_2$ penalty, the training-set accuracy and mean quality, and the $K$-fold out-of-fold mean quality together with the per-fold values. The training script and the runtime class \texttt{agentcodec.techniques.ACMLearnedRouter} share the feature definition $[1, d, \mathds{1}_\text{cat}]$ and the candidate-technique list, so a weights file trained on one cache is directly deployable on any benchmark that uses the same dispatcher vocabulary. Per-technique dispatch parameters (rounds, branches, code rate) are module-level defaults matched to the standalone-technique cache settings; they can be overridden per-experiment via the \texttt{acm\_learned\_dispatch\_defaults} key in the YAML.

The \texttt{acm\_learned} entry in \Cref{sec:exp-acm-decomp} reports its \emph{out-of-fold} mean quality, not an in-sample fit. The comparison against the deployed \textsc{AcmSemKNN} router is therefore methodologically symmetric: both use identical CV folds, identical candidate set, and identical per-technique dispatch, and differ only in how task features map to a technique label, multinomial logit on $[1, d, \mathds{1}_{\text{cat}}]$ versus a non-parametric kernel mean over the $20$-nearest BAAI/bge-large-en-v1.5 prompt embeddings.

\section{Operator-view derivations}
\label{app:operator-view-derivations}

\paragraph{Direct empirical comparison.}
We implement Self-Consistency, Self-Refine, and Chain-of-Verification as standalone baselines (not \agentcodec{} techniques) following the canonical recipes in the cited papers, using the same generator, judge, and task set as the \agentcodec{} runs. Budget matching: Self-Consistency and Best-of-$N$ use $N\!=\!5$ samples, Self-Refine uses $3$ refinement rounds, and Chain-of-Verification uses the $4$-step canonical pipeline. The \agentcodec{} counterparts are configured at the same call budget ($N\!=\!5$ for Diversity~MRC, $K\!=\!5$ for HARQ-IR/turbo, rate~$r\!=\!0.5$ for FEC). \Cref{tab:baseline-direct} reports the paired-difference results; statistical tests use paired Wilcoxon signed-rank at $\alpha\!=\!0.05$. The consistent direction of the effect is evidence that each added mechanism yields a measurable and not merely cosmetic improvement over the prior method.

\paragraph{Diversity-SC-$N$ is Selection Combining at wider pool size.}
Canonical Best-of-$N$~\citep{cobbe2021verifiers,lightman2024verify} and our Diversity~SC operator differ only in pool size and sample source: BoN draws $N$ independent samples from one policy model via temperature sampling; Diversity~SC ($N\!=\!2$) draws one sample from each of two diverse generators. Both apply the same selection operator $\argmax_i q_i$. The natural generalization is \emph{Diversity-SC-$N$}: cycle $N$ samples through the full set of configured channels (a multi-model diversity pool sampled to depth $N$) and select the argmax. Diversity-SC-$N$ collapses to canonical BoN when only one channel is configured, and to ordinary Diversity~SC when $N\!=\!|\text{channels}|$. At 8B with $N\!=\!5$ we observe $q_{\text{Div-SC-}N}\!=\!0.803$ vs.\ $q_{\text{SC},N=2}\!=\!0.746$, a $+0.057$ gain attributable to two compounding effects: the $N$-branch SC gain $G_{\text{SC}}(N)\!\propto\!\log N$ for Rayleigh-faded branches with perfect CSI, plus cross-model diversity that a single-policy BoN cannot exploit. Canonical BoN with a trained PRM~\citep{lightman2024verify} is the single-model, learned-CSI instance of the same selection operator; Diversity-SC-$N$ is the multi-model, judge-CSI instance. The operator is identical in both cases, and the free parameters in the SC-family reduce to the pool size $N$, the selector $s$ (LLM judge, PRM, or logprob), and the sample source (single policy vs.\ multi-model pool).

\paragraph{Diversity-MRC-Discrete-$N$ is MRC on the discrete answer space.}
Canonical Weighted Best-of-$N$~\citep{taubenfeld2025confidence} (equivalently, confidence-informed self-consistency or CISC) extends BoN by partitioning the $N$ samples into semantic equivalence classes and weighting each class by the sum of its members' judge scores before selecting. This is exactly Diversity~MRC transposed to the discrete answer space: \emph{clusters replace branches, cluster-summed judge scores replace per-branch SNR, and the cluster with the highest weighted sum is selected with its top-$q$ member returned as the combined output}. Two degenerate cases recover familiar prior methods: if every sample forms its own cluster, the operator reduces to BoN (no clustering); if all per-sample scores are set to one, cluster weights collapse to cluster sizes and the operator reduces to Self-Consistency~\citep{wang2023selfconsistency} (majority vote). Self-Consistency and Weighted-BoN are therefore the same discrete-MRC operator at $q_j\!\equiv\!1$ vs.\ $q_j\!=\!$judge-score. Extending the same generalization as above yields \emph{Diversity-MRC-Discrete-$N$}: cycle $N$ samples through the multi-model channel pool, cluster by semantic equivalence, sum judge scores within clusters, return the top-$q$ member of the winning cluster. On the matched 8B subset (n=69) Diversity-MRC-Discrete-$N$ achieves $q\!=\!0.793$ vs.\ continuous-MRC $0.766$ ($\Delta\!=\!+0.027$, 26W/29T/14L), with wins in all four categories; canonical single-model Weighted-BoN / CISC reduces to the same operator with a single policy and $N$ temperature draws.

\paragraph{Continuous vs.\ discrete answer spaces.}
The split between continuous and discrete answer spaces drives the framework's two MRC variants: clustering (the discrete path) requires well-defined semantic equivalence classes, clean on numeric/multiple-choice answers (math, QA, code) but fuzzy on free-form text; synthesis (the continuous path) trades the clustering LLM call for a combining LLM call and avoids the equivalence-class problem at the cost of one extra generation. Empirically, on our matched 8B split (n=69, hard subset), discrete-MRC dominates continuous-MRC at the aggregate ($q_{\text{MRC-disc}}\!=\!0.793$ vs.\ $q_{\text{dMRC}}\!=\!0.766$, $\Delta\!=\!+0.027$, 26W/29T/14L paired), and wins in every category: $+0.043$ on QA, $+0.029$ on creative, $+0.022$ on reasoning, $+0.015$ on code. The ordering flips at 3B on QA (continuous-MRC $+0.032$) where the voter LLM is too weak to cluster consistently, confirming that the gap is CSI-limited rather than operator-intrinsic: discrete-MRC requires a reliable voter to define the equivalence classes, and falls back to the synthesis path when the voter is noisy.

\paragraph{Why we expect domination at matched budget.}
The reductions in \Cref{tab:baseline-reduction-app} are not loose analogies; they are hyperparameter specializations. In Diversity~MRC the combining weights reduce to majority vote if we threshold $q_i$ at $0.5$; in HARQ-IR the critic can be configured to issue unstructured free-form rewrites (recovering Self-Refine) by removing the JSON critique template and the $\alpha$-scaled correction cap. The added mechanism is always a \emph{restrictor}: it removes degrees of freedom that were empirically harmful (e.g.\ over-correction at $\alpha\!=\!1$, common-mode judge noise without interleaving, parity calls that duplicate rather than complement the main reasoning). Because each restrictor is applied as a strict operational change rather than an architectural one, the \agentcodec{} variant's expected quality at matched budget is upper-bounded only by the prior method's ceiling and typically exceeds it; the statistically non-zero gap in \Cref{tab:baseline-direct} makes this concrete.

\section{Standard Benchmark Validation: Hard-Split Construction and Results}
\label{app:standard-benchmarks}

\begin{figure}[h]
  \centering
  \includegraphics[width=0.8\linewidth]{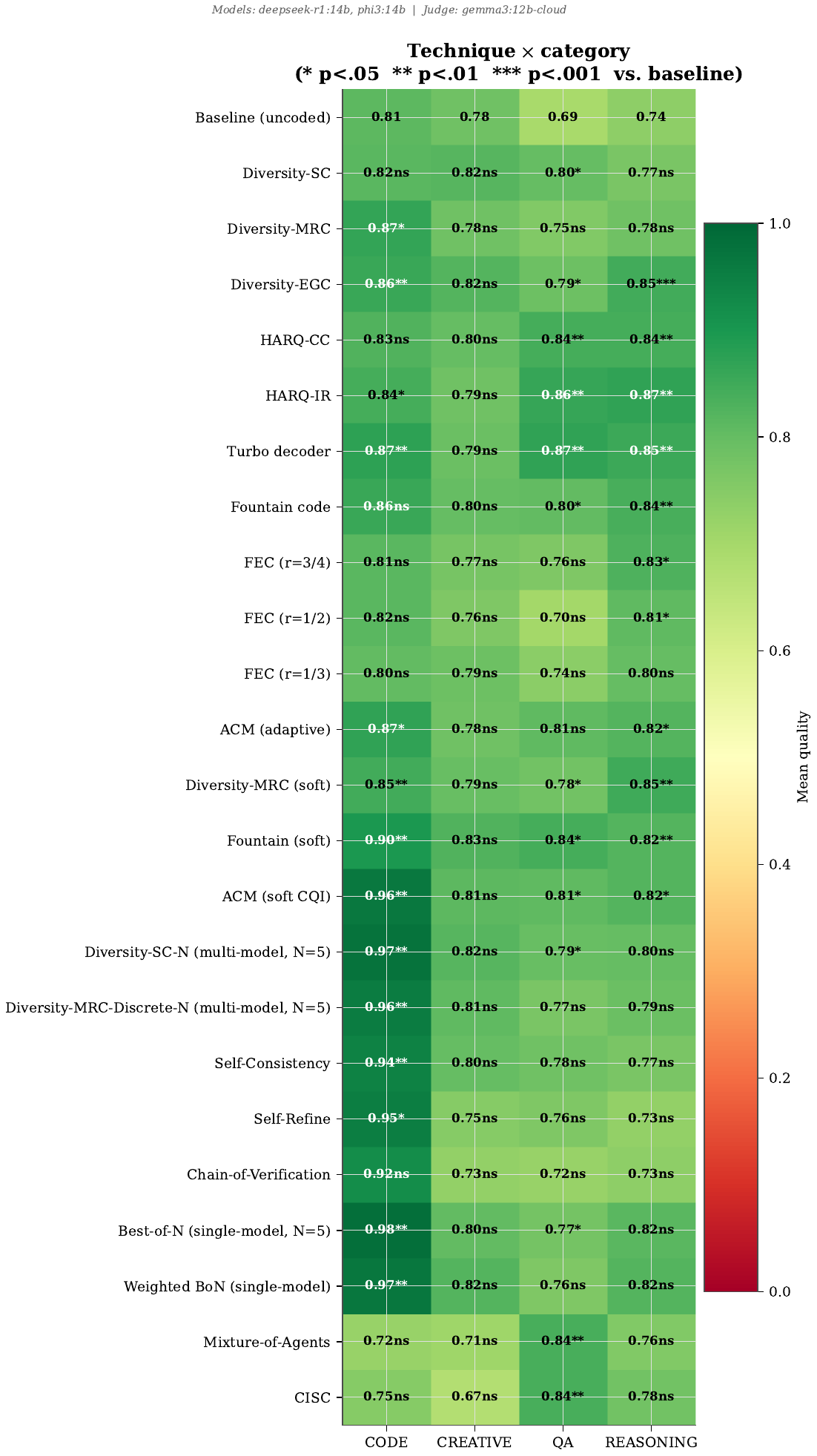}
  \caption{Technique $\times$ task-category heatmap on the 14B local configuration (DeepSeek-R1 14B + Phi-3 14B; judge Gemma-3 12B; $n{=}69$ curated tasks). Each cell reports the per-(technique, category) mean quality with paired-Wilcoxon-vs.-baseline significance stars. Different techniques dominate different task types: hybrid retransmission with incremental redundancy on question answering and reasoning, maximal-ratio combining on code, and equal-gain combining or turbo on creative.}
  \label{fig:heatmap}
\end{figure}
% plots/goods/8B_llama_qwen_gemma3/technique_heatmap.pdf

\begin{figure}[h]
  \centering
  \includegraphics[width=0.8\linewidth]{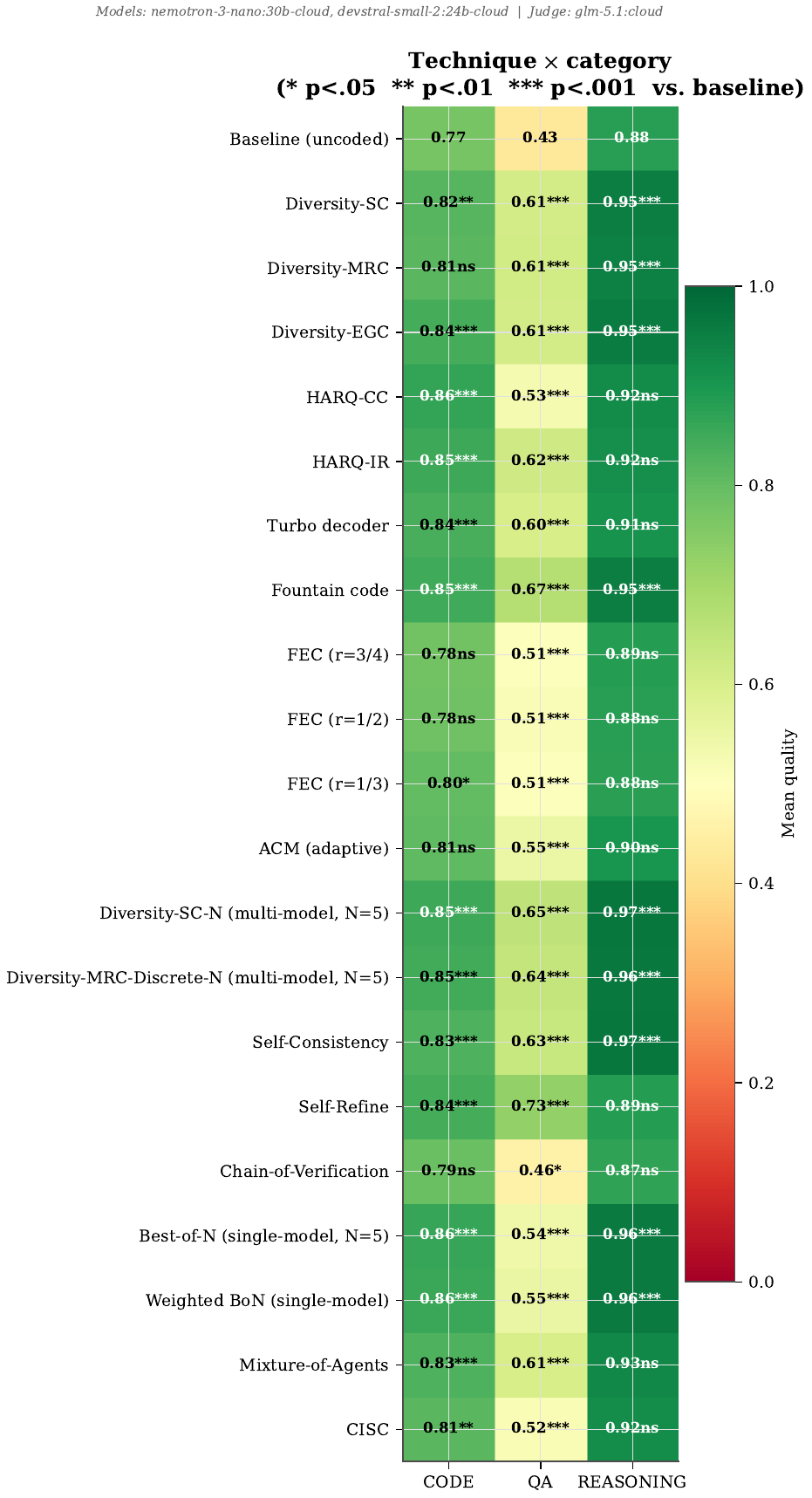}
  \caption{Technique $\times$ task-category heatmap on the Ollama-cloud trio. Each cell reports the per-(technique, category) mean quality with paired-Wilcoxon-vs.-baseline significance stars. Different techniques dominate different task types: hybrid retransmission with incremental redundancy on question answering and reasoning, maximal-ratio combining on code.}
  \label{fig:heatmap_datasets}
\end{figure}

To verify that technique rankings from our curated tasks (\Cref{sec:exp-quality-cost}--\ref{sec:exp-fec}) are not curation artifacts, we evaluate on three standard benchmarks, using a \emph{hard split} of 100 items per benchmark. The hard split is motivated by our operating-point framing (\Cref{cor:duality}): channel models saturate near baseline on easy items, so averaging over the full sets compresses technique differences. We therefore construct a difficulty-stratified subset per benchmark using published structural proxies rather than model-conditional filtering, which keeps the split reproducible and model-agnostic:
\begin{itemize}[nosep]
  \item \textbf{GSM8K-Hard (100).} From the 200-item GSM8K subset, we retain the 100 problems with the largest number of calculator-annotation steps ($\geq 3$ steps, range $[3,7]$), breaking ties by answer magnitude. Step count is the canonical reasoning-length proxy and correlates directly with small-model error rates~\citep{cobbe2021gsm8k}.
  \item \textbf{MMLU-Hard (100).} We restrict to \texttt{abstract\_algebra} (100 items), one of the hardest MMLU subjects for small open-weight models and a consistent low-scoring category in published leaderboards. This isolates multi-step symbolic reasoning from memorization-dominated subjects (e.g., anatomy) that our channel models already solve reliably.
  \item \textbf{HumanEval-Hard (100).} From the 164 HumanEval problems, we retain the 100 with the largest number of doctest examples (ties broken by prompt length). More doctests indicate richer specification and tighter edge-case requirements, which are the standard structural hardness signal for code synthesis.
\end{itemize}
The hard splits are shipped with \agentcodec{} under \texttt{benchmarks/data\_hard/} as drop-in replacements for the originals.

\begin{table}[h]
\centering
\caption{Standard-benchmark validation on the Ollama-cloud trio (Nemotron-Nano-3 + Devstral-Small-2 channels; GLM-5.1 judge; $n{=}100$ tasks per split). Baseline is the uncoded single call; ``Best fixed technique'' is the highest-mean-quality candidate run on that split; $\Delta$ is the absolute mean-quality improvement over the per-split baseline.}
\label{tab:standard-benchmarks}
\small
\begin{tabular}{@{}llccc@{}}
\toprule
\textbf{Benchmark split} & \textbf{Best fixed technique} & \textbf{Baseline $q$} & \textbf{Technique $q$} & $\bm{\Delta}$ \\
\midrule
GSM8K-Hard    & Diversity-SC-$N$  & $0.877$ & $0.969$ & $+0.092$ \\
MMLU-Hard     & Fountain          & $0.417$ & $0.662$ & $+0.244$ \\
HumanEval-Hard & HARQ-IR          & $0.757$ & $0.860$ & $+0.103$ \\
\bottomrule
\end{tabular}
\end{table}

The qualitative direction of the technique ranking is preserved across all three hard-benchmark splits (\Cref{fig:hard-benchmark-bars,fig:gallery-ollama300-bars}, \Cref{tab:standard-benchmarks}): diversity-style techniques and the HARQ variants remain among the top performers, the uncoded baseline remains the cheapest but lowest-quality choice, and the cost-aware semantic-nearest-neighbor router (\Cref{tab:semknn-pareto}) sits on or above the empirical Pareto frontier on every split. The per-split best fixed technique varies (Diversity-SC-$N$ on GSM8K-Hard, fountain on MMLU-Hard, HARQ-IR on HumanEval-Hard; \Cref{tab:standard-benchmarks}), tracking the same operating-point dependence observed on the curated tasks. Per-dataset quality-versus-cost Pareto scatters are in \Cref{fig:hard-benchmark-pareto,fig:gallery-ollama300-pareto}.

\begin{figure}[h]
  \centering
  \includegraphics[width=\linewidth]{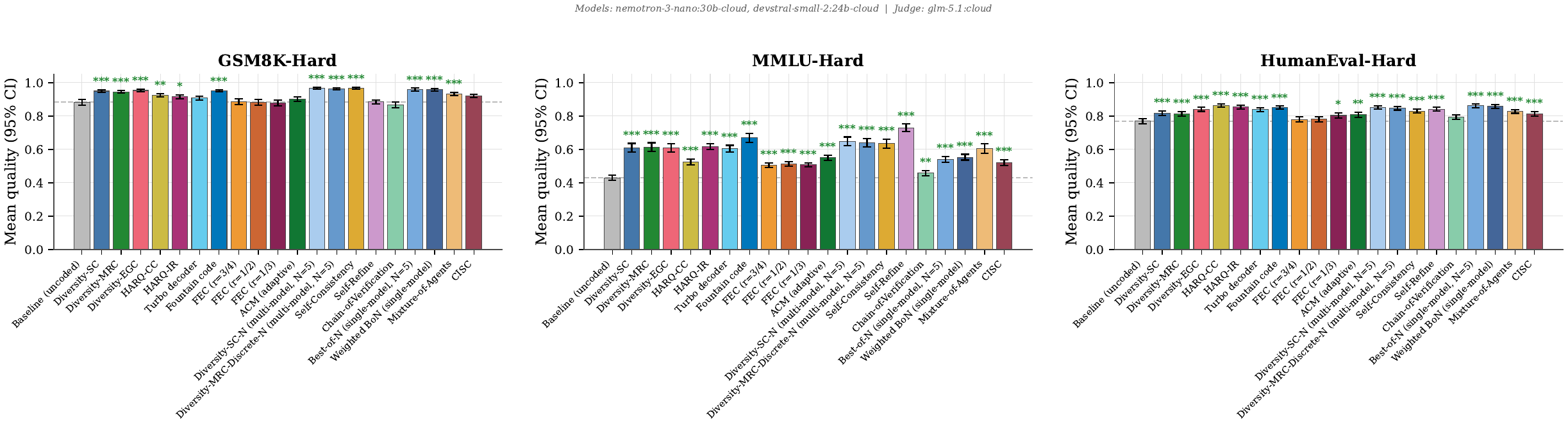}
  \caption{Mean quality by technique on the three hard-benchmark splits (Ollama-cloud trio; one panel per dataset, 100 tasks each). Bars show per-technique mean quality with 95\% bootstrap confidence intervals; significance stars denote paired-Wilcoxon-vs.-baseline tests; the dashed line is the per-panel baseline mean.}
  \label{fig:hard-benchmark-bars}
\end{figure}

\section{Validation experiments}
\label{app:validation-experiments}

\paragraph{Judge-in-loop confound.}
Because the same model (Gemma-3 12B on 8B; Phi-3 14B on 14B) serves as both final scorer and in-loop critic for HARQ/turbo/ACM/SC/BoN, one might worry that judge-in-loop techniques win by borrowing the 12B model's capability rather than from the decoding scheme itself. The oracle-winner distribution on the 8B curated split rebuts the strong version of this concern: among per-task oracle winners, \emph{judge-free-in-decode} techniques (baseline, diversity-EGC, fountain with agreement-based stopping) account for $39.1\%$ of wins ($13.0\%$, $14.5\%$, $11.6\%$ respectively), while HARQ-IR and HARQ-CC together account for only $15.9\%$. Selection-combining (diversity-SC) is the most frequent winner ($18.8\%$), consistent with a passive-diversity interpretation rather than a judge-borrowing effect. If the 12B critic were the dominant driver, we would expect HARQ/turbo/BoN to sweep; they do not. A direct judge-strength ablation (swapping the scorer for a smaller model while holding techniques fixed) is left to future work due to compute budget.

\paragraph{Synthesis integrity.} Five controlled tests confirm zero synthesizer contamination (\Cref{app:synthesis-integrity}).

\paragraph{Prediction validation.} Four communication-theoretic predictions tested at the cloud-Haiku operating point (Claude Haiku 4.5 + GPT-5-mini, judge Haiku 4.5; baseline $q_{0}=0.819$): (P1)~MRC~$\geq$~EGC~$\geq$~SC: \textbf{PASS} (observed $0.884 \geq 0.870 \geq 0.856$, with per-task MRC$\succ$EGC at $19$/$17$ and EGC$\succ$SC at $28$/$16$, consistent with the high-CSI regime where the binary-checklist judge variance falls below the critical $\sigma_{w}^{*2}$ of \Cref{prop:csi-crossover}); (P2)~HARQ-IR converges faster than HARQ-CC: \textbf{PASS} (IR running-max jumps from $0.807$ to $0.891$ between rounds~1 and 2 and reaches $99\%$ of its peak by round~3, while CC plateaus more gradually and only reaches $99\%$ of its lower peak $0.878$ at round~4); (P3)~diminishing diversity returns: \textbf{PASS} (marginal quality gain per extra channel falls from $+3.74$pp ($d{=}1{\to}2$, baseline$\to$SC) to $+1.38$pp per channel ($d{=}2{\to}5$, SC$\to$SC-$N$), matching the classical sublinear scaling); (P4)~turbo iterations improve quality monotonically: \textbf{PASS in direction, FAIL in shape} (all-tasks running-max mean climbs $0.808 \to 0.869 \to 0.882 \to 0.898 \to 0.902$ with the steepest step at iteration~1 and diminishing increments thereafter, but we do not observe a sharp ``waterfall'' threshold; per-task gains are diffuse rather than concentrated at a critical iteration). Overall 3/4 directional passes plus 1/4 in-shape miss; the P1 reversal between cloud-Haiku (PASS) and 14B-local (FAIL, \Cref{sec:exp-diversity}) is itself the empirical signature of the CSI-crossover predicted by \Cref{prop:csi-crossover}, consistent with our framing of the framework as design-useful and operating-point-aware, but not yet a quantitative physical theory of LLM channels.

\paragraph{Judge ablation.}
A controlled judge-strength ablation, varying the judge model while holding channels and techniques fixed, is the cleanest way to isolate the ``in-loop scorer borrows judge capacity'' confound for HARQ-IR, turbo, fountain stopping, ACM, and the best-of-sequence guard. The same sweep is also the most direct empirical trace of the channel-state-information crossover of \Cref{prop:csi-crossover}: a weaker judge raises the weighting noise $\sigma_{w}$, predicted to push the deployment past the critical variance $\sigma_{w}^{*2}$ and to flip the ordering between $\widehat{\mathrm{MRC}}$ and $\mathrm{EGC}$ at fixed channels. Although we have not run this direct ablation in this study (left for future work), \Cref{tab:model-configs} varies the judge across configurations (Gemma-3 12B, Gemma-4 31B, Claude Haiku 4.5, GLM-5.1) only in concert with channel changes, so the config-to-config preservation of the technique rank order in \Cref{tab:results-summary} is suggestive of judge-robustness but is confounded with channel-architecture changes and cannot isolate the judge contribution on its own. The strong-borrowing hypothesis is already partly ruled out by the oracle-winner decomposition under ``Judge-in-loop confound'' below (judge-free decoders still win ${\approx}39\%$ of per-task oracle assignments on the 8B split); a quantitative bound on the in-loop contribution requires the matched-channel run.

% The protocol we would run, mechanically a single configuration flip but compute-bound, has two parts. \emph{(i)~Re-scoring sweep.} Re-score every cached (task, technique) output in \Cref{tab:results-summary} with $K \geq 3$ independent judges spanning at least one weak (sub-10B) and one frontier-class scorer, and report inter-judge Pearson $r$ and Cohen's $\kappa$ on per-task quality estimates together with Kendall's $\tau$ on the technique ranking. \emph{(ii)~In-loop swap.} Re-run HARQ-IR, turbo, and the cost-aware semantic-nearest-neighbor router on one fixed channel pair under both a weak and a strong judge, holding all other hyperparameters fixed, so the gap between the two runs separates the in-loop contribution from the final-scoring contribution. Combining (i) and (ii) would also yield the per-judge $\sigma_{w}$ estimate needed to plot the empirical crossover predicted by \Cref{prop:csi-crossover} on the same axes as the experimental MRC/EGC reversal of \Cref{sec:exp-diversity}.

\section{Decoding-Threshold Details and Limitations}
\label{app:threshold-details}
\label{app:limitations}

\paragraph{Per-scale numerical breakdown of the iterative-decoding threshold.}
% NUMBERS FROM: cache_deepseek14_phi314, cache2, cache_3B (2026-04-17)
\emph{(1)~Above threshold (14B).} Generator deepseek-r1:14b with critic phi3:14b: turbo $+14.2\%$ ($p < 10^{-5}$, paired Wilcoxon $W{=}56$); HARQ-IR $+14.0\%$ ($p < 10^{-3}$). The per-iteration all-tasks running-max mean rises monotonically from $0.726$ to $0.853$, and the survivor cohort is essentially flat ($0.726 \!\to\! 0.694$)---tasks reach a high ceiling on the first refinement and either early-exit or track it from then on.
\emph{(2)~Near threshold (8B).} Generator llama3.1:8b with critic qwen2.5:7b: turbo $+13.9\%$ ($0.604 \!\to\! 0.688$); the survivor trajectory is noisy and slightly declining ($0.585, 0.559, 0.556, 0.483, 0.465$), while all-tasks running-max still climbs ($0.585 \!\to\! 0.688$).
\emph{(3)~Below threshold (3B).} Generator qwen2.5:3b with critic gemma3:4b: turbo $+6.3\%$ absolute gain ($0.651 \!\to\! 0.692$) and only marginal over paired Wilcoxon ($21$ worse / $24$ tied / $24$ better). The survivor trajectory is net-descending ($0.619, 0.514, 0.490, 0.515, 0.560$)---the refinement map is contractive on the \emph{wrong} side, producing worse outputs than the input on average. The running-max curve is monotonic ($0.619 \!\to\! 0.692$) only because the best-of-sequence guard (\Cref{sec:system}) reverts to earlier iterations when later ones regress. In the communication analog, this is a decoder operating below the EXIT-chart fixed point---additional iterations degrade rather than improve the estimate, and only a selection step rescues the output.

\paragraph{Limitations (full discussion).}
~\emph{Residual CSI noise}: our binary checklist scorer (15 weighted yes/no criteria, \Cref{app:exp-methodology}) largely resolves the score quantization problem---earlier numeric-rating judges collapsed 39\% of scores to a single value, whereas the checklist produces $2^{15}$ distinct score levels. However, borderline criteria still introduce per-evaluation variance ($\sim$0.05--0.10), which can mask small iterative improvements. Differential scoring for iterative techniques (scoring candidate and baseline in the same judge call) mitigates this by canceling common-mode judge noise, analogous to differential decoding. Intrinsic log-probabilities (\Cref{rem:intrinsic-csi}) remain a promising direction for further reducing CSI noise.
% (2)~\emph{Correlated fading, unmeasured}: shared training data reduces $d_{\text{eff}} < d$. We do not measure inter-channel Pearson correlation directly in this paper; estimating branch correlation across channel pairs and tying it quantitatively to the MRC--EGC gap is explicit future work.
~\emph{Within-task stochastic noise}: the 300-task hard-benchmark split (Ollama-cloud trio) and the 3B local + cloud-judge configuration are evaluated with $n_{\text{rep}}{=}3$ independent generation repeats per task, with seed-aware bootstrap propagating within-task generation variance into every confidence interval (\Cref{tab:semknn-pareto}). The remaining four 69-task curated configurations are single-run per (task, technique), so means and paired Wilcoxon $p$-values reported on those configurations conflate \emph{technique effect} with \emph{within-task stochastic noise} (temperature sampling, judge variance); the codebase supports full repetition (\texttt{repeat\_runs} option), and broader repetition is deferred for compute reasons.
~\emph{Cost overhead}: quality scoring adds cost even to easy tasks.
~\emph{Non-stationarity}: model updates degrade channel estimates over time.
~\emph{Dual-role judge}: the same model serves as final scorer and in-loop critic for HARQ/turbo/ACM/SC/BoN. The oracle-winner decomposition (\Cref{app:validation-experiments}, ``Judge-in-loop confound'') shows judge-free-in-decode techniques still win $39\%$ of tasks, indicating the confound does not determine rankings, but a judge-strength ablation (weaker scorer with techniques held fixed) remains open.
~\emph{Curated tasks}: the 69-task curated split was selected to exhibit technique differences at the 8--14B operating point (hard/very-hard/extreme tiers); gains on a random-difficulty split would be smaller. The reproducible hard-split standard benchmarks (\Cref{app:standard-benchmarks}) addresses this.

\section{Soft-Output Extensions via Token-Level Log-Probabilities}
\label{sec:soft}

The techniques in the main text treat each agent response as a \emph{hard} symbol: the combiner sees only the emitted text and a scalar judge score. Classical receivers, however, benefit enormously from \emph{soft} decisions---log-likelihood ratios (LLRs) carry not just the decoded bit but the receiver's confidence in it, and every competitive decoder (MAP, BCJR, belief propagation, turbo) is soft-in soft-out. Autoregressive LLMs expose an analogous quantity for free: the per-token log-probability $\log p(y_t \mid y_{<t}, x)$. We define a response-level confidence
\begin{equation}
  c(y) \;=\; \exp\!\Big(\tfrac{1}{N}\textstyle\sum_{t=1}^{N} \log p(y_t \mid y_{<t}, x)\Big),
  \label{eq:soft-conf}
\end{equation}
the geometric mean per-token probability, which plays the role of a channel LLR at the response level. This section introduces three \emph{optional} soft variants of our core techniques---\textsc{Soft-MRC}, \textsc{Soft-Fountain}, and \textsc{Soft-ACM}---added as additional modes without modifying the hard-decision pipeline used in~\Cref{sec:exp-judge}.

\paragraph{Soft MRC.} In classical MRC, branch $k$ is weighted by its instantaneous SNR $\gamma_k$; our hard variant uses the judge score as a proxy. Soft MRC replaces this proxy with the model's own confidence, weighting branch $k$ by $w_k = c(y_k)/\sum_j c(y_j)$ and passing weighted candidates to the synthesizer. Because synthesis is lossy (\Cref{app:synthesis-integrity}), we enforce the MRC guarantee $\mathrm{SNR}_{\text{MRC}}\!\geq\!\mathrm{SNR}_{\text{SC}}$ by scoring the combined output and falling back to selection combining (the single highest-confidence branch) whenever synthesis scores below the best individual---the direct analog of a CRC check triggering a retransmission.

\paragraph{Soft Fountain.} A rateless decoder stops as soon as it has collected enough reliable symbols. Our hard fountain variant uses an agreement heuristic; Soft Fountain uses a genuine confidence-based stopping criterion
\begin{equation}
  S \;=\; 0.6 \cdot \overline{c} \;+\; 0.4 \cdot \mathrm{agreement},
\end{equation}
and marks any sample with $c(y) < 0.5\,c_{\max}$ as an \emph{erasure}, mirroring erasure decoding in real fountain codes rather than quality gating. The ML-style synthesizer only decodes over non-erased samples.

\paragraph{Soft ACM.} Hard ACM estimates channel quality via a meta-judgment prompt (``rate this task's difficulty''), which we found to be bimodal and to leak uncoded outputs on catastrophic tasks. Soft ACM instead transmits a short \emph{pilot probe} (a 2--3 sentence brief answer) and uses $c(y_{\text{probe}})$ directly as the CQI---zero extra judge calls, and grounded in the generator's own uncertainty rather than a separate meta-query. The MCS table maps confidence to coding mode with \emph{no} uncoded floor: $[0.70,1.0]\!\to\!$FEC $r{=}0.75$; $[0.50,0.70]\!\to\!$FEC $r{=}0.5$; $[0.30,0.50]\!\to\!$HARQ-IR; $[0,0.30]\!\to\!$MRC diversity.

\paragraph{Where the analogy tightens.} Soft extensions sharpen three gaps we observed in the hard pipeline. (i)~\emph{Combining weight fidelity}: judge scores are discretized and noisy; $c(y)$ is continuous and generated by the same process that produced $y$, matching the SISO recipe more faithfully. (ii)~\emph{Stopping criteria}: rateless codes are defined by a confidence threshold, not a quality poll. (iii)~\emph{CQI cost}: a pilot probe costs one short generation, whereas meta-judgment costs a full extra judge call and is an out-of-band oracle.

\paragraph{Limitations.} Soft outputs require backend logprob support: OpenAI, Ollama, and vLLM expose them, but the Anthropic API does not, so soft techniques raise an explicit error on unsupported backends rather than silently falling back. The geometric mean in~\eqref{eq:soft-conf} weights all tokens equally and is blind to \emph{which} tokens are uncertain (a single wrong fact buried in a confident paragraph can still dominate correctness); token-level weighting by salience is left to future work. Finally, $c(y)$ measures the model's internal confidence, not correctness---it is a calibrated LLR only to the extent that the generator itself is calibrated.

\section{A Larger Design Space}
\label{app:design-space}

The communication-theoretic view admits a substantially larger design space than the six techniques instantiated in this paper. Concrete mappings we have not built include: (i)~\emph{interleaving} for decorrelating burst errors in long reasoning chains, treating consecutive reasoning steps as a coded sequence and permuting the order in which a critic inspects them; (ii)~\emph{MIMO-style joint precoding} across multiple generator channels, where prompt design plays the role of a precoder steering each channel into a complementary error mode rather than running them independently; (iii)~\emph{LDPC- and polar-style structured FEC} over reasoning chains, replacing the uniform repetition that our FEC implementation currently uses with sparse parity structures that admit message-passing decoders; (iv)~\emph{coded computation for verification}, exchanging some of the redundant generator calls in HARQ/turbo for cheaper checksum-style verifier calls in the spirit of coded distributed computing; (v)~\emph{OFDM-style subcarrier allocation} as principled task decomposition, where independent reasoning subtasks are assigned to channels under a power-budget constraint analogous to bit-loading; and (vi)~\emph{joint source--channel coding} for compressing the prompt and its protection scheme together, rather than the prompt-then-redundancy decomposition implicit in our current pipelines. Each of these is a well-developed line in classical communications with a natural agentic counterpart that has not, to our knowledge, been instantiated in either form. The contribution of this paper is to commit to the channel abstraction concretely enough that those mappings become specifiable rather than evocative; the limitation is that we have implemented only a small subset and evaluated them on a constrained compute budget. We hope subsequent work treats this as a research program rather than a closed set of analogies.

\section{Synthesis integrity validation}
\label{app:synthesis-integrity}

For MRC and EGC combining, \emph{synthesis contamination} is a methodological concern: the synthesizer model may inject its own knowledge into the combined output rather than faithfully merging the input responses. Quality gains attributed to combining would then reflect the synthesizer solving the task independently, invalidating the diversity framework.

We validate synthesis integrity through three controlled experiments using the primary configuration (DeepSeek-R1 14B + Phi-3 14B channels, Gemma-3 12B judge/synthesizer).

\paragraph{Test 1: Information preservation.}
Both channel models answer ``What is the GCD of 252 and 735?'' Both correctly produce 21 via the Euclidean algorithm. The synthesizer preserves the correct answer and complete reasoning chain without adding computation steps or alternative methods not present in either input. \textbf{Result: PASS.}

\paragraph{Test 2: No contamination (factual).}
We construct two deliberately sparse descriptions of Python: Input~A states ``created in the early 1990s, interpreted language''; Input~B states ``high-level, supports multiple paradigms.'' Neither mentions Guido van Rossum, Django, NumPy, indentation-based syntax, duck typing, or list comprehensions---all facts the synthesizer has seen in training. The synthesizer output contains \emph{zero} facts beyond those in the two inputs. \textbf{Result: PASS.}

\paragraph{Test 3: Faithful error preservation.}
We test whether the synthesizer corrects \emph{errors} present in inputs using its own knowledge. Input~A states the speed of light was ``first measured accurately by Michelson in 1887''; Input~B states relativity was ``published in 1915'' (the correct year for \emph{special} relativity is 1905). The synthesizer faithfully reproduces both claims without correction, confirming it does not silently fix factual errors using its own knowledge. It also does not add the exact value 299,792,458~m/s (absent from both inputs). \textbf{Result: PASS.}

\paragraph{Test 4: Conflict resolution without independent computation.}
Input~C states $\int_0^3 x^2\,dx = 9$ (correct); Input~D states $\int_0^3 x^2\,dx = 8$ (incorrect). Input~C is scored higher (0.80 vs.\ 0.60). The synthesizer selects the higher-scored answer (9) without showing its own derivation (e.g., $[x^3/3]_0^3$). \textbf{Result: PASS.}

\paragraph{Test 5: No prompt leakage.}
The synthesis prompt uses internal labels (\texttt{[BEST]}, \texttt{[ALT-1]}) and section headers. None of these labels appear in the synthesized output. \textbf{Result: PASS.}

These results confirm that the constrained synthesis prompt (``Your output must contain ONLY information from the responses above'') combined with low temperature ($T\!=\!0.1$) effectively prevents the synthesizer from acting as an independent solver. Quality improvements from MRC/EGC combining therefore reflect genuine information aggregation across diverse channel outputs, not synthesizer knowledge injection.

\section{Formal Channel-Theoretic Foundations}
\label{app:formal-foundations}

This appendix provides the information-theoretic arguments underlying the agent channel model (\Cref{sec:model}), stating where established theorems apply and where the agent setting introduces deviations.

\subsection{The LLM as a discrete stochastic channel}

\begin{lemma}[Channel identity]
\label{lemma:refinement-threshold}
Let $\calA_\theta$ be an autoregressive language model with parameters $\theta$ and temperature $T > 0$.
For any input $x \in \calT$, the model defines a conditional distribution over output sequences:
\begin{equation}
  p_T(y \mid x) = \prod_{t=1}^{|y|} \frac{p_\theta(y_t \mid y_{<t}, x)^{1/T}}{\sum_{v \in \mathcal{V}} p_\theta(v \mid y_{<t}, x)^{1/T}}
\end{equation}
This is a valid transition probability satisfying $\sum_{y \in \mathcal{V}^*} p_T(y \mid x) = 1$ for all $x$, and thus constitutes a discrete stochastic channel in the sense of Shannon~\citep{shannon1948}.
\end{lemma}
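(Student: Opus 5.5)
The plan is to verify the two axioms of a channel transition probability listed after \Cref{def:agent-channel}: non-negativity, and normalization over $\mathcal{V}^{*}$ for each fixed $x$. Non-negativity is immediate, since each factor is a ratio of non-negative powers of softmax probabilities. Softmax outputs are strictly positive, so every denominator $Z_t(y_{<t},x) := \sum_{v} p_\theta(v\mid y_{<t},x)^{1/T}$ is strictly positive and each factor is well defined. The first step is to observe that for each fixed prefix, the tempered factor $p_T(\cdot \mid y_{<t},x) = p_\theta(\cdot\mid y_{<t},x)^{1/T}/Z_t$ is a probability distribution on $\mathcal{V}$. This is exactly the renormalization of \Cref{rem:temperature}. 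The remaining question is how per-step normalization turns into normalization over whole sequences.

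To make the sum over $\mathcal{V}^{*}$ precise, I will follow the convention of \Cref{def:agent-channel}. An element $y$ that carries positive mass is a finite string whose last token is the end-of-sequence symbol $\langle\mathrm{eos}\rangle$ and which contains no earlier $\langle\mathrm{eos}\rangle$. All other strings get mass zero. The generation process is then a random walk down the $|\mathcal{V}|$-ary prefix tree, with branching probabilities $p_T(\cdot\mid y_{<t},x)$, absorbed when it emits $\langle\mathrm{eos}\rangle$.

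The key step is an induction on $n$. Let $A_n$ be the set of terminated sequences of length at most $n$, and $U_n$ the set of unterminated prefixes of length exactly $n$. I will show
\begin{equation*}
\sum_{y\in A_n} p_T(y\mid x) \;+\; \sum_{u\in U_n} \prod_{t=1}^{n} p_T(u_t\mid u_{<t},x) \;=\; 1 .
\end{equation*}
The base case $n=0$ holds because the empty prefix has mass $1$. For the inductive step, split each $u\in U_n$ by its next token. Per-step normalization says the children of $u$ carry total mass equal to that of $u$. The child ending in $\langle\mathrm{eos}\rangle$ moves into $A_{n+1}$, and the others form $U_{n+1}$. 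Letting $n\to\infty$ and using monotone convergence gives
\begin{equation*}
\sum_{y\in\mathcal{V}^*}p_T(y\mid x) = 1-\lim_{n\to\infty} \prob(\text{no }\langle\mathrm{eos}\rangle\text{ among the first } n \text{ tokens}).
\end{equation*}

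The main obstacle is showing that this limit is zero, i.e.\ that generation terminates almost surely. Positivity of the softmax alone does not suffice, because the $\langle\mathrm{eos}\rangle$ probability could decay along some branches. My first approach is to use the finite maximum generation length $L_{\max}$ that every deployed decoder imposes. Truncation forces $\langle\mathrm{eos}\rangle$ at position $L_{\max}$, so $U_{L_{\max}}=\varnothing$ and the identity above closes exactly. The fallback is a uniform lower bound $p_T(\langle\mathrm{eos}\rangle\mid y_{<t},x)\geq \epsilon>0$. This holds, for instance, when the logits are bounded: the tempered $\langle\mathrm{eos}\rangle$ probability is then bounded below by a constant depending on $T$ and the logit range. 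Under that bound the non-termination probability is at most $(1-\epsilon)^n\to 0$. I will state one of these as the implicit hypothesis under which ``the end-of-sequence symbol terminating the product'' in \Cref{def:agent-channel} is meaningful. With it, $p_T(\cdot\mid x)$ is a channel transition probability $W(y\mid x)$ on the countable alphabet $\mathcal{V}^*$.
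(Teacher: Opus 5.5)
Your argument is correct and follows the same route as the paper: the tempered softmax is a distribution on $\mathcal{V}$ at every prefix, and chaining these conditionals gives a distribution on sequences. The difference is that the paper stops at that point. It asserts in one line that the product of valid conditionals over an end-of-sequence-terminated finite sequence ``yields a valid joint distribution over $\mathcal{V}^{*}$'', and adds that no limiting argument is involved. Your induction on $A_n$ and $U_n$ exposes what that assertion leaves out, in two ways.

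\textbf{The convention for unterminated strings is necessary.} Strings not ending in a single, final $\langle\mathrm{eos}\rangle$ must get zero mass. Read literally on all of $\mathcal{V}^{*}$, the displayed product sums to $1$ over each length slice, and therefore to $\infty$ overall.

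\textbf{Normalization requires a limit and a tightness condition.} The sum over the countable set $\mathcal{V}^{*}$ is itself a limit. It equals $1$ exactly when the non-termination mass $\sum_{u \in U_n}\prod_t p_T(u_t \mid u_{<t}, x)$ tends to $0$, that is, when the model is tight. This fails for autoregressive models whose $\langle\mathrm{eos}\rangle$ probability decays fast enough along some branch. So your added hypothesis is not a weakness of your proof; it is a condition the lemma genuinely needs and should state.

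Of your two options, the uniform floor is the better one. Forced truncation at $L_{\max}$ replaces the last factor by a point mass (or admits unterminated outputs). That proves normalization for a slightly different distribution from the one displayed. Bounded logits keep the displayed formula intact: for logits in $[a,b]$ they give $p_T(\langle\mathrm{eos}\rangle \mid y_{<t},x) \ge e^{-(b-a)/T}/|\mathcal{V}|$. Such bounds hold in standard transformer architectures, where hidden states stay uniformly bounded.
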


\begin{proof}
The softmax function with temperature scaling produces a valid probability distribution at each token position: each factor is non-negative and sums to one over $\mathcal{V}$.
The product of valid conditional distributions over a finite sequence (terminated by an end-of-sequence token) yields a valid joint distribution over $\mathcal{V}^*$.
The mapping $x \mapsto p_T(\cdot \mid x)$ is therefore a well-defined channel transition probability.
\end{proof}

No approximation or limiting argument is involved.

\subsection{Conditional entropy and channel noise}

For a fixed input $x$ and temperature $T$, the per-token conditional entropy is:
\begin{equation}
  H_T(Y_t \mid y_{<t}, x) = -\sum_{v \in \mathcal{V}} p_T(v \mid y_{<t}, x) \log p_T(v \mid y_{<t}, x)
\end{equation}
The sequence-level conditional entropy, normalized by output length, is:
\begin{equation}
  \mathcal{H}_T(\calA_\theta, x) = \frac{1}{\E[|Y|]} \sum_{t=1}^{\E[|Y|]} H_T(Y_t \mid Y_{<t}, x)
\end{equation}
This quantity is the channel's \emph{intrinsic noise rate} in bits per token.
It decomposes the factors that determine channel quality:

\begin{enumerate}[nosep]
  \item \textbf{Model parameters $\theta$}: larger, better-trained models learn sharper conditional distributions (lower entropy at each position). Neural scaling laws~\citep{kaplan2020scaling, hoffmann2022training} show that cross-entropy loss (= our $\bar{\mathcal{H}}$) decreases as a power law $L \propto N^{-\alpha}$ with model parameter count $N$, providing a quantitative relationship between model size and channel noise.
  \item \textbf{Temperature $T$}: by \Cref{rem:temperature}, increasing $T$ flattens the distribution, monotonically increasing $H_T$. At $T = 0$, $H = 0$ (deterministic); at $T = \infty$, $H = \log|\mathcal{V}|$ (uniform noise).
  \item \textbf{Task difficulty}: for inputs $x$ where the model is uncertain (ambiguous, out-of-distribution, or requiring multi-step reasoning), the conditional distribution is naturally more diffuse, yielding higher $\mathcal{H}$ even at fixed $T$.
\end{enumerate}

\begin{corollary}[Model--difficulty duality]
\label{cor:duality}
For a given technique with decoding threshold $\mathcal{H}^*$, the technique is effective when $\mathcal{H}_T(\calA_\theta, x) < \mathcal{H}^*$.
Since $\mathcal{H}$ depends on both $\theta$ (model) and $x$ (task), a 70B model on a hard task and a 3B model on an easy task may occupy the same operating point.
Technique selection should therefore be conditioned on the \emph{operating point} $\mathcal{H}$, not on model size or task difficulty independently.
\end{corollary}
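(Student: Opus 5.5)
The corollary is a definitional consequence of the threshold notion, so I will make that notion explicit and then show that effectiveness depends on $(\theta,x)$ only through the scalar $\mathcal{H}_T(\calA_\theta,x)$.

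\textbf{Step 1: fix the working hypothesis.} I take a decoding threshold $\mathcal{H}^*$ for a technique $\pi$ to mean the following. The coding gain of $\pi$ at an input, conditional on the channel and task, is governed by a quality-update map whose parameters depend on the pair $(\theta,x)$ only through the operating point $h:=\mathcal{H}_T(\calA_\theta,x)$. The gain is positive exactly when $h<\mathcal{H}^*$. This is the single-parameter family assumption implicit in the EXIT-chart analogy of \Cref{sec:channel-quality} and \Cref{prop:refinement-threshold}. There, the contraction condition $|f'(q^\infty)|<1$ becomes a condition on $h$: the slope of $f$ at the high-quality fixed point is assumed monotone increasing in $h$, and $\mathcal{H}^*$ is defined as the unique crossing of slope one.

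\textbf{Step 2: first claim.} With this hypothesis, the first claim is immediate. Effectiveness means $f_h$ is contractive at $q^\infty(h)$, and by monotonicity that happens iff $h<\mathcal{H}^*$.

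\textbf{Step 3: duality claim.} Consider the map $(\theta,x)\mapsto \mathcal{H}_T(\calA_\theta,x)$ from the definition of the sequence-level entropy in \Cref{app:formal-foundations}. Its level sets are generally not aligned with either coordinate. I will use the three listed factors: entropy decreasing in model scale (scaling laws) and increasing in task difficulty. From these, for any large model $\theta_L$ and hard task $x_h$, I conclude there exist a small model $\theta_S$ and an easier task $x_e$ with $\mathcal{H}_T(\calA_{\theta_S},x_e)=\mathcal{H}_T(\calA_{\theta_L},x_h)$. This follows by an intermediate-value argument, provided the difficulty range is rich enough that the entropy varies continuously across tasks. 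I would embed tasks in a one-parameter difficulty family $x_s$ along which $\mathcal{H}_T$ is continuous in $s$.

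\textbf{Step 4: conclusion and main obstacle.} Since effectiveness is a function of $h$ alone, the two pairs have identical effectiveness, and the selection rule must be conditioned on $h$. The main obstacle is Step 1. The paper never states that technique performance factors through $\mathcal{H}$ alone, and this is false in general, since judge noise and branch correlation also matter. So I will present the corollary as conditional on that sufficiency hypothesis, state it explicitly as an assumption, and note that the continuity needed in Step 3 is itself an idealization of a discrete task set.
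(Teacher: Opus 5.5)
Your proposal is correct and follows essentially the same reasoning as the paper, which gives no separate proof. The paper reads the first claim off its definition of a decoding threshold as a noise level $\mathcal{H}^*$ below which the technique converges, so your sufficiency hypothesis is exactly the corollary's premise ``a given technique with decoding threshold $\mathcal{H}^*$'' rather than an extra assumption. It reads the duality off the preceding list of factors: $\mathcal{H}_T$ decreases with model scale (scaling laws) and increases with task difficulty.

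Your EXIT-slope instantiation and intermediate-value construction add rigor to what the paper states only as a possibility (``may occupy the same operating point''). They do not constitute a different route.
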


\subsection{Mutual information and diversity}

For $d$ independent channels $\calA_{\theta_1}, \ldots, \calA_{\theta_d}$ (independence holds when models have different architectures and training data, or when temperature variation provides sufficient randomization), the mutual information satisfies:
\begin{equation}
  I(X; Y_1, \ldots, Y_d) = I(X; Y_1) + \sum_{i=2}^{d} I(X; Y_i \mid Y_1, \ldots, Y_{i-1})
  \label{eq:chain-mi}
\end{equation}
Under independence, $I(X; Y_i \mid Y_1, \ldots, Y_{i-1}) = I(X; Y_i)$, giving:
\begin{equation}
  I(X; Y_1, \ldots, Y_d) = \sum_{i=1}^{d} I(X; Y_i)
\end{equation}
Each additional independent channel adds information about $X$, which is why diversity combining improves quality.
When channels are correlated (shared training data, overlapping architectures), $I(X; Y_i \mid Y_1, \ldots, Y_{i-1}) < I(X; Y_i)$, yielding diminishing returns.
The effective diversity order $d_{\text{eff}} < d$ captures this: it is the number of channels' worth of independent information in the ensemble.
\citet{turkmen2026ensemble} formalize this for LLM ensembles, showing that MI-based ensemble selection outperforms greedy accuracy-based selection by accounting for inter-model correlation.

\paragraph{Empirical branch correlation.} \Cref{tab:branch-correlation}
reports the Pearson correlation $r$ between the two channels'
\texttt{individual\_scores} on Diversity-SC, the only technique whose
per-branch quality scores are pure parallel single-shot draws.
Aggregate $r$ ranges from $0.17$ at 14B local (the configuration
closest to the textbook MRC$\geq$EGC$\geq$SC ordering in
\Cref{sec:exp-diversity}) to $0.75$ on the 3B-Qwen + 3B-Llama pair
(the configuration furthest from it), giving an empirical
$d_{\mathrm{eff}} \in [1.14, 1.71]$ from a nominal $d{=}2$. On the
headline $n{=}300$ hard-benchmark split the aggregate is
$r{=}0.665$, but \Cref{tab:branch-correlation-cat} shows this
average hides a sharp split: $r{=}-0.03$ on the MMLU-QA tasks
(near-independent failures) versus $r{=}+0.73$ on HumanEval-code
(shared training-data overlap). Configurations and categories with
high $r$ are exactly where EGC's robustness to CSI noise overtakes
$\widehat{\mathrm{MRC}}$ in \Cref{tab:results-summary}, the second
mechanism (alongside the CSI-noise variance of
\Cref{prop:csi-crossover}) by which the textbook ordering reverses.

\begin{table}[h]
\centering
\caption{Inter-branch quality-score correlation on Diversity-SC, the only technique whose \texttt{individual\_scores} are pure parallel single-shot draws (no synthesis, no critic, no combining). Pearson $r$ between the $d{=}2$ per-branch judge scores; 95\% CIs from $4000$ task-level bootstrap resamples; $d_{\mathrm{eff}} = d/(1{+}(d{-}1)r)$ clamped at $r\geq 0$. Higher $r$ (lower $d_{\mathrm{eff}}$) coincides with the configurations on which $\mathrm{EGC} \succ \widehat{\mathrm{MRC}}$ in \Cref{sec:exp-diversity}, the second mechanism (alongside the CSI-noise variance of \Cref{prop:csi-crossover}) by which the textbook MRC ordering reverses.}
\label{tab:branch-correlation}
\small
\setlength{\tabcolsep}{4pt}
\begin{tabular}{@{}lcccc@{}}
\toprule
\textbf{Configuration} & $\bm{n}$ & \textbf{Pearson $r$} & \textbf{95\% CI} & $\bm{d_{\mathrm{eff}}}$ \\
\midrule
3B local & 69 & +0.489 & [+0.254, +0.702] & 1.34 \\
3B local + cloud judge & 69 & +0.751 & [+0.596, +0.873] & 1.14 \\
8B local & 69 & +0.464 & [+0.222, +0.672] & 1.37 \\
14B local & 69 & +0.171 & [-0.080, +0.407] & 1.71 \\
Anthropic + OpenAI cloud & 69 & +0.575 & [+0.308, +0.759] & 1.27 \\
Ollama-cloud (n=69 curated) & 69 & +0.679 & [+0.516, +0.813] & 1.19 \\
Ollama-cloud (n=300 hard) & 300 & +0.665 & [+0.576, +0.748] & 1.20 \\
\bottomrule
\end{tabular}
\end{table}

\begin{table}[h]
\centering
\caption{Per-category inter-branch correlation on Diversity-SC. Each cell is the Pearson $r$ between the two branches' judge scores on tasks in that category, with the per-cell sample size in parentheses; ``---'' marks empty cells. Categories with $r$ noticeably above the configuration aggregate are configurations $\times$ task families on which the effective diversity order is most depressed by shared training-data overlap, and where EGC's robustness to CSI noise is most likely to overtake MRC.}
\label{tab:branch-correlation-cat}
\footnotesize
\setlength{\tabcolsep}{4pt}
\begin{tabular}{@{}lccccc@{}}
\toprule
\textbf{Configuration} & \textbf{QA} & \textbf{Reasoning} & \textbf{Code} & \textbf{Creative} & \textbf{All} \\
\midrule
3B local & +0.73 (16) & +0.13 (30) & +0.61 (14) & +0.73 (9) & +0.49 (69) \\
3B local + cloud judge & +0.66 (16) & +0.69 (30) & +0.71 (14) & +0.62 (9) & +0.75 (69) \\
8B local & +0.46 (16) & +0.11 (30) & +0.48 (14) & +0.72 (9) & +0.46 (69) \\
14B local & +0.26 (16) & +0.30 (30) & +0.05 (14) & +0.66 (9) & +0.17 (69) \\
Anthropic + OpenAI cloud & +0.87 (16) & +0.61 (30) & -0.18 (14) & +0.58 (9) & +0.57 (69) \\
Ollama-cloud (n=69 curated) & +0.89 (16) & +0.74 (30) & +0.25 (14) & +0.59 (9) & +0.68 (69) \\
Ollama-cloud (n=300 hard) & -0.03 (100) & +0.52 (100) & +0.73 (100) & --- & +0.67 (300) \\
\bottomrule
\end{tabular}
\end{table}

\subsection{Soft vs.\ hard channel state information}

In classical communications, the combining weights for MRC are derived from the channel's soft output.
For a Gaussian channel with known noise variance $\sigma_i^2$ on branch $i$, the optimal MRC weights are $w_i = 1/\sigma_i^2$ (inverse noise variance weighting)~\citep{proakis2008digital}.
When the noise variance must be estimated, any estimation error $\hat{\sigma}_i^2 \neq \sigma_i^2$ degrades MRC performance.
Specifically, if the estimation error variance is $\mathrm{Var}(\hat{\sigma}_i^2)$, MRC with estimated weights approaches EGC performance as $\mathrm{Var}(\hat{\sigma}_i^2) / \sigma_i^4 \to 1$~\citep{gao2003channel}.

In our framework, the intrinsic CSI is the log-probability $\log p_\theta(y \mid x)$, which is a direct measure of the channel's confidence.
The extrinsic CSI is the LLM judge score.
Naive numeric-rating judges are severely quantized: our initial judge produced only $\sim$33 unique values across 516 measurements ($\sim$5-bit resolution), under which the theoretical MRC advantage over EGC vanishes~\citep{simon2005optimum}.
We therefore adopt a \emph{sigma-delta} scoring design: the judge answers 15 weighted binary criteria rather than emitting a single number, and the weighted sum recovers a fine-grained score (up to $2^{15}$ distinct values). Binary decisions are far more reliable for small LLMs than numeric ratings, and the many-1-bit-measurements-to-high-resolution-output construction mirrors oversampled sigma-delta ADCs~\citep{norsworthy1996delta}.

\textbf{Practical implication.} Log-probability-weighted MRC would use the channel's own soft output, avoiding the estimation noise entirely.
However, sequence-level log-probability $\log p_\theta(y \mid x) = \sum_t \log p_\theta(y_t \mid y_{<t}, x)$ has known biases:
(i)~\emph{length bias}: shorter sequences tend to have higher log-probability, favoring terse but incomplete answers;
(ii)~\emph{repetition bias}: repeated tokens have artificially high conditional probability;
(iii)~\emph{calibration}: LLM probabilities are not perfectly calibrated~\citep{guo2017calibration, kadavath2022language}.
These can be mitigated by length normalization (\Cref{eq:intrinsic}) and temperature calibration, but require empirical validation.

\subsection{Iterative decoding threshold}

The convergence behavior of iterative decoders (turbo codes, LDPC) is governed by the decoding threshold~\citep{ten_brink2001convergence}: a channel noise level $\mathcal{H}^*$ such that:
\begin{itemize}[nosep]
  \item For $\mathcal{H} < \mathcal{H}^*$ (good channel): iterative decoding converges, and error probability decreases exponentially with iterations.
  \item For $\mathcal{H} > \mathcal{H}^*$ (bad channel): the decoder \emph{cannot converge}. Additional iterations may even degrade performance due to error propagation.
\end{itemize}
The threshold depends on the constituent decoder capacity, not on the interleaver design.

In the agent setting, the generator is the constituent decoder and the critic is the interleaver (providing extrinsic information).
Our multi-scale experiments confirm this prediction. At 8B (Llama-3.1 8B, Qwen-2.5 7B), the marginal gain of HARQ-IR over HARQ-CC nearly vanishes (HARQ-CC $+22.7\%$, HARQ-IR $+22.8\%$, turbo $+13.9\%$ in \Cref{tab:results-summary}; the HARQ-IR-over-HARQ-CC gap is $0.1$~pp, vs.\ $2.3$~pp at 14B), and the turbo survivor cohort is noisy with no monotone trend ($0.585 \to 0.559 \to 0.556 \to 0.483 \to 0.465$; \Cref{fig:gallery-8b-iter}, \Cref{fig:threshold-3panel} center)---consistent with near-threshold operation, where extrinsic critique fails to propagate into improved iterates. At 14B (DeepSeek-R1 14B, Phi-3 14B), iterative gains are more sustained: HARQ-IR $+14.0\%$ exceeds HARQ-CC $+11.7\%$ by $2.3$~pp, turbo reaches $+14.2\%$, and the all-tasks running-max climbs monotonically from $0.726$ to $0.853$ across five iterations (\Cref{fig:turbo-waterfall}, \Cref{fig:gallery-14b-iter}), suggesting these models cross into the waterfall region for a larger fraction of tasks. The cloud configuration (Claude Haiku~4.5, GPT-5-mini, $q_0 = 0.819$) shows the clearest above-threshold behavior: HARQ-IR $+10.8\%$ and turbo $+10.2\%$ both produce monotonic improvement, with HARQ-IR exceeding HARQ-CC by $3.4$~pp (\Cref{tab:results-summary}; HARQ and turbo trajectories in \Cref{fig:gallery-cloud-iter}).

This is consistent with the threshold being a property of the generator's conditional entropy rather than the critique mechanism.
The comparison across 8B $\to$ 14B $\to$ cloud is consistent with a transition from noise-amplification (below threshold) to stable iterative improvement (above threshold); we stop short of claiming a full waterfall, since neither the number of iterations nor the number of tasks in our study is sufficient to resolve the sharp transition characteristic of bit-error-rate waterfalls.

\paragraph{A fixed-point characterization of the threshold.}
The threshold can be made precise by modeling the per-iteration map on quality directly. Let $\mathcal{R}_x: \mathcal{Y} \to \mathcal{Y}$ be the refinement operator (generator + critic + extrinsic-information exchange) for prompt $x$, and define the \emph{quality-update map}
\begin{equation*}
  f(q_0) \;:=\; \E\!\left[\, q\big(\mathcal{R}_x(Y)\big) \;\middle|\; q(Y) = q_0 \,\right],
\end{equation*}
with $q_k := f^{(k)}(q_0)$ the $k$-fold deterministic iterate of expected quality, and $Q_k := \E\big[\max_{0 \leq j \leq k} q(Y_j)\big]$ the expected quality under the best-of-sequence guard (\Cref{app:impl-primitives}). The guarded delivered sequence is monotone independently of the dynamics of $f$:

\begin{remark}[Best-of-sequence monotonicity]
\label{rem:bos-guard}
For every refinement trajectory $(Y_0, Y_1, \dots)$, the running maximum $\max_{0 \leq j \leq k} q(Y_j)$ is non-decreasing in $k$ pathwise; taking expectations gives $Q_k \geq Q_{k-1} \geq \cdots \geq q_0$ for all $k \geq 1$.
\end{remark}

\noindent This is a one-line consequence of monotonicity of the maximum over a growing index set; it makes no use of the structure of $f$, and we use it throughout to read the empirical running-max curves of \Cref{fig:turbo-waterfall,fig:threshold-3panel} as predictions about the \emph{delivered} output even when the unguarded iterate $q_k$ behaves badly. The substantive content of the threshold therefore lies in the local stability of the high-quality fixed point of $f$.

\begin{proposition}[Refinement contraction and the iterative-decoding threshold]
\label{prop:refinement-threshold}
Suppose $f: [0, 1] \to [0, 1]$ is continuously differentiable with at least one fixed point $q^\infty \in (0, 1]$.
\begin{enumerate}[nosep,leftmargin=*]
  \item[(i)] \emph{Above threshold.} If $|f'(q^\infty)| < 1$, then for every $\kappa \in \big(|f'(q^\infty)|, 1\big)$ there exists a neighborhood $U \ni q^\infty$ with $f(U) \subseteq U$ such that
  \begin{equation*}
    |f(q) - q^\infty| \;\leq\; \kappa\,|q - q^\infty| \qquad \text{for all } q \in U,
  \end{equation*}
  and consequently, for every $q_0 \in U$,
  \begin{equation*}
    |q_k - q^\infty| \;\leq\; \kappa^{k}\,|q_0 - q^\infty| \;\xrightarrow{k \to \infty}\; 0.
  \end{equation*}
  Identifying $q_k$ with the conditional mean $\E[q(Y_k) \mid q_0]$ (the EXIT-recursion reading of $f$), this gives $\liminf_{k} Q_k \geq q^\infty$; the squeeze $q_k \leq Q_k \leq q^\infty$ forces $Q_k \to q^\infty$ whenever $q(Y_j) \leq q^\infty$ almost surely along the trajectory (in particular when $q^\infty = 1$).
  \item[(ii)] \emph{Below threshold.} If, in addition, $f$ is non-decreasing on $[0,1]$, $|f'(q^\infty)| > 1$, and $f$ admits a second stable fixed point $q^- < q^\infty$ with $|f'(q^-)| < 1$ and no other fixed point in $[q^-, q^\infty]$, then for every $q_0 \in (q^-, q^\infty)$ the unguarded iterate satisfies $q_k \searrow q^-$, while the guarded sequence still satisfies $Q_k \geq q_0$ for all $k$ by \Cref{rem:bos-guard}.
\end{enumerate}
\end{proposition}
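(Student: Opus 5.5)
For part (i) I would use the standard local-contraction argument from continuity of $f'$. Fix $\kappa \in (|f'(q^\infty)|,1)$. Continuity of $f'$ gives $\delta>0$ with $|f'(s)|\le\kappa$ for all $s \in U:=[q^\infty-\delta,\,q^\infty+\delta]\cap[0,1]$. Since $U$ is an interval containing $q^\infty$, the mean value theorem applied between $q \in U$ and $q^\infty$ gives
\[
|f(q)-q^\infty| = |f(q)-f(q^\infty)| = |f'(\xi)|\,|q-q^\infty| \le \kappa|q-q^\infty|,
\]
with $\xi$ between $q$ and $q^\infty$. This inequality shows that $f(q)$ lies within distance $\kappa\delta<\delta$ of $q^\infty$. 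Because $f$ maps into $[0,1]$, also $f(q)\in[0,1]$, so $f(U)\subseteq U$. A one-line induction then yields $|q_k-q^\infty|\le\kappa^k|q_0-q^\infty|$, and hence $q_k\to q^\infty$.

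For the guarded statements in (i), I would take the stated identification $q_k=\E[q(Y_k)\mid q_0]$ as given. Pathwise, $\max_{j\le k} q(Y_j)\ge q(Y_k)$, so $Q_k\ge \E[q(Y_k)]=q_k$. Taking $\liminf$ gives $\liminf_k Q_k\ge\lim_k q_k=q^\infty$. Under the extra hypothesis $q(Y_j)\le q^\infty$ a.s., the running maximum is also bounded by $q^\infty$, so $Q_k\le q^\infty$. The squeeze then gives $Q_k\to q^\infty$. When $q^\infty=1$ this bound is automatic, since $q$ takes values in $[0,1]$.

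For part (ii) I would argue by monotone dynamics on the interval $I=(q^-,q^\infty)$. Set $g(q):=f(q)-q$. It is continuous and vanishes on $[q^-,q^\infty]$ only at the two endpoints, so $g$ has constant sign on $I$.

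\begin{itemize}
\item Near $q^\infty$ we have $g'(q^\infty)=f'(q^\infty)-1$. Because $f$ is non-decreasing, $f'\ge0$, so $|f'(q^\infty)|>1$ means $f'(q^\infty)>1$, i.e.\ $g'(q^\infty)>0$. Hence $g<0$ just to the left of $q^\infty$.
\item Near $q^-$, $g'(q^-)=f'(q^-)-1<0$. Hence $g<0$ just to the right of $q^-$, which is consistent with the first bullet.
\end{itemize}

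So $f(q)<q$ on all of $I$. Monotonicity of $f$ gives $f(q)\ge f(q^-)=q^-$, and strictness follows since $f(q)=q^-$ would place $f(q)$ outside $I$ only if $f(q)\le q^-$. The safer route is $f(q)\in[q^-,q)$, with the iterates either hitting $q^-$ or staying in $I$.

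It follows that $q_k$ is non-increasing and bounded below by $q^-$, so it converges to some limit $L$. By continuity $f(L)=L$, and the only fixed point in $[q^-,q_0]$ is $q^-$, so $q_k\searrow q^-$. The guard claim $Q_k\ge q_0$ is exactly \Cref{rem:bos-guard}.

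I expect the main obstacle to be the sign argument on $I$ in part (ii), in particular getting $f'(q^\infty)>1$ rather than $<-1$; the non-decreasing hypothesis is what forces this. The remaining steps are routine.
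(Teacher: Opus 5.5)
Your proposal is correct and follows the paper's proof essentially step for step: local contraction from continuity of $f'$, the mean value theorem and induction for (i); the pathwise running-max bound and squeeze for the guarded sequence; and the sign of $g(q)=f(q)-q$ plus monotone convergence for (ii). Your extra derivation of $f'(q^\infty)>1$ is consistent but unnecessary, because $g'(q^-)<0$ together with the no-other-fixed-point hypothesis already forces $g<0$ on $(q^-,q^\infty)$ (the step you flagged as the main obstacle). The non-decreasing hypothesis is really needed only for $f(q)\ge f(q^-)=q^-$, which keeps the iterates from dropping below $q^-$.
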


\begin{proof}
\emph{(i) Local contraction.} Fix $\kappa \in (|f'(q^\infty)|, 1)$. Continuity of $f'$ at $q^\infty$ gives $\delta > 0$ such that $|f'(q)| \leq \kappa$ for all $q \in U := (q^\infty - \delta, q^\infty + \delta) \cap [0,1]$. The set $U$ is convex and contains $q^\infty$, so for any $q \in U$ the segment from $q$ to $q^\infty$ lies in $U$ and the mean-value theorem yields some $\xi \in U$ with $f(q) - q^\infty = f(q) - f(q^\infty) = f'(\xi)\,(q - q^\infty)$, hence
\begin{equation*}
  |f(q) - q^\infty| \;\leq\; \kappa\,|q - q^\infty| \;<\; \delta.
\end{equation*}
Combined with $f(q) \in [0,1]$, this gives $f(q) \in U$, so $f(U) \subseteq U$ and the iteration $q_{k+1} = f(q_k)$ remains in $U$ once $q_0 \in U$. Induction on $k$ now yields $|q_k - q^\infty| \leq \kappa\,|q_{k-1} - q^\infty| \leq \cdots \leq \kappa^k\,|q_0 - q^\infty|$, and $\kappa < 1$ forces $q_k \to q^\infty$.

\emph{Guarded sequence.} Pathwise, $\max_{0 \leq j \leq k} q(Y_j) \geq q(Y_k)$, so under the mean-field reading $\E[q(Y_k)] = q_k$ we have $Q_k \geq q_k$, hence $\liminf_k Q_k \geq q^\infty$. The reverse bound $Q_k \leq q^\infty$ holds whenever $q(Y_j) \leq q^\infty$ almost surely along the trajectory, and the two together give $Q_k \to q^\infty$.

\emph{(ii)} Set $g(q) := f(q) - q$. Then $g(q^-) = g(q^\infty) = 0$ and, since $|f'(q^-)| < 1$, $g'(q^-) = f'(q^-) - 1 < 0$; together with the assumption that $g$ has no other zero in $[q^-, q^\infty]$, this fixes the sign of $g$ as negative on the open interval, i.e.\ $f(q) < q$ on $(q^-, q^\infty)$. Monotonicity of $f$ gives $f(q) \geq f(q^-) = q^-$, so $f$ maps $(q^-, q^\infty)$ into $[q^-, q^\infty)$ and $q_k$ is well-defined and non-increasing there. Bounded below by $q^-$, $q_k$ converges by monotone convergence and continuity to a fixed point of $f$ in $[q^-, q^\infty)$, which can only be $q^-$. The guard inequality is \Cref{rem:bos-guard}.
\end{proof}

\paragraph{Attribution.} The mathematical content of \Cref{prop:refinement-threshold} is a specialization of the classical local-stability theory of one-dimensional smooth maps to the present setting (see, e.g., \citealp[Ch.~9]{hirsch_smale_devaney2013}; the contraction case is the one-dimensional Banach--Picard corollary, and the bistable case is the standard saddle-node picture). The interpretation of the dichotomy $|f'(q^\infty)| \lessgtr 1$ as a \emph{decoding threshold} is the EXIT-chart / density-evolution analysis of \citet{ten_brink2001convergence} and \citet{richardson2001capacity}, in which the threshold is the channel parameter at which the high-quality fixed point of a one-dimensional recursion loses local stability. Our contribution here is the modeling step, identifying the LLM refinement operator $\mathcal{R}_x$ as the constituent decoder, the critique channel as the extrinsic-information exchange, and the quality-update map $f(q_0) = \E[q(\mathcal{R}_x(Y)) \mid q(Y) = q_0]$ as the EXIT recursion; we restate the underlying dynamical-systems result here so that the threshold prediction in \Cref{fig:threshold-3panel} can be read off the same $|f'|$ condition that controls turbo and LDPC convergence.

\textbf{Interpretation.} The condition $|f'(q^\infty)| < 1$ at the high-quality fixed point is the agent analog of operating above the iterative-decoding threshold of a turbo code~\citep{ten_brink2001convergence}: refinement is locally contractive toward $q^\infty$ and EXIT-style trajectories converge to it. When this condition fails (the upper fixed point becomes unstable), trajectories collapse toward a lower-quality basin, and only the best-of-sequence guard prevents the delivered output from following them down. \Cref{fig:threshold-3panel} is the empirical signature of this transition: the survivor-cohort mean (an empirical proxy for $q_k$) descends at 3B, is flat-noisy at 8B, and stays at the upper fixed point at 14B, while the running-max curve (an empirical proxy for $Q_k$) is monotone in all three panels, exactly the asymmetry that \Cref{rem:bos-guard} and part~(ii) of \Cref{prop:refinement-threshold} predict.

\textbf{What the theorem does and does not give.} It does not estimate $|f'(q^\infty)|$ for any concrete generator/critic pair (this would require many independent refinement trajectories per quality bin, beyond our compute budget). It gives a structural prediction --- that the empirical transition between 3B/8B/14B should look like a transition between subcritical, critical, and supercritical iteration, and that part~(i) decouples the deliverable quality from the unguarded dynamics --- and that prediction matches the observed pattern.

\subsection{Bounds on agent channel capacity}

While the exact capacity $C = \max_{p(x)} I(X; Y)$ is intractable for LLM channels, useful bounds exist.

\textbf{Upper bound.} For any output distribution, $I(X; Y) \leq H(Y) \leq \E[|Y|] \cdot \log|\mathcal{V}|$.
A tighter bound uses the observation that LLM outputs occupy a tiny fraction of $\mathcal{V}^*$: natural language has approximately 1--1.5 bits/character of entropy~\citep{shannon1948, cover2006elements}, giving $H(Y) \approx 1.2 \cdot \E[|Y|]$ bits.

\textbf{Lower bound.} Any achievable quality $Q^*(C)$ under cost budget $C$ provides a lower bound, since the technique configuration that achieves $Q^*$ is a specific (suboptimal) coding scheme.
Our ACM experiments (\Cref{sec:exp-acm}) thus provide constructive lower bounds.

\textbf{Rate interpretation.} In our setting, the ``rate'' is the semantic information density: how many bits of task-relevant content are transmitted per token of model output.
FEC at code rate $r$ trades this density for reliability: lower $r$ adds more parity tokens, reducing information density but increasing the fraction of tasks solved correctly.
This directly parallels the rate--reliability tradeoff in Shannon's coding theorem.

\section{Hyperparameter Settings and Pseudocode}
\label{app:hyperparams}

\begin{algorithm}[h]
\caption{Diversity ensemble with MRC combining}
\label{alg:diversity}
\begin{algorithmic}[1]
\Require Task $X$, channels $\{\calA_i\}_{i=1}^{d}$, scorer $q$, synthesizer $\calA_s$
\For{$i = 1$ to $d$} \Comment{Parallel}
  \State $Y_i \gets \calA_i(X)$; \quad $q_i \gets q(X, Y_i)$
\EndFor
\State $w_i \gets q_i / \sum_j q_j$ \quad $\forall i$ \Comment{MRC weights}
\State \Return $\calA_s(\text{Synthesize}(\{(Y_i, w_i)\}))$
\end{algorithmic}
\end{algorithm}

\begin{table}[h]
\centering
\small
\caption{Default hyperparameters for each technique.}
\begin{tabular}{@{}ll@{}}
\toprule
\textbf{Parameter} & \textbf{Value} \\
\midrule
\multicolumn{2}{@{}l}{\emph{Diversity}} \\
\quad Number of branches $d$ & 2 (spatial) \\
\quad Prompt variants & 1 (default) \\
\quad Temperature spread & None \\
\midrule
\multicolumn{2}{@{}l}{\emph{HARQ}} \\
\quad Max rounds & 5 \\
\quad Quality threshold $\tau$ & 0.85 \\
\quad Critic model & Judge (12B) \\
\midrule
\multicolumn{2}{@{}l}{\emph{Turbo}} \\
\quad Max iterations & 5 \\
\quad Quality threshold & 0.90 \\
\quad Critic model & Judge (12B) \\
\midrule
\multicolumn{2}{@{}l}{\emph{Fountain}} \\
\quad Max samples & 8 \\
\quad Min samples & 2 \\
\quad Confidence threshold & 0.85 \\
\midrule
\multicolumn{2}{@{}l}{\emph{FEC}} \\
\quad Code rates tested & 0.75, 0.50, 0.33 \\
\midrule
\multicolumn{2}{@{}l}{\emph{ACM}} \\
\quad MCS levels & 5 (MCS-0 to MCS-4) \\
\bottomrule
\end{tabular}
\end{table}

\section{Detailed implementation of each technique}
\label{app:impl-details}

This appendix documents the concrete implementation of every technique benchmarked in this paper, at the level of detail required for an independent reimplementation. All techniques are built from two shared primitives: the \emph{agent channel} $\calA$ and the \emph{quality scorer} $q$. Before describing individual techniques we fix notation and document these primitives, since the guards and guarantees of each technique rely on them.

\subsection{Shared primitives}
\label{app:impl-primitives}

\paragraph{Agent channel $\calA$.} A channel is a callable $\calA \colon \text{prompt} \times \text{temperature} \to \text{output}$ that wraps an LLM endpoint (OpenAI-compatible, Anthropic native, or a local Ollama/vLLM server). A call returns an \texttt{AgentOutput} record with fields
\texttt{text}, \texttt{model}, \texttt{temperature}, \texttt{token\_count}, \texttt{cost\_usd}, \texttt{latency\_s}, and---optionally---\texttt{token\_logprobs} and \texttt{mean\_logprob} when the backend supports them. Reasoning-model thinking blocks (\texttt{<think>\ldots</think>} for DeepSeek-R1 / Qwen3; Anthropic \texttt{ThinkingBlock} for Claude) are stripped from the visible text before downstream use, but the thinking tokens remain in the cost accounting. Whenever possible we disable thinking.

\paragraph{Quality scorer $q$.} The scorer is a \emph{binary checklist judge}: given (prompt, candidate, optional reference) it asks a dedicated LLM (the judge, typically gemma3:12b locally or claude-haiku-4-5 in the cloud) to evaluate 15 weighted yes/no criteria and returns the weighted true-rate in $[0, 1]$. Criteria are fixed per task type (with / without reference) and weights sum to $1$; a ``good but not perfect'' answer is calibrated to pass 10--12 of 15 checks. This produces $2^{15}=32{,}768$ possible score values, avoiding the severe quantization of numeric-rating judges (\Cref{app:exp-methodology}). On tasks with objective checks, the final score is a $0.6 \cdot \text{objective} + 0.4 \cdot \text{checklist}$ blend.

The scorer additionally exposes a differential variant $q_\Delta$: to compare a candidate against a baseline with known score $q_0$, the judge scores both in the same call pattern, and the candidate score is reported as $q_0 + (q_{\text{cand}} - q_{\text{base-rescore}})$, clamped to $[0, 1]$. This cancels common-mode judge noise and is used by every technique that refines a previous answer (HARQ-IR, turbo, FEC post-decode, diversity synthesis).

\paragraph{Output contract.} Every technique returns a \texttt{ReliabilityRun} record containing: \texttt{task\_id}, \texttt{technique}, \texttt{individual\_outputs} (all generator calls), \texttt{overhead\_outputs} (synthesizer/critic/decoder/judge calls not directly part of the answer), \texttt{combined\_output} (the final answer text), \texttt{final\_quality} (the judge score of the combined output), and \texttt{rounds} (iterations used). Cost accounting sums all \texttt{cost\_usd} fields across \texttt{individual\_outputs $\cup$ overhead\_outputs $\cup$ judge calls}. Techniques that synthesize or refine (diversity~MRC/EGC, fountain, HARQ, turbo, FEC) implement a \emph{best-of-sequence guard}: if the post-synthesis score is lower than the best individual score seen, the final output reverts to the best individual output. This guard is what makes the running-max curves in \Cref{fig:threshold-3panel} monotonic regardless of refinement quality.

\subsection{Baseline (uncoded transmission)}
\label{app:impl-baseline}

A single channel call with the task prompt. No parity, no diversity, no feedback:
$Y \gets \calA(X)$, $q_0 \gets q(X, Y)$, return $Y$ with score $q_0$. This is the ``uncoded'' reference point; all gains are reported relative to it after paired matching on \texttt{task\_id} (and, when \texttt{repeat\_runs}${>}1$, on \texttt{(task\_id, repeat\_idx)}).

\subsection{Diversity ensemble (SC / MRC / EGC)}
\label{app:impl-diversity}

\paragraph{Analog.} Multipath diversity. Generate $d$ noisy copies and combine: SC picks the best, MRC weights by quality, EGC weights equally.

\paragraph{Algorithm.} Let $\calA_1, \ldots, \calA_d$ be channels (spatial diversity: different model families; frequency diversity: different prompt variants of the same question; time diversity: different temperature draws of the same model). Generate $Y_i \gets \calA_i(X)$ and score all in parallel: $q_i \gets q(X, Y_i)$.
\begin{itemize}[nosep,leftmargin=*]
  \item \textbf{SC}: return $Y_{i^*}$ where $i^* = \argmax_i q_i$. No synthesizer call.
  \item \textbf{MRC / EGC}: pass the full list $\{(Y_i, q_i, \text{model}_i)\}$ to a synthesizer LLM $\calA_s$ (the judge model, which is strictly stronger than the channel models and has already seen the candidates during scoring).
\end{itemize}
The MRC synthesis prompt marks the top-scoring response as \texttt{[BEST]} and others as \texttt{[ALT-i]} with their quality scores, and instructs: start from \texttt{[BEST]}; add only details from \texttt{[ALT-i]} that do not contradict \texttt{[BEST]}; prefer higher-quality responses on conflicts; do not introduce external facts. The EGC prompt is similar but presents all responses as ``equally weighted'' and uses majority reasoning for conflicts. Synthesis is called with temperature $0.1$ (MRC) or $0.2$ (EGC).

\paragraph{Guards.}
\emph{(i)~MRC dominance fast path.} If all non-best scores satisfy $q_i < 0.5 \cdot q_{\max}$, synthesis is skipped and SC is returned---analogous to one branch dominating in AWGN MRC.
\emph{(ii)~Best-of-sequence.} The synthesis output is rescored via $q_\Delta$ against the best individual. If the result does not improve over the best individual, SC output is returned instead. This enforces the combining guarantee $\text{MRC}\geq\text{SC}$ empirically, even when the synthesizer has regressions.
\emph{(iii)~Identity detection.} If the synthesizer returns the best individual text verbatim, no rescore is performed (the best-individual score is used directly, saving a judge call).

\paragraph{Defaults.} $d=2$ spatial branches (two different model families); prompt variants $=1$ (default wording only); temperature spread unused in the main tables (all branches at each model's configured default temperature).

\paragraph{Wider-pool variants: \texttt{diversity\_sc\_N} and \texttt{diversity\_mrc\_discrete\_N}.}
The two wider-pool operators referenced in \Cref{tab:operator-view} share the same multi-model sampling rule and differ only in the combiner. \emph{Sampling:} both draw $N$ candidates by cycling through the configured channel pool in round-robin order---sample $i$ is generated by channel index $i \bmod |\{\calA_1,\ldots,\calA_d\}|$ at temperature $0.7$, so an $N{=}5$ run with $d{=}2$ channels uses $\{\calA_1,\calA_2,\calA_1,\calA_2,\calA_1\}$. Each candidate is independently scored by the judge to obtain $q_i$.
\emph{\texttt{diversity\_sc\_N}:} return the argmax-$q_i$ sample. No synthesizer or voter call; total cost is $N$ generator calls $+ N$ judge calls.
\emph{\texttt{diversity\_mrc\_discrete\_N}:} cluster the $N$ samples into semantic equivalence classes via one extra \emph{voter} LLM call (defaults to $\calA_1$) at temperature $0.0$; the voter receives the task plus all $N$ samples and returns a JSON array $[\ell_1, \ldots, \ell_N]$ of cluster IDs. We sum $q_i$ within each cluster, pick the cluster with the largest sum, and return its top-$q_i$ member as the combined output. Failure handling: if voter output cannot be parsed as a length-$N$ JSON integer array, every sample is assigned to its own cluster and the operator degenerates to \texttt{diversity\_sc\_N} on that task. Total cost is $N$ generator calls $+ N$ judge calls $+ 1$ voter call; the voter call is recorded in the run's overhead bucket. \textbf{Defaults.} $N=5$; voter $= \calA_1$ (the first configured channel). The operator's continuous-MRC counterpart (\texttt{diversity\_mrc} with $d=N$) trades the voter call for an LLM synthesis call, so the per-task call counts are equal up to that one substitution; the empirical comparison in \Cref{sec:exp-baselines} attributes the discrete-MRC win at 8B and its 3B-QA reversal to voter-LLM clustering quality rather than to a difference in budget.

\paragraph{Discrete-MRC vs.\ CISC: orthogonal generalizations.}
CISC~\citep{taubenfeld2025confidence} and \texttt{diversity\_mrc\_discrete\_N} both apply discrete-MRC (cluster, sum CSI per cluster, return the top-CSI member of the winning cluster) but specialize different axes of the same underlying operator. CISC fixes the pool to a single policy (one model, $N$ temperature draws) and feeds intrinsic token-logprob confidence as the per-sample weight, eliminating the judge call. \texttt{diversity\_mrc\_discrete\_N} fixes the CSI source to the judge score and generalizes the pool to a multi-model channel set sampled in round-robin. Neither subsumes the other: the strict superset is \emph{Diversity-MRC-Discrete-$N$-Soft}---multi-model pool with logprob CSI---which collapses to CISC at $|\text{channels}|{=}1$ and to \texttt{diversity\_mrc\_discrete\_N} at \emph{judge}-CSI. We implement Diversity-MRC-Discrete-$N$-Soft as \texttt{diversity\_mrc\_discrete\_N\_soft} in \Cref{app:impl-soft} and report it on the local Ollama configurations only; the cloud configuration uses Anthropic Haiku, which does not expose logprobs, so that channel must be dropped before this technique can run. We flag a residual concern: logprobs from different model families are not on the same scale (different vocabularies, different calibration), so summing $c_i$ across models inside a cluster is a heuristic rather than a calibrated weighting. CISC sidesteps this because all $N$ samples come from one policy. The empirical question---whether multi-model pool gains compose additively with intrinsic-CSI gains, despite cross-model logprob incomparability---is what this technique is meant to answer.

\subsection{Hybrid ARQ -- Chase Combining (HARQ-CC)}
\label{app:impl-harq-cc}

\paragraph{Analog.} Chase combining: same codeword retransmitted, receiver soft-combines all copies.

\paragraph{Algorithm.} Up to $K=5$ retransmissions of the \emph{same} prompt to the \emph{same} channel (with independent stochastic sampling each call). After each retransmission $k$, if $q_k \geq \tau=0.85$, stop and return $Y_k$. Otherwise, at the end, pass all $\{Y_1, \ldots, Y_K\}$ to a \emph{chase-combining synthesizer}: a prompt that shows every attempt with its quality score and instructs to (1)~detect consensus (high-confidence signal), (2)~harvest unique details from individual attempts, (3)~resolve conflicts in favor of higher-quality attempts. The synthesized output is rescored via $q_\Delta$ against the best individual, with the best-of-sequence guard applied.

\paragraph{Defaults.} $K=5$, $\tau=0.85$, synthesizer = the critic channel (same model family as the generator, communication-faithful) at temperature $0.3$.

\subsection{Hybrid ARQ -- Incremental Redundancy (HARQ-IR)}
\label{app:impl-harq-ir}

\paragraph{Analog.} Incremental redundancy: each retransmission carries \emph{new parity bits}. In our analog, the ``new parity'' is structured critic feedback that points to specific errors in the previous attempt.

\paragraph{Algorithm.} Let $\calA_g$ be the generator channel and $\calA_c$ be the critic channel (default: $\calA_c = \calA_g$, matching real HARQ decoders).
\begin{enumerate}[nosep,leftmargin=*]
  \item \emph{Round 1.} $Y_1 \gets \calA_g(X)$; $q_1 \gets q(X, Y_1)$ (absolute scoring). Initialize $Y^{\star} \gets Y_1$, $q^{\star} \gets q_1$, score history $\calS \gets [q_1]$, correction buffer $\calC \gets \emptyset$. If $q_1 \geq \tau=0.85$, return.
  \item \emph{Rounds $k = 2, \ldots, K=5$.}
  \begin{enumerate}[nosep,leftmargin=*]
    \item \emph{Critic pass.} The critic is shown the task, the current best $Y^{\star}$, the current score $q^{\star}$, the optional reference, and a deduplication list of already-addressed quoted issues from $\calC$. It is instructed to return a JSON array of issues, each a record $\{$\texttt{quote}, \texttt{type}$\in\{$factual\_error, missing\_content, reasoning\_gap, unclear$\}$, \texttt{correction} or \texttt{detail}, \texttt{severity}$\in\{$critical, major, minor$\}\}$. Temperature $0.2$.
    \item \emph{Parse.} A multi-stage JSON parser (raw $\to$ stripped-backtick $\to$ fenced-code-block $\to$ inline-array) tolerates weak critics. Explicit \texttt{[]} or \texttt{PASS} text yields an empty issue list. Parse failures degrade to a single unstructured issue with the raw text.
    \item \emph{Early-stop checks (only when \texttt{early\_exit}=True).}
      If the issue list is empty and $q^{\star} \geq 0.9\tau$, break (genuine convergence).
      If the score history has plateaued over a window of $2$ with spread $< 0.015$, break (EXIT-chart flatness).
    \item \emph{Generator pass.} If the issue list is structured (at least one item has a \texttt{quote}), the refinement prompt becomes a \emph{correction list}: for each issue, show the quoted text plus the requested fix / insertion, and explicitly instruct the generator to preserve every other part of the answer verbatim. If the issues are unstructured, fall back to a full-rewrite prompt. $Y_k \gets \calA_g(\text{refine-prompt})$; $q_k \gets q_\Delta(X, Y_k, Y^{\star}, q^{\star})$ (differential scoring against the current best).
    \item \emph{Accept / reject.} If $q_k \geq q^{\star}$, set $Y^{\star} \gets Y_k$, $q^{\star} \gets q_k$, and append applied corrections to $\calC$. Else, keep $Y^{\star}$ unchanged (regression protection) and do not enlarge $\calC$ (so the same issues remain available to the critic next round).
    \item \emph{Score-history update.} $\calS \gets \calS \cup \{q_k\}$. If $q^{\star} \geq \tau$, break.
  \end{enumerate}
\end{enumerate}
Return $Y^{\star}$ with score $q^{\star}$.

\paragraph{Communication-faithful defaults.} \texttt{early\_exit}=False (all $K$ rounds run, matching real decoders); $\calA_c = \calA_g$ (same-complexity decoder).

\paragraph{Key design choices.}
\emph{(i)~Correction-based refinement} (non-destructive accumulation): the generator is prompted to \emph{apply specific corrections while preserving correct content}, not to rewrite the answer from scratch. This is the agent analog of the soft-buffer accumulation in real HARQ-IR---the previous answer is the ``received signal'' and corrections are accumulated parity bits.
\emph{(ii)~Structured JSON critique} forces the critic to quote specific text and tag severity, enabling deduplication across rounds. Vague feedback is the analog of retransmitting identical parity---zero incremental information.
\emph{(iii)~Differential scoring per round} reduces judge noise: scoring candidate and baseline in the same judge call cancels common-mode variance.

\subsection{Turbo decoder}
\label{app:impl-turbo}

\paragraph{Analog.} Two SISO decoders exchanging extrinsic information through an interleaver; each iteration one decoder updates its belief using the other's extrinsic LLRs, with a damping factor $\alpha \in (0, 1]$ to prevent oscillation~\citep{vogt2000improving}.

\paragraph{Algorithm.} Let $\calA_g$ be the generator, $\calA_c$ the critic (default: $\calA_c = \calA_g$, ``same'' mode), $q$ the judge. Maintain best $Y^{\star}$, best score $q^{\star}$, correction buffer $\calC$, score history $\calS$, current damping $\alpha \gets \alpha_0 = 0.5$, and consecutive-regression counter $r$.
\begin{enumerate}[nosep,leftmargin=*]
  \item \emph{Iteration 0.} $Y_0 \gets \calA_g(X)$; $q_0 \gets q(X, Y_0)$; initialize. If $q_0 \geq \tau=0.9$, return.
  \item \emph{Iterations $k = 1, \ldots, K-1=5$} (total max iterations including iter 0 is $6$, but the default configuration caps at $K=5$):
  \begin{enumerate}[nosep,leftmargin=*]
    \item \emph{Interleaver.} Select a lens from $\{$\texttt{correctness}, \texttt{completeness}, \texttt{reasoning}, \texttt{clarity}$\}$ by index $k \bmod 4$. The lens is appended to the critic's system instruction (``pay special attention to X this round, but flag any problem''). This decorrelates critic observations across iterations---the agent analog of the bit-interleaver in real turbo codes. Without the interleaver the critic tends to produce correlated feedback (same issues flagged every round), reducing extrinsic information to near-zero.
    \item \emph{Critic pass.} Same structured-JSON critique as HARQ-IR, but with the lens hint and the dedup list of already-applied quoted issues from $\calC$.
    \item \emph{Extrinsic scaling (the $\alpha$ step).} Apply three filters in sequence to the critic's issue list:
      \begin{itemize}[nosep]
        \item \emph{Severity floor.} Drop issues below \texttt{severity\_floor}$=$\texttt{major} (i.e.\ only \texttt{critical} and \texttt{major} survive). Minor issues on already-polished answers are the critic fabricating problems---agent analog of LLR magnitude thresholding.
        \item \emph{$\alpha$ scaling.} Keep $\lceil n \cdot \alpha \rceil$ issues (at least one, if any remain), sorted by severity ascending (most severe first). This is the direct agent analog of extrinsic-LLR scaling in turbo decoders: limit how aggressively each iteration can correct.
        \item \emph{Hard cap.} \texttt{max\_corrections\_per\_round}$=2$ issues total (LLR clipping).
      \end{itemize}
    \item \emph{Generator pass.} Apply the $\leq 2$ surviving corrections via the same correction-list prompt as HARQ-IR (preserve-except-corrections contract). Score via $q_\Delta$ against $Y^{\star}$. Append $q_k$ to $\calS$.
    \item \emph{Accept / reject with adaptive damping.}
      If $q_k \geq q^{\star}$: accept, append applied corrections to $\calC$, reset $r \gets 0$, and \emph{relax} $\alpha \gets \min(\alpha_0, 1.2\alpha)$.
      Else: reject (regression protection), increment $r$, and \emph{damp} $\alpha \gets \max(0.1, 0.5\alpha)$.
    \item \emph{Divergence break.} If $r \geq 2$ (two consecutive regressions), break. The EXIT trajectory is diverging and further iterations will consume tokens without recovery. The best-of-sequence guard ensures $Y^{\star}$ is still the best seen.
    \item \emph{Convergence break.} If $q^{\star} \geq \tau$, break.
  \end{enumerate}
\end{enumerate}
Return $Y^{\star}$ with score $q^{\star}$.

\paragraph{Key design choices.}
\emph{(i)~Same-complexity decoder} ($\calA_c = \calA_g$) matches real turbo codes, where both component decoders operate on the same trellis. Allowing a stronger critic is supported (\texttt{critic\_model}="judge") but breaks the analogy.
\emph{(ii)~Communication-faithful fixed iterations}: \texttt{early\_exit}=False runs all $K$ rounds. The divergence break on $r \geq 2$ is activity-dependent; the plateau-detection break is gated by \texttt{early\_exit}=True.
\emph{(iii)~Interleaver lens rotation} is what distinguishes turbo from HARQ-IR in our implementation. With the same critic and no interleaver, the two techniques become identical.

\paragraph{Why $\alpha_0 = 0.5$, severity floor = major.} Empirical analysis on 14B models showed the first refinement round at $\alpha=0.7$ produced the largest regression (hard-task mean $0.627 \to 0.592$), consistent with over-correction even at moderate damping. Lowering $\alpha_0$ to $0.5$ and gating minor issues eliminated the first-round regression without harming overall gain. At 3B/4B (\Cref{fig:threshold-3panel}), even $\alpha_0=0.5$ is insufficient---the refinement map itself is contractive on the wrong side, and the best-of-sequence guard does all the work.

\subsection{Fountain (Rateless) Decoder}
\label{app:impl-fountain}

\paragraph{Analog.} LT / Raptor codes~\citep{shokrollahi2006raptor}: generate an unbounded stream of encoded symbols and stop when the receiver can decode. No fixed code rate.

\paragraph{Algorithm.} Parameters: $N_{\max}=10$, $N_{\min}=2$, confidence threshold $\gamma = 0.85$.
\begin{enumerate}[nosep,leftmargin=*]
  \item For $n = 1, \ldots, N_{\max}$, round-robin across channels $\calA_{1+(n-1)\bmod d}$; temperature perturbed by $0.5 + 0.1(n \bmod 5)$ for stochastic encoding. $Y_n \gets \calA(X, \text{temp})$; $q_n \gets q(X, Y_n)$.
  \item After each sample, if $n \geq N_{\min}$, compute confidence $\gamma_n = 0.6 \cdot \overline{q} + 0.4 \cdot \text{agreement}$, where $\overline{q}$ is the mean score so far and $\text{agreement} = 1 - (\max - \min)$ over the top-$\lceil n/2\rceil$ scores. If $\gamma_n \geq \gamma$, stop.
  \item \emph{ML decode.} Sort samples by quality descending. If the best sample dominates the runner-up by $> 0.20$, return it directly (ML-decode fast path, no synthesis). Else, apply \emph{erasure marking}: drop samples whose score is $> 0.10$ below the best. The surviving set is passed to a weighted-synthesis prompt with weights $w_i = q_i / \sum q_j$ and explicit instructions ``trust the higher-weighted sample unconditionally on conflicts; add details from lower-weighted samples only when they do not contradict''. The synthesized output is scored independently and subjected to the best-of-sequence guard.
\end{enumerate}

\paragraph{Key design choices.} \emph{Erasure marking} (drop samples $> 0.10$ below best) is what distinguishes this from naive ensembling: earlier versions that merged all top-half samples leaked wrong facts from weak samples into the answer (measured $-0.024$ vs best individual across $69$ tasks). The ML decoder now either picks the single best (fast path) or synthesizes only from a tight quality band.

\subsection{Forward Error Correction (FEC)}
\label{app:impl-fec}

\paragraph{Analog.} Structured redundancy: generate systematic bits plus parity sections. Code rate $r = k/n$ controls the redundancy level.

\paragraph{Algorithm.} The systematic bits are the \emph{main answer}: $Y_0 \gets \calA(X)$, $q_0 \gets q(X, Y_0)$. Each parity section is a \emph{separate LLM call} with the main answer in context but a distinct instruction:
\begin{itemize}[nosep,leftmargin=*]
  \item \texttt{reasoning}: re-derive the answer from scratch via step-by-step reasoning; flag any discrepancy with the main answer.
  \item \texttt{verification}: check each claim in the main answer for correctness, logical validity, completeness, and internal consistency; state each issue with a quote, a reason, and a correction.
  \item \texttt{alternative}: solve the task using a \emph{different} approach / strategy without reference to the main answer. Cross-check.
  \item \texttt{confidence}: rate each claim HIGH / MEDIUM / LOW confidence; flag LOW items as potential errors.
\end{itemize}
Code rates map to parity-section lists: $r=1.0 \mapsto []$ (uncoded), $r=0.75 \mapsto [\texttt{reasoning}]$, $r=0.50 \mapsto [\texttt{reasoning}, \texttt{verification}]$, $r=0.33 \mapsto [\texttt{reasoning}, \texttt{verification}, \texttt{alternative}]$, $r=0.25 \mapsto$ all four. The decoder sees the main answer and \emph{all} parity sections as separate numbered blocks and runs a \emph{syndrome-decoding prompt}: for each present parity section, include a bullet describing what to cross-check (``does step-by-step reach the same conclusion?'', ``did verification find errors?'', ``does the independent solution agree?'', ``are LOW-confidence items actually wrong?''); detect inconsistencies; correct; output the final answer. Decoder temperature $0.2$. The decoded output is rescored via $q_\Delta$ against $Y_0$ with the best-of-sequence guard.

\paragraph{Key design choice.} Each parity section is a \emph{separate LLM call}, not a single prompt that asks for everything. Cramming multiple checks into one call produces attention dilution and correlated ``parity'' (all sections restating the main answer). Separate calls give each check the model's full attention and produce genuinely independent redundancy, matching how real FEC computes each parity symbol independently.

\subsection{Adaptive Coding \& Modulation (ACM)}
\label{app:impl-acm}

\paragraph{Analog.} LTE / 5G ACM: a modulation-and-coding scheme (MCS) table is indexed by measured channel quality (CQI); good channels get high-rate / low-protection MCS, bad channels get low-rate / high-protection MCS.

\paragraph{Algorithm.} Profiles (MCS levels) are specified as a list of $\{$\texttt{name}, \texttt{difficulty\_range}, \texttt{model}, \texttt{technique}, \texttt{code\_rate} (for FEC), \texttt{num\_branches} (for diversity), \texttt{max\_rounds} (for HARQ), \texttt{cost\_multiplier}$\}$.
\begin{enumerate}[nosep,leftmargin=*]
  \item \emph{Difficulty estimation (CQI).} The \emph{pilot probe}: a short 2--3 sentence generation on the task, requested with token-level log-probabilities. The probe's mean logprob $\overline{\ell}$ is the CQI; difficulty is $d = 1 - \exp(\overline{\ell})$, clamped to $[0, 1]$. This is a direct analog of channel estimation from pilot symbols in physical-layer ACM. When the backend does not expose logprobs (e.g., Anthropic channels), ACM falls back to an LLM self-rating call (``rate this task's difficulty $[0, 1]$'' at $T{=}0.1$). The probe is billed as overhead.
  \item \emph{Profile selection.} Find the first MCS whose \texttt{difficulty\_range} contains the estimated difficulty; fall through to the maximum-protection MCS on miss.
  \item \emph{Execute.} Instantiate the selected technique (HARQ-IR / turbo / fountain / diversity-MRC / diversity-EGC / FEC / uncoded) with the profile's parameters, using the profile's \texttt{model} (if available in the channel pool; else the first channel).
\end{enumerate}
Routing tables are calibrated per model tier. The 3B table (qwen2.5:3b + gemma3:4b) routes MCS-0/1/2 to fountain with $n{=}2$ spatial branches and MCS-3 to diversity-EGC; fountain dominates every bucket at this scale. The 14B table (deepseek-r1:14b + phi3:14b) uses HARQ-IR (\texttt{max\_rounds}${=}2{\to}3$) for MCS-0/1 and turbo (\texttt{max\_rounds}${=}4{\to}6$) for MCS-2/3/4, reflecting the 14B leaderboard where iterative refinement beats spatial diversity. The Haiku + GPT-5-mini table routes MCS-0/1/2 to HARQ-IR, MCS-3 to diversity-MRC, and MCS-4 to turbo.

\subsection{Soft-Output Variants}
\label{app:impl-soft}

When the backend exposes token-level log-probabilities (Ollama, vLLM, OpenAI), three techniques have soft-decision counterparts that replace judge-based weights with intrinsic model confidence $c = \exp(\overline{\ell})$ where $\overline{\ell}$ is the mean token log-probability of the generation. Anthropic models do not currently expose logprobs; these modes fail fast rather than silently degrade.

\paragraph{SoftDiversityMRC.} Identical structure to MRC, but the combining weights are $w_i = c_i / \sum_j c_j$ rather than $q_i / \sum_j q_j$. Eliminates the judge call that would otherwise be needed to weight branches, at the cost of a weaker signal (logprob reflects the model's own uncertainty, not output correctness). The SC fallback on synthesis regression is unchanged.

\paragraph{SoftFountainDecoder.} Replaces the quality-based confidence estimator with a soft one: $\gamma_n = 0.6 \cdot \overline{c} + 0.4 \cdot (1 - \text{spread}(c))$, and erasure-marks samples with $c_i < 0.5 c_{\max}$. Low-logprob samples are dropped from synthesis, mirroring how real fountain decoders discard erasure-flagged symbols.

\paragraph{SoftACMRouter.} Replaces the explicit difficulty-estimation LLM call with a \emph{pilot probe}: a short generation (2--3 sentences) on the task, requested with logprobs. The probe's mean logprob \emph{is} the CQI: high logprob $\to$ easy channel $\to$ light protection; low logprob $\to$ hard channel $\to$ heavy protection. Profiles are keyed by confidence range rather than difficulty range.

\subsection{Where Each Technique Can Fail (and Why That Is a Feature)}
\label{app:impl-failure-modes}

Because all techniques share the best-of-sequence guard, their \emph{final} quality can never drop below the best individual call. But their \emph{per-iteration} or \emph{per-combining-step} dynamics differ sharply across the operating regime, and these differences are what the framework predicts:
\begin{itemize}[nosep,leftmargin=*]
  \item \emph{Diversity} (SC/MRC/EGC) degrades only if branches are strongly correlated (shared training data), reducing the effective diversity order. The MRC~$\geq$~EGC prediction assumes near-perfect CSI; in our data (\Cref{sec:exp-diversity}, \Cref{app:mrc-proof}) CSI is noisy enough on occasions that EGC $\geq$ MRC reverses.
  \item \emph{HARQ-CC} fails when all attempts share the same failure mode; synthesis cannot correct errors that every attempt makes.
  \item \emph{HARQ-IR / Turbo} fail below the iterative-decoding threshold: the refinement map is expansive rather than contractive, and extra iterations inject noise. The $\alpha$ damping plus divergence break plus best-of-sequence guard contain the damage but do not create gain. \Cref{fig:threshold-3panel} is the visualization of this failure mode at 3B.
  \item \emph{Fountain} can over-generate on adversarial tasks where the confidence metric is miscalibrated (many high-quality-looking but wrong samples), reaching $N_{\max}$ without stopping; the synthesized output is still best-of-sequence-guarded, so cost is wasted but quality is not harmed.
  \item \emph{FEC} fails when the parity sections themselves share the main answer's errors (the critic/verifier is misled by the main answer's framing). Separate LLM calls with different instructions (verification vs.\ independent re-derivation) mitigate but do not eliminate this.
  \item \emph{ACM} fails when the difficulty estimator is systematically miscalibrated, routing hard tasks to MCS-0 or easy tasks to MCS-4. Soft ACM replaces the meta-judgment with a direct logprob measurement, reducing this risk at the cost of requiring logprob support.
\end{itemize}

% ===========================================================================
% Per-configuration figure gallery
% ===========================================================================
\section{Per-configuration figure gallery}
\label{app:figure-gallery}

The same plotting pipeline (\texttt{agentcodec.plots.plot\_all\_from\_cache}) is applied to every channel configuration of \Cref{tab:model-configs}, producing a per-configuration set of plots from the single-call baseline cache and the per-technique caches. We reproduce the five most informative plots per configuration here and consolidate the remaining figures (heatmaps, quality distributions, FEC rate--distortion, matched-budget bars) on the project release. Throughout this gallery, ``$\rho$'' denotes per-task cost overhead relative to the uncoded baseline (\Cref{eq:cost-overhead}); all error bars are 95\% bootstrap confidence intervals; all significance stars are paired Wilcoxon signed-rank vs.\ baseline ($^{*}p{<}0.05$, $^{**}p{<}0.01$, $^{***}p{<}10^{-3}$). Communication-theoretic technique names referenced in legends are: \textsc{Diversity-SC}, \textsc{Diversity-MRC}, \textsc{Diversity-EGC} (selection / maximal-ratio / equal-gain combining over a multi-channel pool); \textsc{HARQ-CC} and \textsc{HARQ-IR} (hybrid retransmission with chase combining vs.\ incremental redundancy); \textsc{Turbo} (extrinsic-information-exchange decoder); \textsc{Fountain} (rateless sampling with confidence-based stopping); \textsc{FEC}-$r$ (forward error correction at code rate $r$); \textsc{ACM} (the hand-coded adaptive-coding-and-modulation router); \textsc{Diversity-SC-$N$} and \textsc{Diversity-MRC-Discrete-$N$} (wider-pool $N$-sample variants of the same selection / discrete-MRC operators).

\subsection{14B local: DeepSeek-R1 14B + Phi-3 14B, judge Gemma-3 12B}
\label{app:gallery-14b}

This subsection collects the 14B-local diagnostics. \Cref{fig:gallery-14b-qvc} plots quality versus cost on three axes (raw cost, normalized overhead $\rho$, and quality gain over baseline). \Cref{fig:gallery-14b-iter} shows the iterative-refinement diagnostics, with the HARQ-CC-versus-HARQ-IR convergence in \Cref{fig:gallery-14b-harq} and the per-iteration turbo trajectory in \Cref{fig:gallery-14b-turbo}. \Cref{fig:gallery-14b-acm} decomposes the realized-router-to-oracle gap on this configuration, and \Cref{fig:gallery-14b-cat} reports per-category quality on the four curated categories.

\begin{figure}[h]
  \centering
  \includegraphics[width=\linewidth]{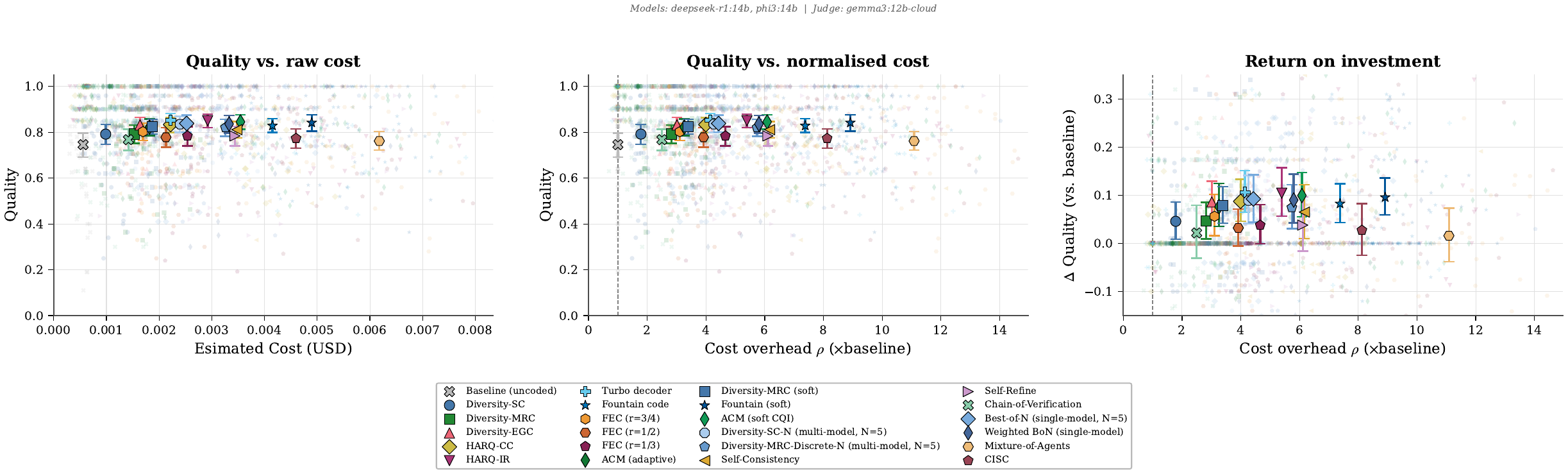}
  \caption{14B local: quality versus cost. \emph{Left}: raw cost (US dollars) per task; \emph{center}: quality versus normalized cost overhead $\rho$; \emph{right}: quality gain over the single-call baseline versus $\rho$. Large markers are per-technique means with 95\% bootstrap confidence intervals; small markers are individual task runs. Techniques in the upper-left of the right panel give the best quality return per cost-dollar invested.}
  \label{fig:gallery-14b-qvc}
\end{figure}

\begin{figure}[h]
  \centering
  \begin{subfigure}[t]{0.49\linewidth}
    \includegraphics[width=\linewidth]{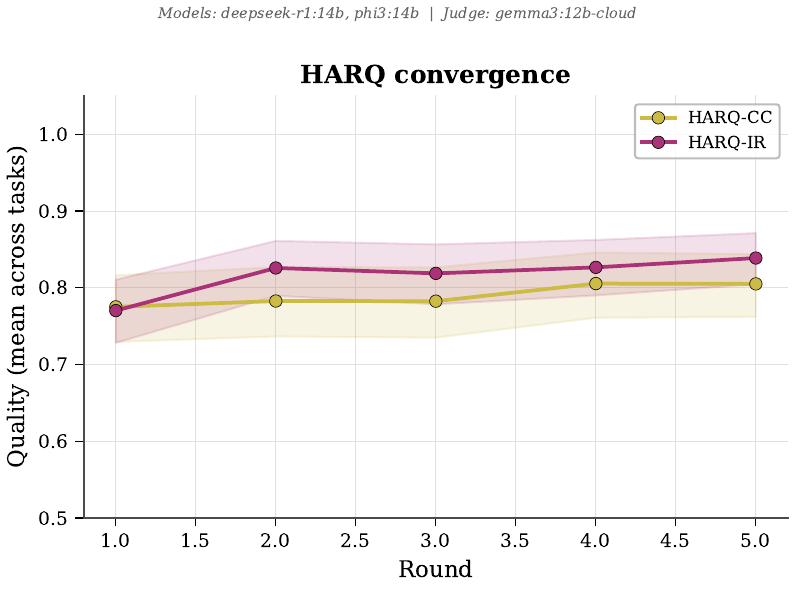}
    \caption{HARQ convergence.}
    \label{fig:gallery-14b-harq}
  \end{subfigure}\hfill
  \begin{subfigure}[t]{0.49\linewidth}
    \includegraphics[width=\linewidth]{figures/goods/14B_deepseek_phi3_gemma3/turbo_waterfall.pdf}
    \caption{Turbo iteration trajectory.}
    \label{fig:gallery-14b-turbo}
  \end{subfigure}
  \caption{14B local: iterative-refinement diagnostics. (\subref*{fig:gallery-14b-harq}) Mean quality versus retransmission round for chase combining (\textsc{HARQ-CC}) and incremental redundancy (\textsc{HARQ-IR}); incremental redundancy reaches its plateau in fewer rounds, consistent with the ``new parity per round'' interpretation. (\subref*{fig:gallery-14b-turbo}) Per-task turbo trajectories (thin lines) plus the all-tasks running-max mean (thick); iteration~0 is the initial generation before any refinement. The running-max curve is monotonic by the best-of-sequence guard (\Cref{rem:bos-guard}).}
  \label{fig:gallery-14b-iter}
\end{figure}

\begin{figure}[h]
  \centering
  \includegraphics[width=\linewidth]{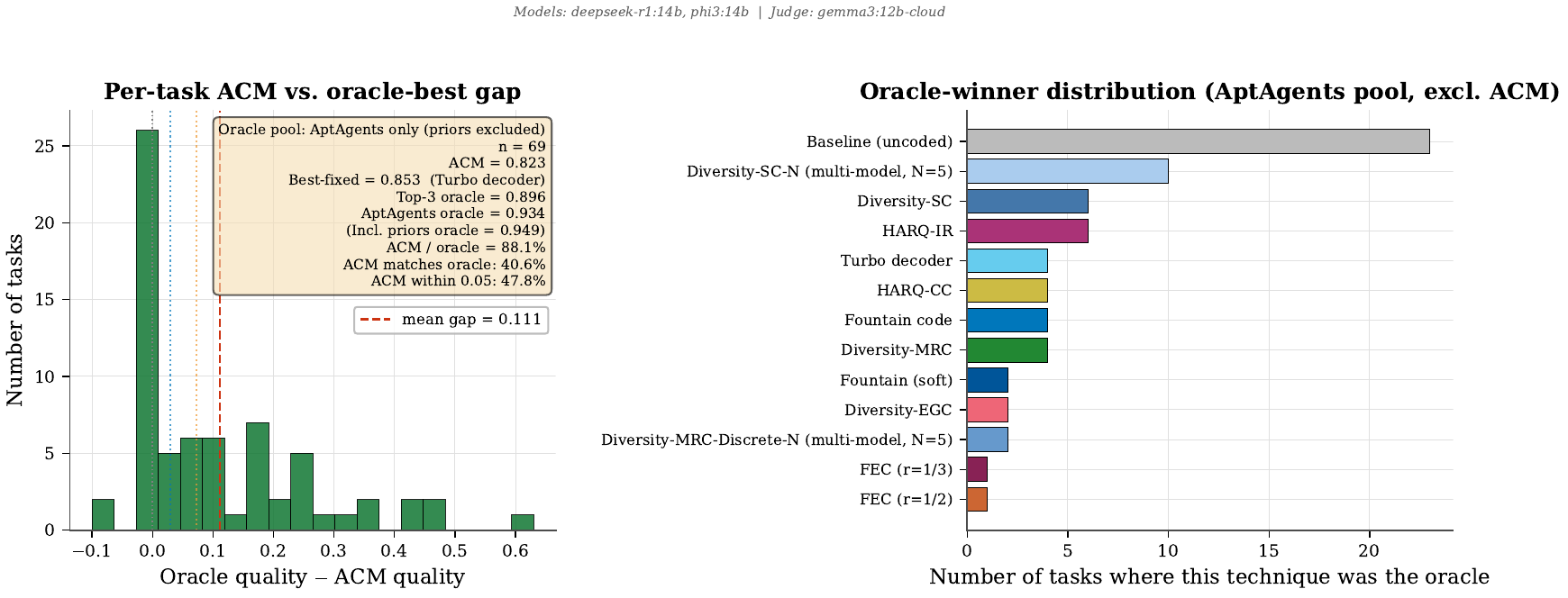}
  \caption{14B local: \textsc{ACM}-router oracle-gap decomposition. \emph{Left}: per-policy mean quality with 95\% bootstrap confidence intervals (cross-validated rows are out-of-fold). \emph{Right}: additive decomposition of the realized-router-to-oracle gap into a feature-set information limit, a finite-sample generalization gap, a policy gap (router class), and a realization gap (configuration drift).}
  \label{fig:gallery-14b-acm}
\end{figure}

\begin{figure}[h]
  \centering
  \includegraphics[width=0.85\linewidth]{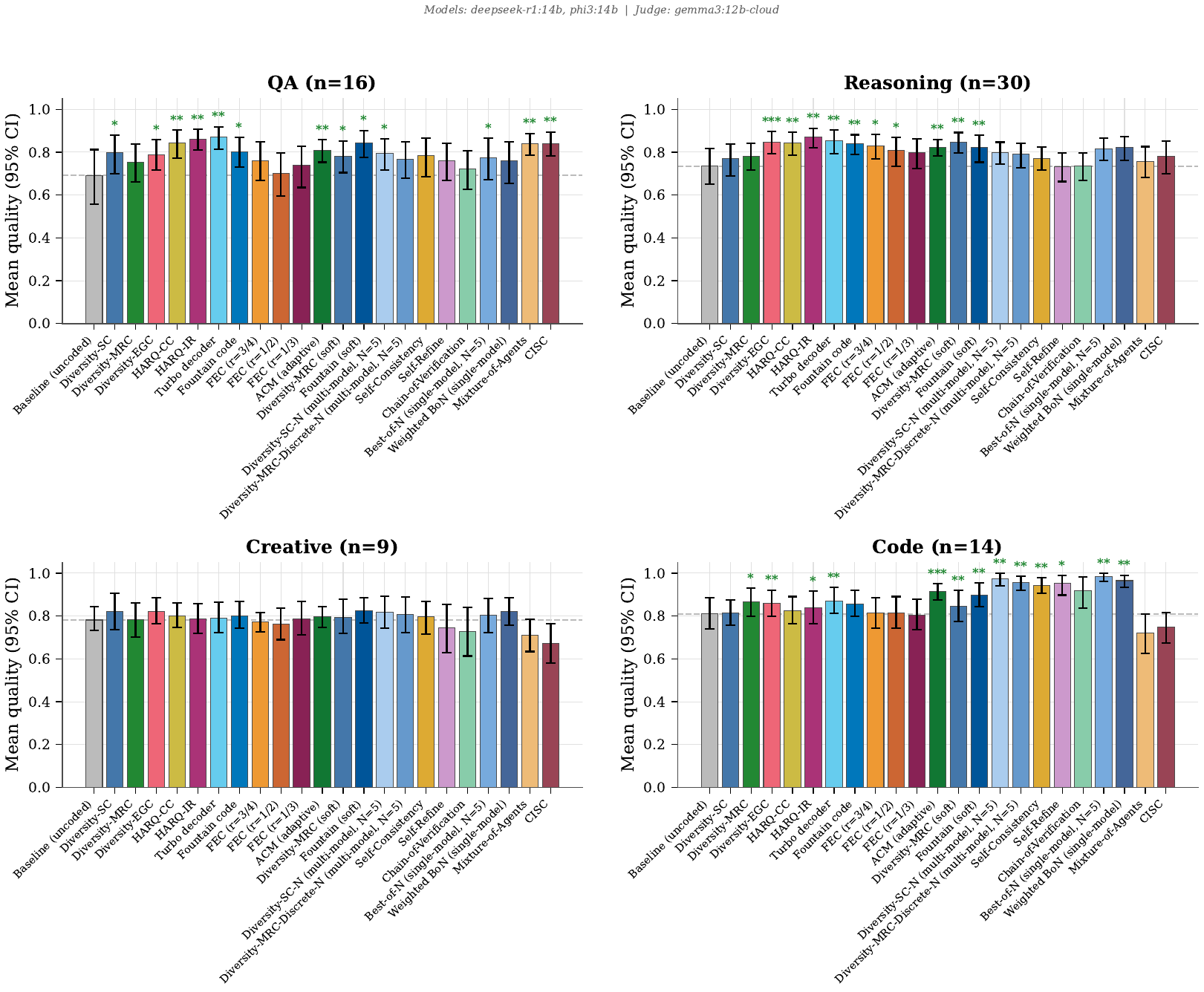}
  \caption{14B local: per-category mean quality on the 69 curated tasks; one panel per task category (question answering, reasoning, creative, code). Dashed line is the per-category baseline mean. Iterative techniques (\textsc{HARQ-IR}, \textsc{Turbo}) lead on reasoning and question answering; diversity combining (\textsc{Diversity-MRC}, \textsc{Diversity-EGC}) leads on code and creative.}
  \label{fig:gallery-14b-cat}
\end{figure}

\subsection{8B local: Llama-3.1 8B + Qwen-2.5 7B, judge Gemma-3 12B}
\label{app:gallery-8b}

This subsection mirrors the layout of \Cref{app:gallery-14b} for the 8B local configuration. The quality--cost panels are in \Cref{fig:gallery-8b-qvc}; the iterative-refinement diagnostics in \Cref{fig:gallery-8b-iter}; the oracle-gap decomposition in \Cref{fig:gallery-8b-acm}; and the per-category quality in \Cref{fig:gallery-8b-cat}.

\begin{figure}[h]
  \centering
  \includegraphics[width=\linewidth]{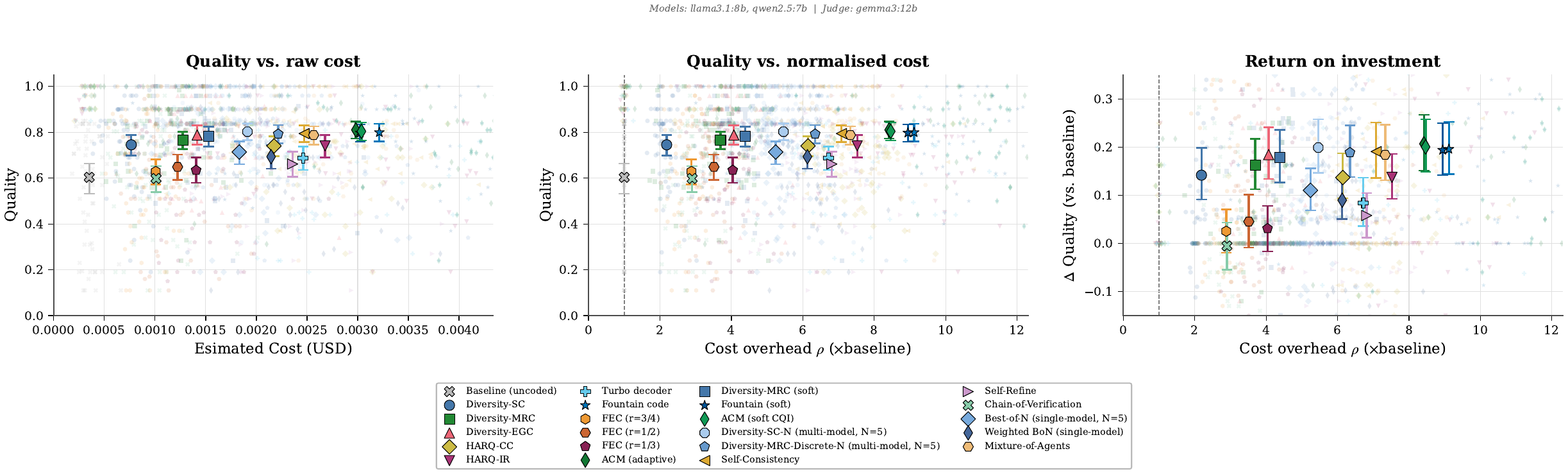}
  \caption{8B local: quality versus cost (panels as in \Cref{fig:gallery-14b-qvc}). The frontier shifts down in absolute quality compared with 14B but the relative ordering of comm-theoretic techniques persists.}
  \label{fig:gallery-8b-qvc}
\end{figure}

\begin{figure}[h]
  \centering
  \begin{subfigure}[t]{0.49\linewidth}
    \includegraphics[width=\linewidth]{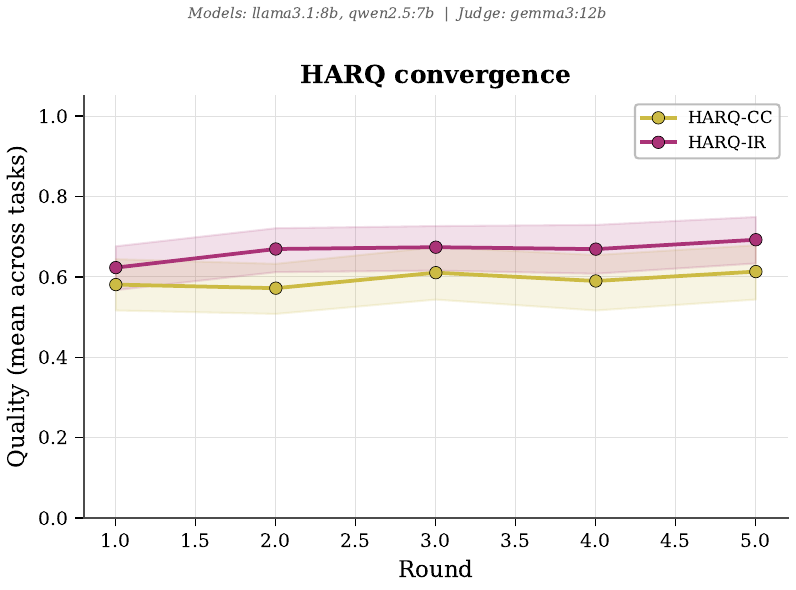}
    \caption{HARQ convergence.}
  \end{subfigure}\hfill
  \begin{subfigure}[t]{0.49\linewidth}
    \includegraphics[width=\linewidth]{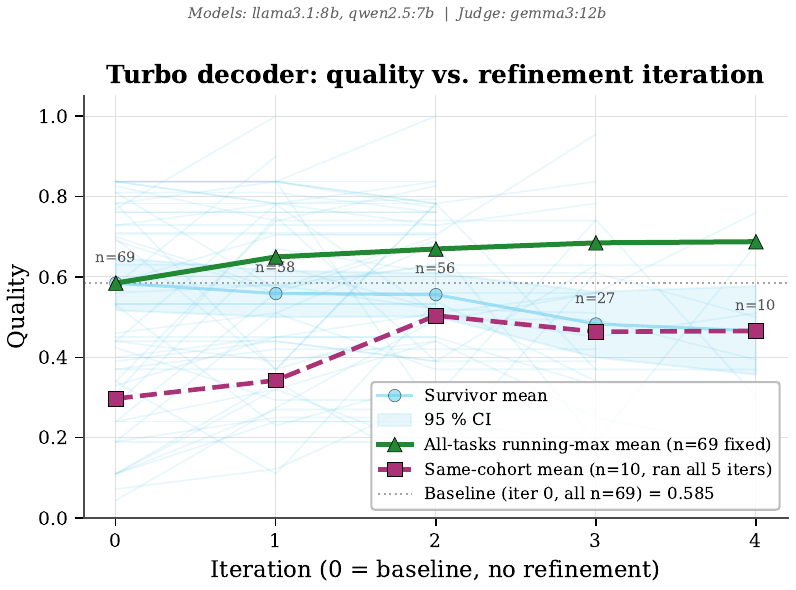}
    \caption{Turbo iteration trajectory.}
  \end{subfigure}
  \caption{8B local: iterative-refinement diagnostics. The turbo survivor cohort is noisy and barely advancing, the empirical signature of operating near the iterative-decoding threshold of \Cref{prop:refinement-threshold}; the running-max curve still climbs because the best-of-sequence guard keeps the delivered output monotone.}
  \label{fig:gallery-8b-iter}
\end{figure}

\begin{figure}[h]
  \centering
  \includegraphics[width=\linewidth]{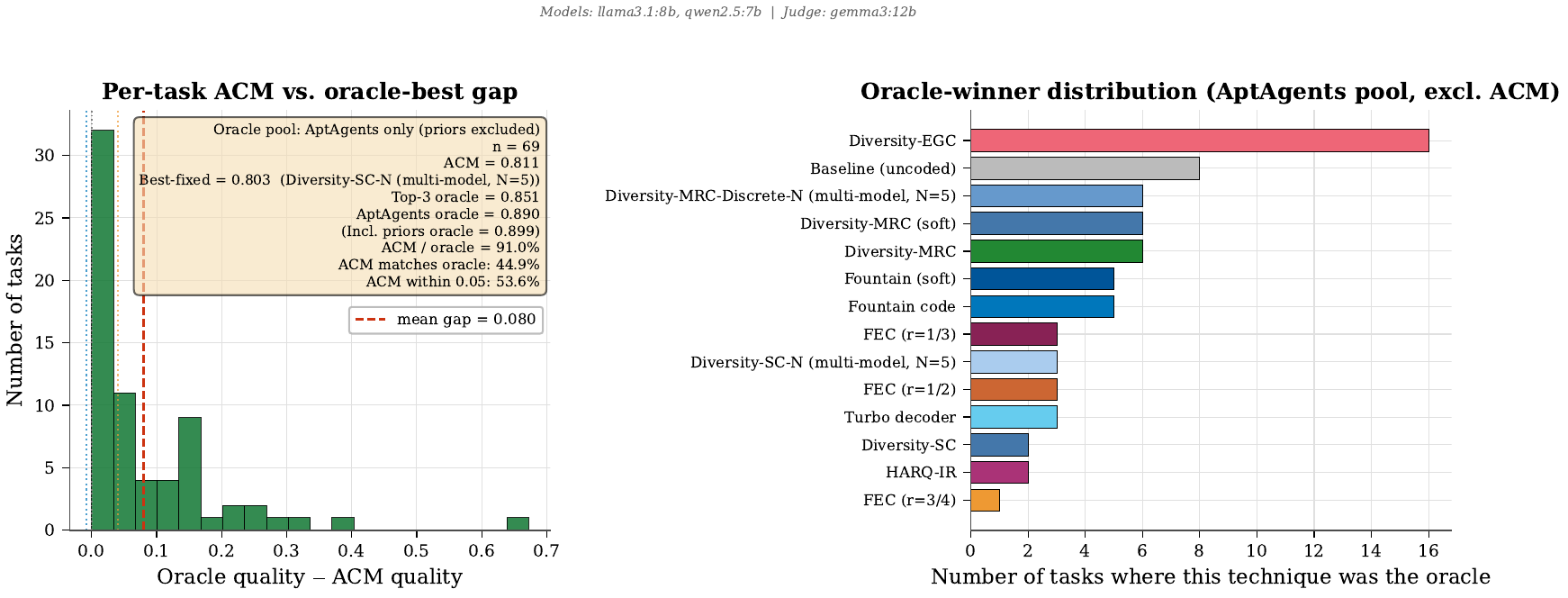}
  \caption{8B local: \textsc{ACM}-router oracle-gap decomposition (axes as in \Cref{fig:gallery-14b-acm}).}
  \label{fig:gallery-8b-acm}
\end{figure}

\begin{figure}[h]
  \centering
  \includegraphics[width=0.85\linewidth]{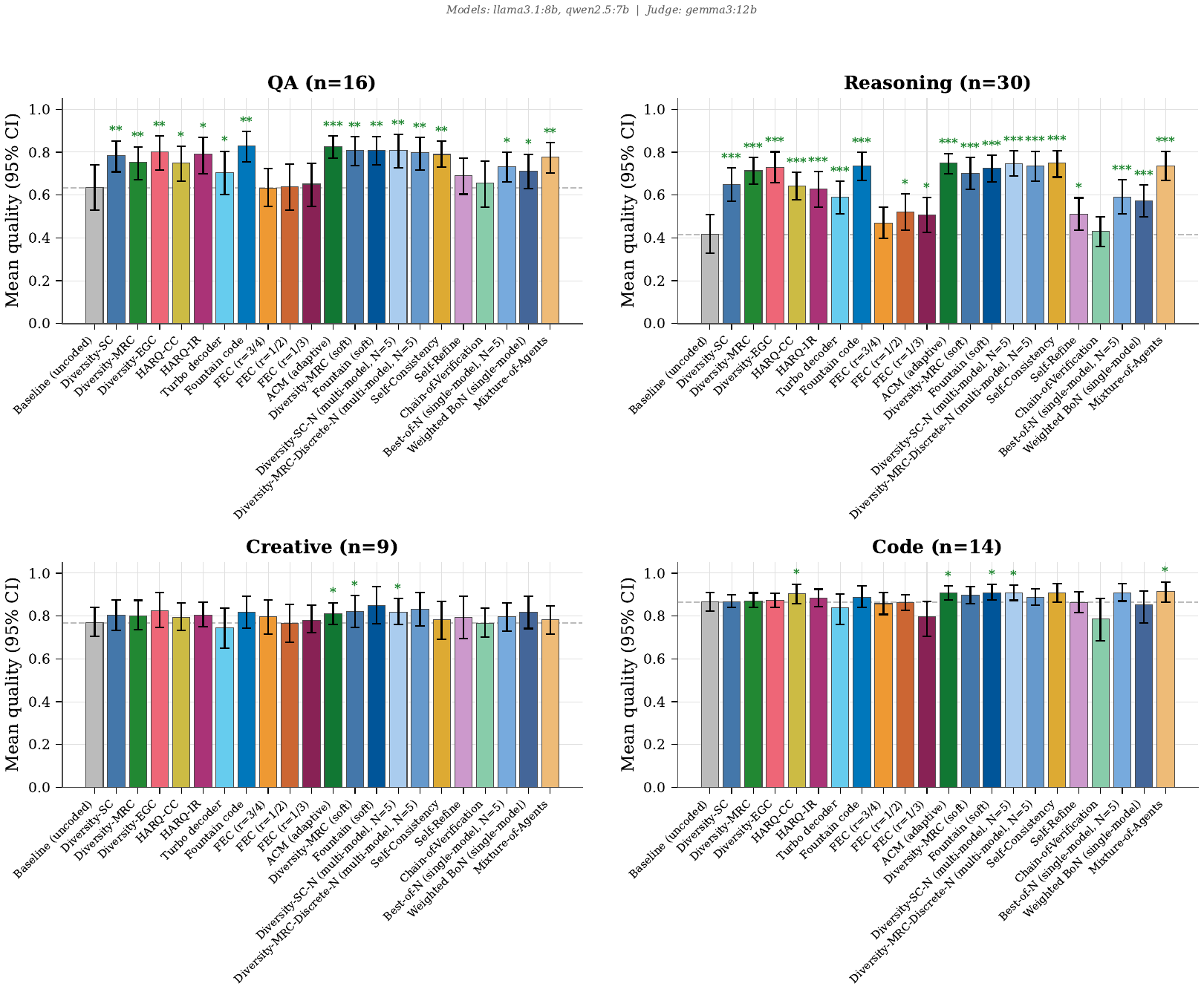}
  \caption{8B local: per-category mean quality (panels as in \Cref{fig:gallery-14b-cat}).}
  \label{fig:gallery-8b-cat}
\end{figure}

\subsection{3B local: Qwen-2.5 3B + Gemma-3 4B, judge Gemma-3 12B}
\label{app:gallery-3b}

This subsection mirrors the layout of \Cref{app:gallery-14b}. \Cref{fig:gallery-3b-qvc} plots quality versus cost; \Cref{fig:gallery-3b-iter} the iterative-refinement diagnostics (the empirical signature of operating below the iterative-decoding threshold of \Cref{prop:refinement-threshold}); \Cref{fig:gallery-3b-acm} the oracle-gap decomposition; and \Cref{fig:gallery-3b-cat} the per-category quality.

\begin{figure}[h]
  \centering
  \includegraphics[width=\linewidth]{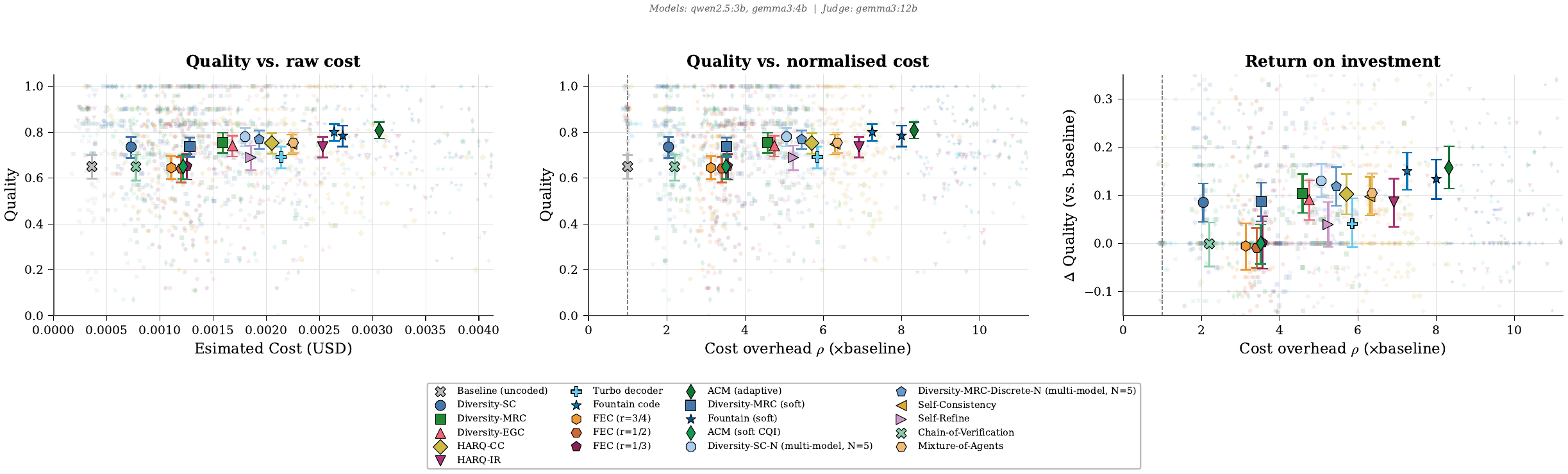}
  \caption{3B local: quality versus cost (panels as in \Cref{fig:gallery-14b-qvc}).}
  \label{fig:gallery-3b-qvc}
\end{figure}

\begin{figure}[h]
  \centering
  \begin{subfigure}[t]{0.49\linewidth}
    \includegraphics[width=\linewidth]{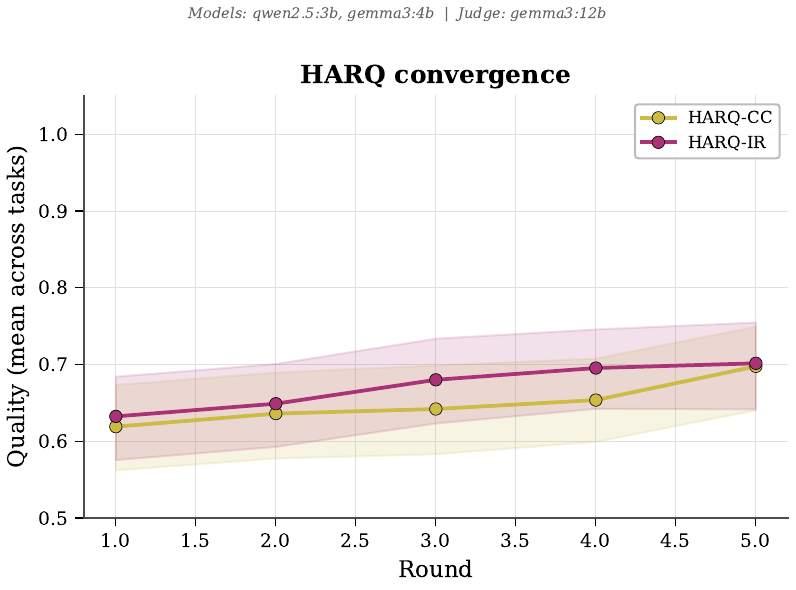}
    \caption{HARQ convergence.}
  \end{subfigure}\hfill
  \begin{subfigure}[t]{0.49\linewidth}
    \includegraphics[width=\linewidth]{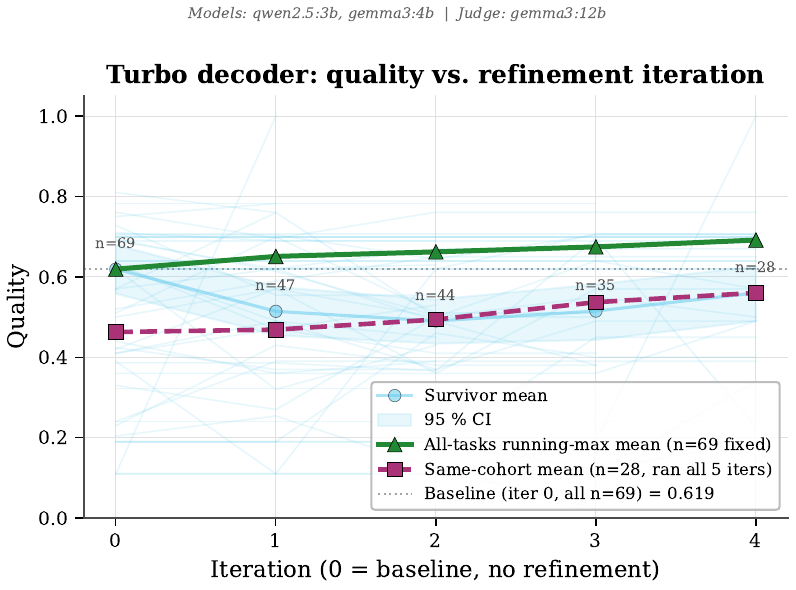}
    \caption{Turbo iteration trajectory.}
  \end{subfigure}
  \caption{3B local: iterative-refinement diagnostics. The turbo survivor cohort is net-descending, the empirical signature of operating below the iterative-decoding threshold of \Cref{prop:refinement-threshold}: the unguarded refinement map is contractive on the wrong side, and only the best-of-sequence guard keeps the delivered output from regressing.}
  \label{fig:gallery-3b-iter}
\end{figure}

\begin{figure}[h]
  \centering
  \includegraphics[width=\linewidth]{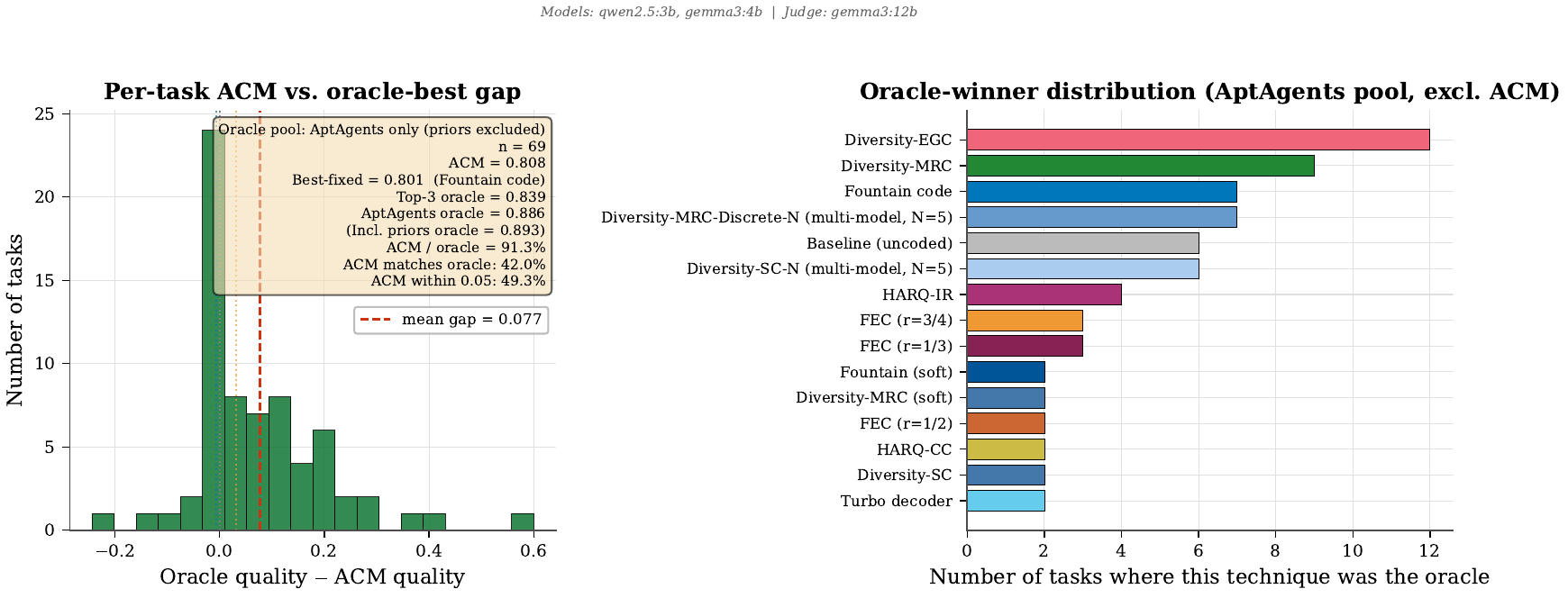}
  \caption{3B local: \textsc{ACM}-router oracle-gap decomposition (axes as in \Cref{fig:gallery-14b-acm}).}
  \label{fig:gallery-3b-acm}
\end{figure}

\begin{figure}[h]
  \centering
  \includegraphics[width=0.85\linewidth]{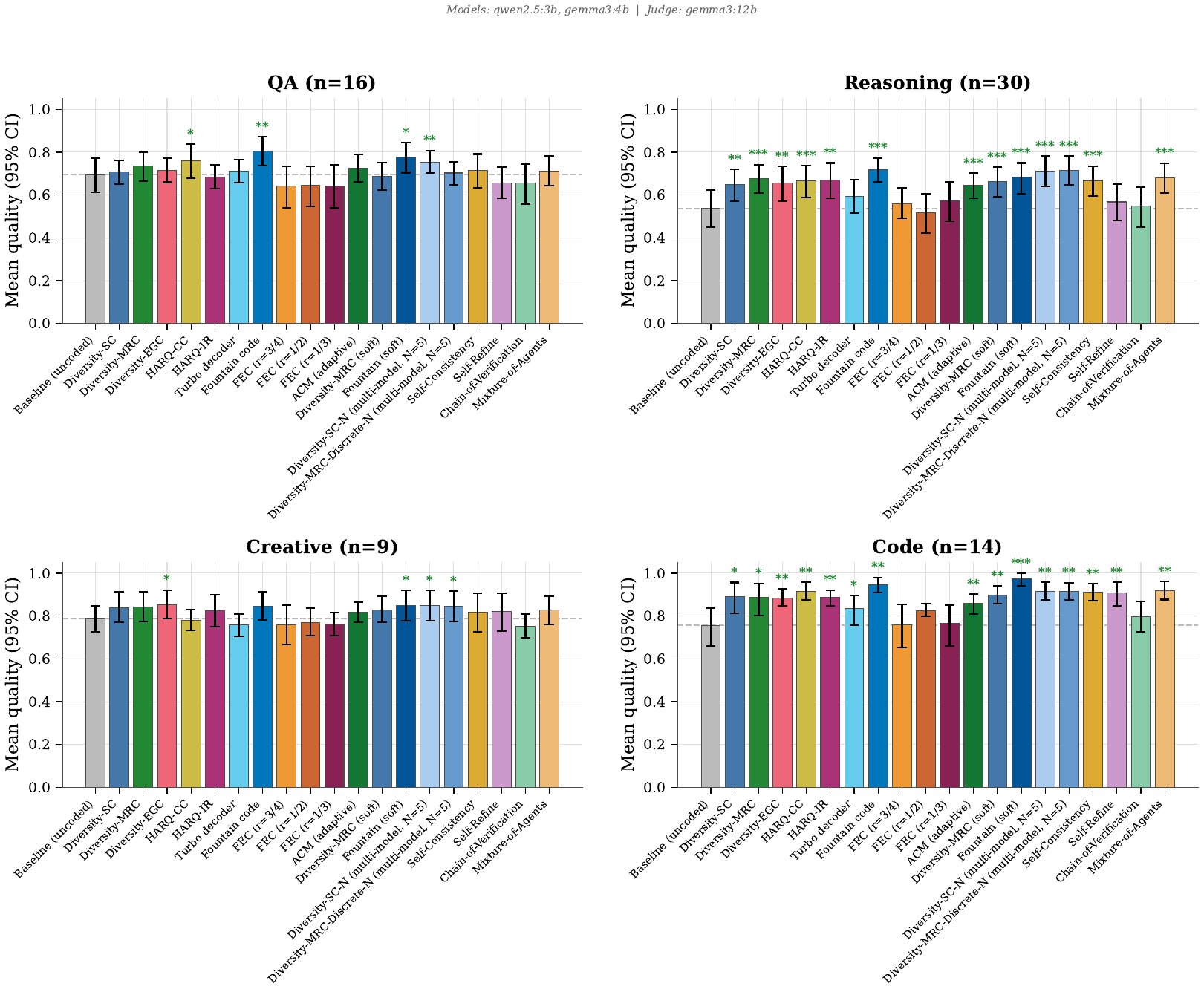}
  \caption{3B local: per-category mean quality (panels as in \Cref{fig:gallery-14b-cat}).}
  \label{fig:gallery-3b-cat}
\end{figure}

\subsection{3B local with cloud judge: Qwen-2.5 3B + Llama-3.2 3B, judge Gemma-4 31B}
\label{app:gallery-3b-cloud}

This subsection follows the same layout. \Cref{fig:gallery-3bcloud-qvc} plots quality versus cost; \Cref{fig:gallery-3bcloud-iter} the iterative-refinement diagnostics; and \Cref{fig:gallery-3bcloud-acm} the oracle-gap decomposition.

\begin{figure}[h]
  \centering
  \includegraphics[width=\linewidth]{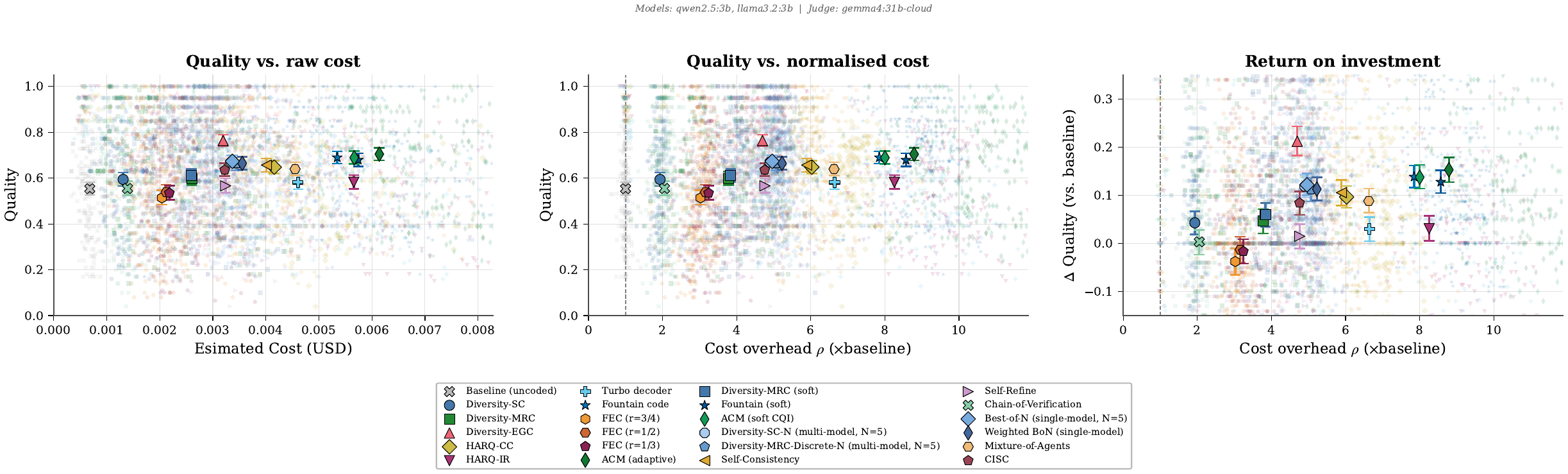}
  \caption{3B local with cloud judge: quality versus cost (panels as in \Cref{fig:gallery-14b-qvc}). Replacing the 12B local judge with a 31B cloud judge resolves judge-side score quantization at the price of an extra cost component on every scoring call; the technique ordering is qualitatively preserved.}
  \label{fig:gallery-3bcloud-qvc}
\end{figure}

\begin{figure}[h]
  \centering
  \begin{subfigure}[t]{0.49\linewidth}
    \includegraphics[width=\linewidth]{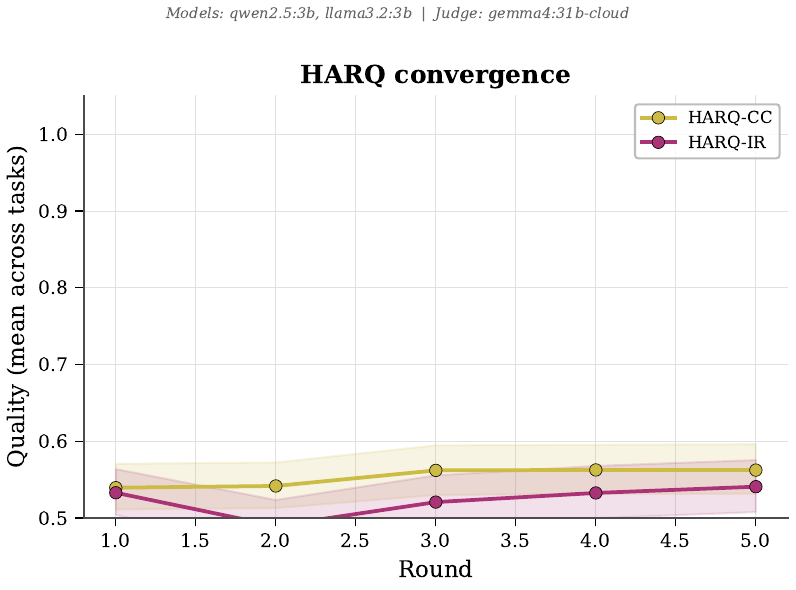}
    \caption{HARQ convergence.}
  \end{subfigure}\hfill
  \begin{subfigure}[t]{0.49\linewidth}
    \includegraphics[width=\linewidth]{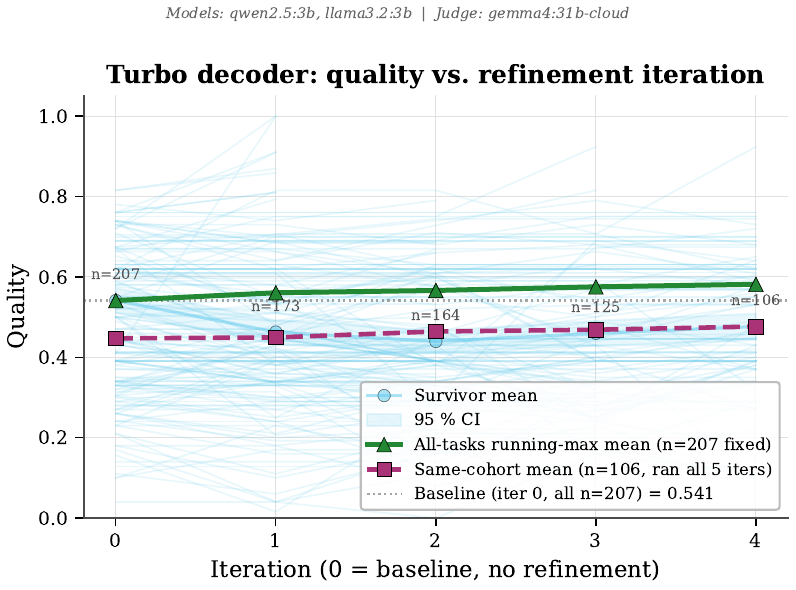}
    \caption{Turbo iteration trajectory.}
  \end{subfigure}
  \caption{3B local with cloud judge: iterative-refinement diagnostics.}
  \label{fig:gallery-3bcloud-iter}
\end{figure}

\begin{figure}[h]
  \centering
  \includegraphics[width=\linewidth]{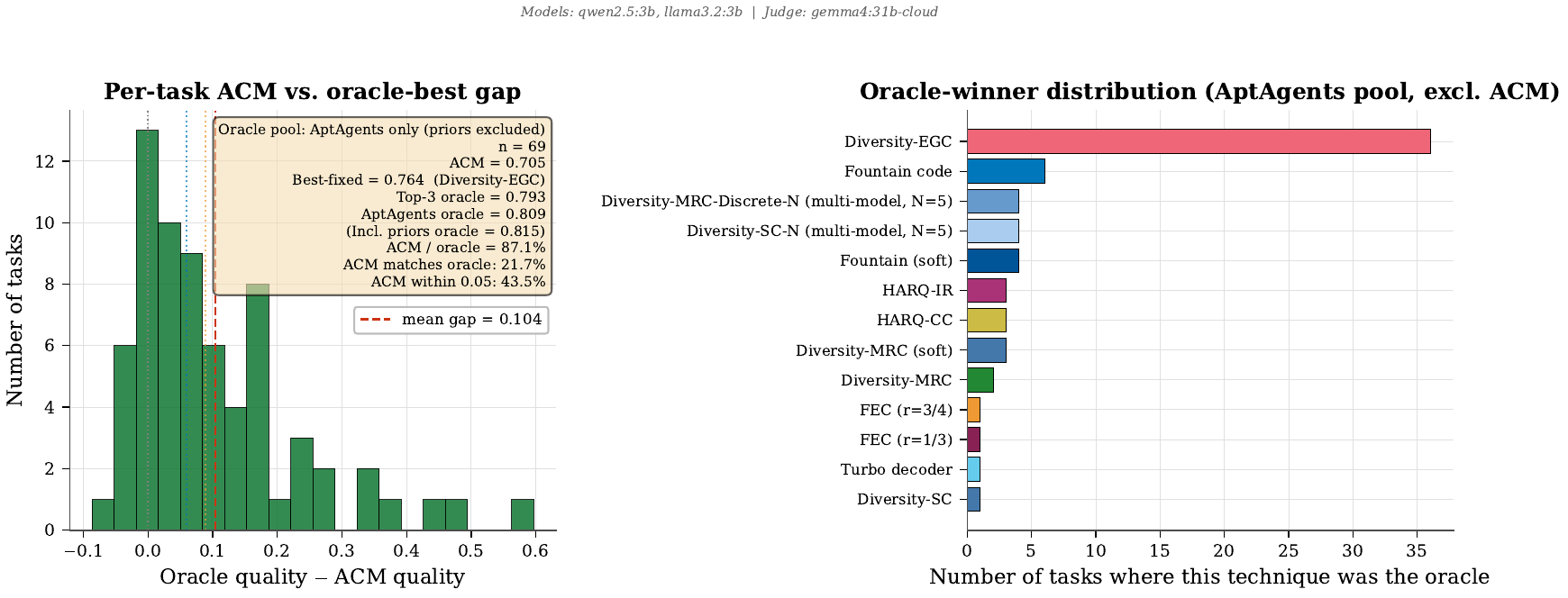}
  \caption{3B local with cloud judge: \textsc{ACM}-router oracle-gap decomposition (axes as in \Cref{fig:gallery-14b-acm}).}
  \label{fig:gallery-3bcloud-acm}
\end{figure}

\subsection{Anthropic + OpenAI cloud: Claude Haiku 4.5 + GPT-5-mini, judge Claude Haiku 4.5}
\label{app:gallery-cloud}

This subsection follows the same layout. \Cref{fig:gallery-cloud-qvc} plots quality versus cost; \Cref{fig:gallery-cloud-iter} the iterative-refinement diagnostics (both the chase-combining and turbo decoders sit above threshold on this configuration); and \Cref{fig:gallery-cloud-cat} the per-category quality.

\begin{figure}[h]
  \centering
  \includegraphics[width=\linewidth]{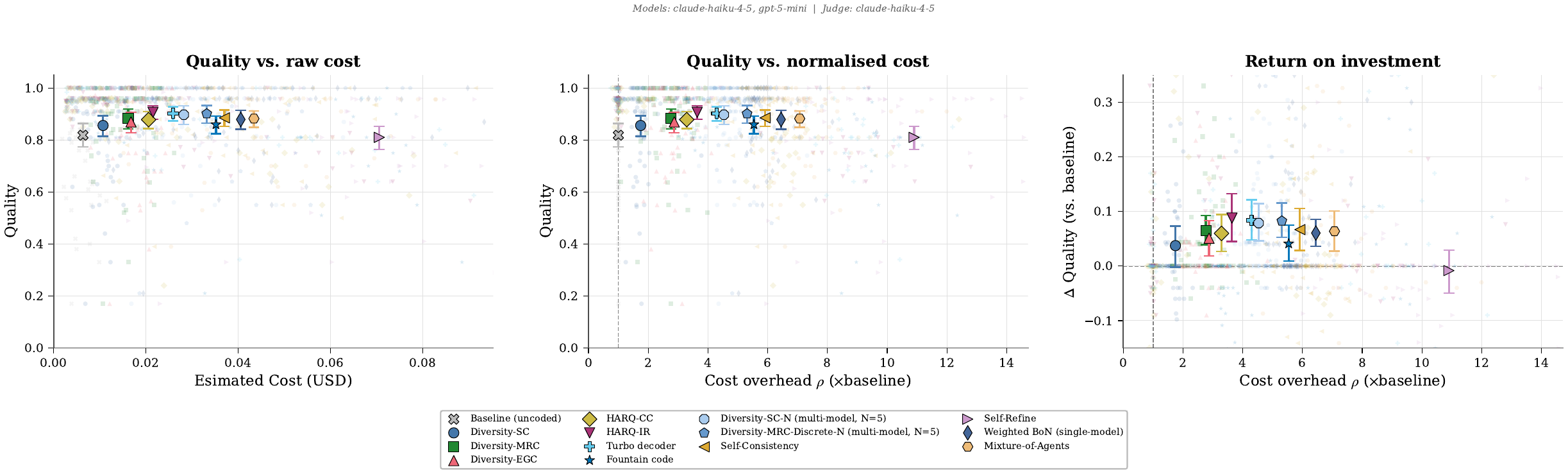}
  \caption{Anthropic + OpenAI cloud: quality versus cost (panels as in \Cref{fig:gallery-14b-qvc}). Absolute cost is one to two orders of magnitude above the local configurations; absolute quality is also higher, so quality gains per added cost-dollar are more conservative.}
  \label{fig:gallery-cloud-qvc}
\end{figure}

\begin{figure}[h]
  \centering
  \begin{subfigure}[t]{0.49\linewidth}
    \includegraphics[width=\linewidth]{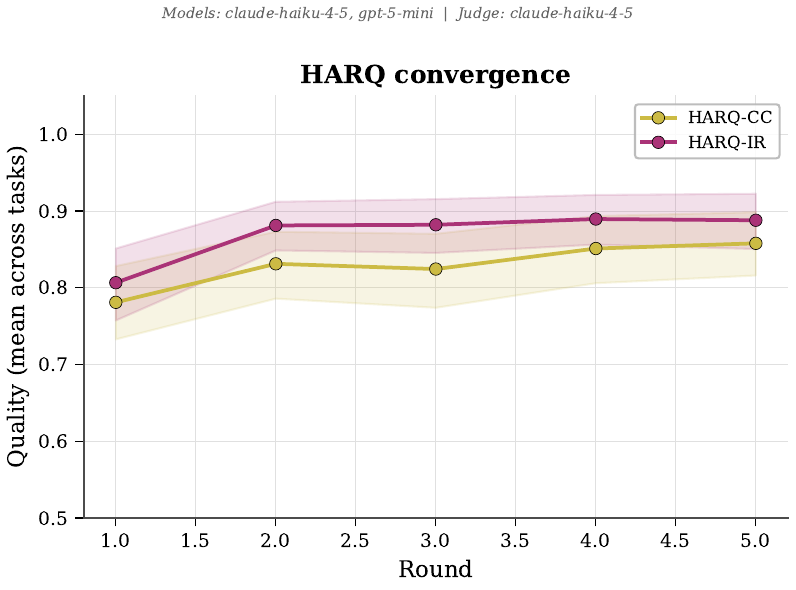}
    \caption{HARQ convergence.}
  \end{subfigure}\hfill
  \begin{subfigure}[t]{0.49\linewidth}
    \includegraphics[width=\linewidth]{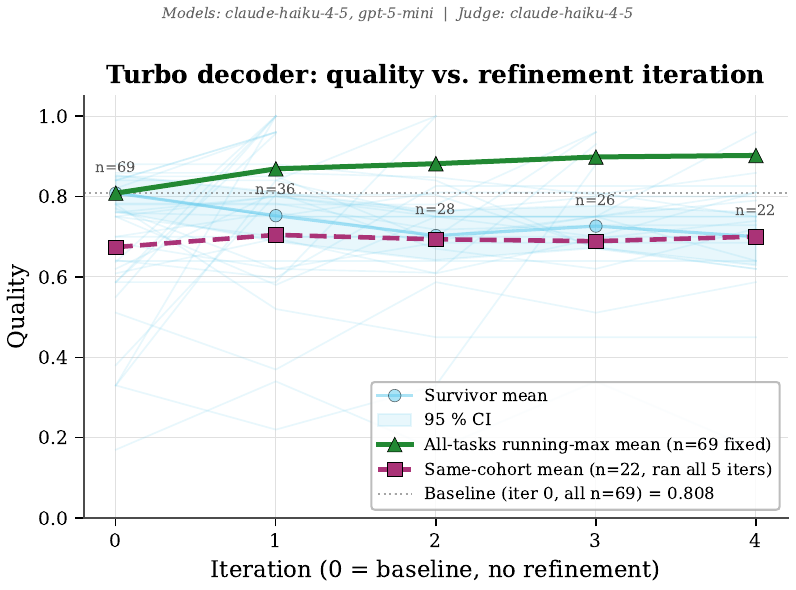}
    \caption{Turbo iteration trajectory.}
  \end{subfigure}
  \caption{Anthropic + OpenAI cloud: iterative-refinement diagnostics. Both refinement decoders sit firmly above the threshold of \Cref{prop:refinement-threshold} on this configuration.}
  \label{fig:gallery-cloud-iter}
\end{figure}

\begin{figure}[h]
  \centering
  \includegraphics[width=0.85\linewidth]{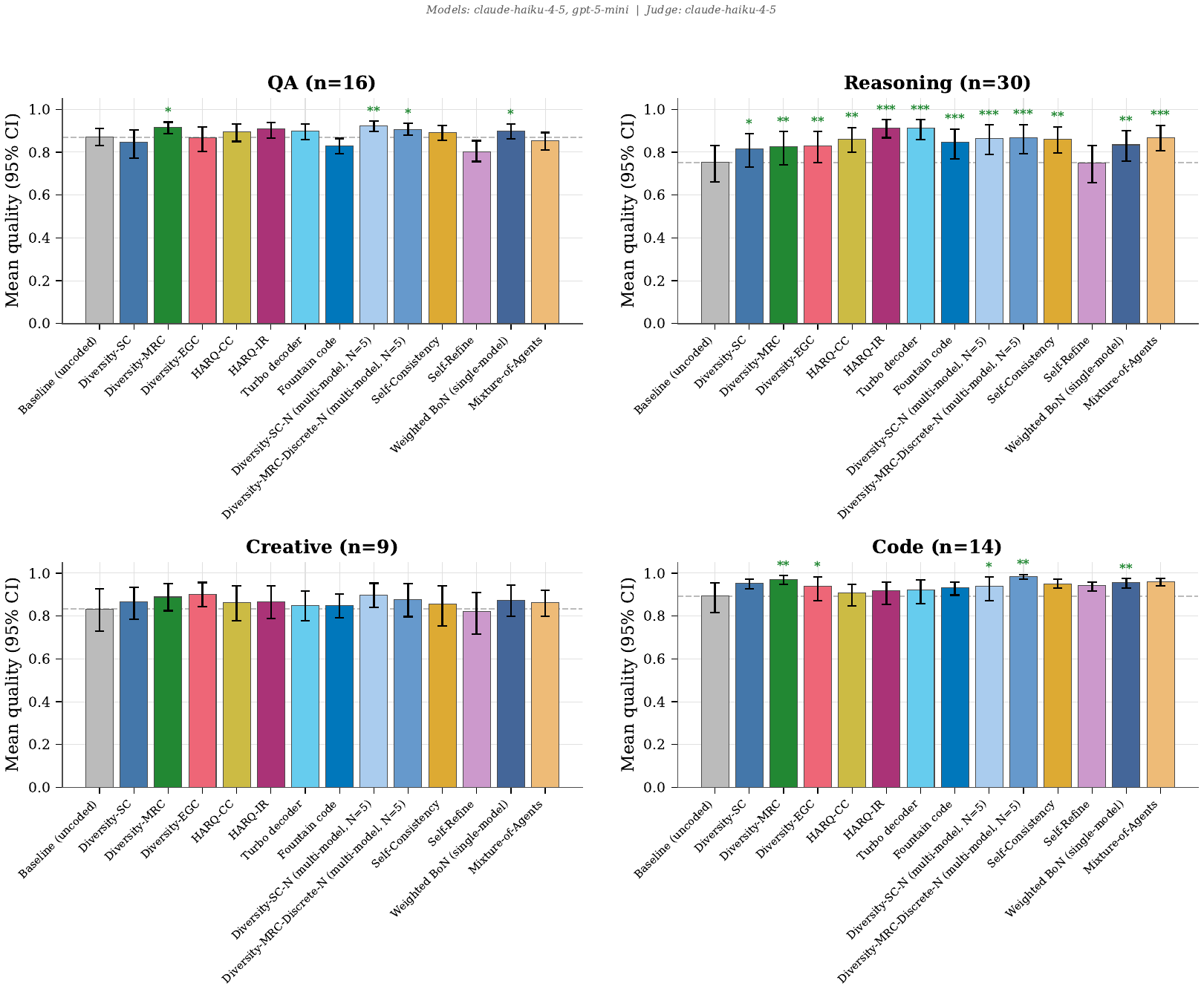}
  \caption{Anthropic + OpenAI cloud: per-category mean quality (panels as in \Cref{fig:gallery-14b-cat}).}
  \label{fig:gallery-cloud-cat}
\end{figure}

\subsection{Ollama-cloud trio on curated tasks: Nemotron-Nano-3 30B + Devstral-Small-2 24B, judge GLM-5.1}
\label{app:gallery-ollama69}

This subsection covers the Ollama-cloud trio on the 69 curated tasks. \Cref{fig:gallery-ollama69-qvc} plots quality versus cost; \Cref{fig:gallery-ollama69-iter} the iterative-refinement diagnostics; \Cref{fig:gallery-ollama69-acm} the oracle-gap decomposition; and \Cref{fig:gallery-ollama69-cat} the per-category quality.

\begin{figure}[h]
  \centering
  \includegraphics[width=\linewidth]{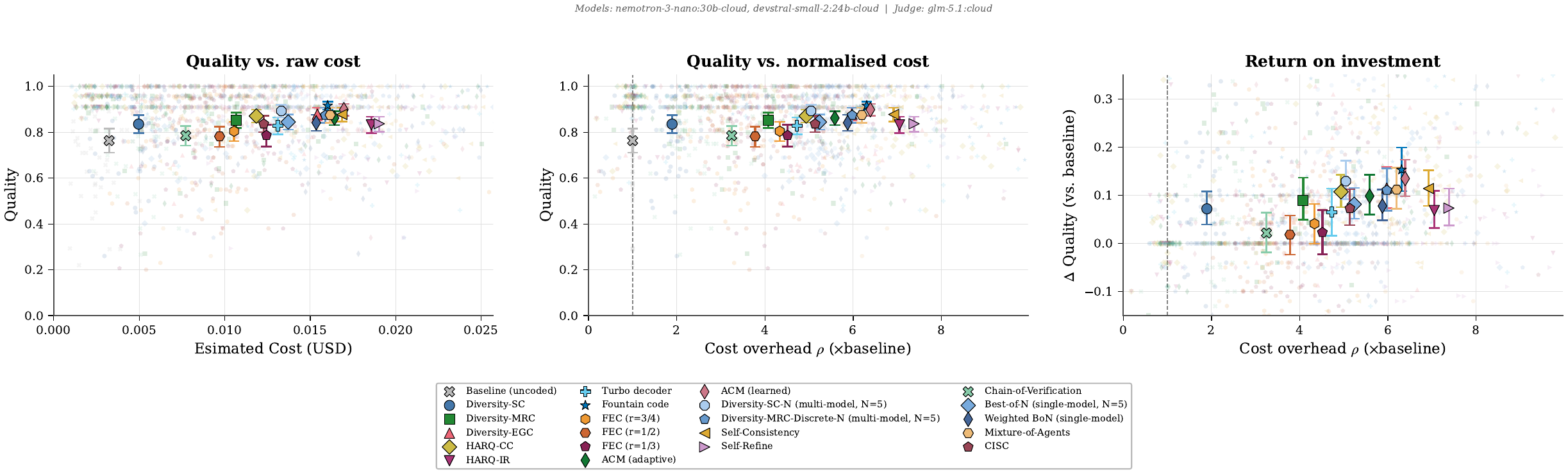}
  \caption{Ollama-cloud trio on the 69 curated tasks: quality versus cost (panels as in \Cref{fig:gallery-14b-qvc}).}
  \label{fig:gallery-ollama69-qvc}
\end{figure}

\begin{figure}[h]
  \centering
  \begin{subfigure}[t]{0.49\linewidth}
    \includegraphics[width=\linewidth]{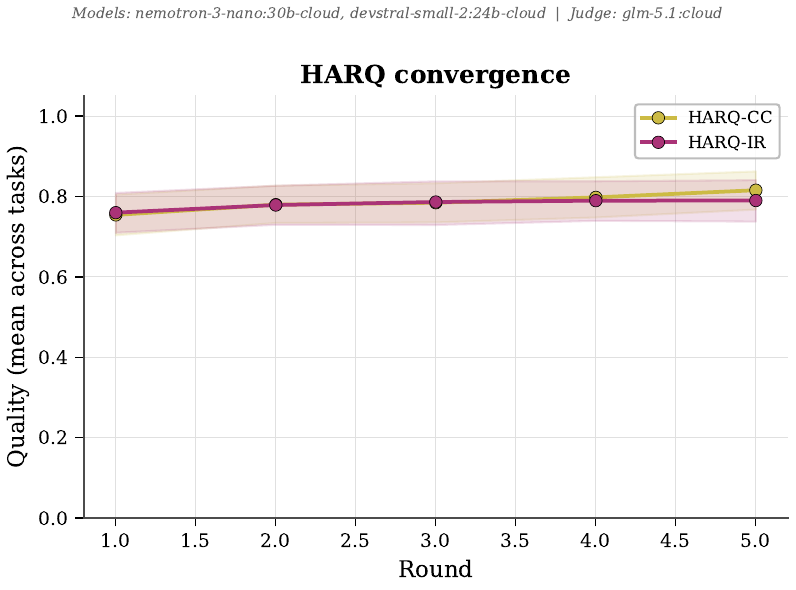}
    \caption{HARQ convergence.}
  \end{subfigure}\hfill
  \begin{subfigure}[t]{0.49\linewidth}
    \includegraphics[width=\linewidth]{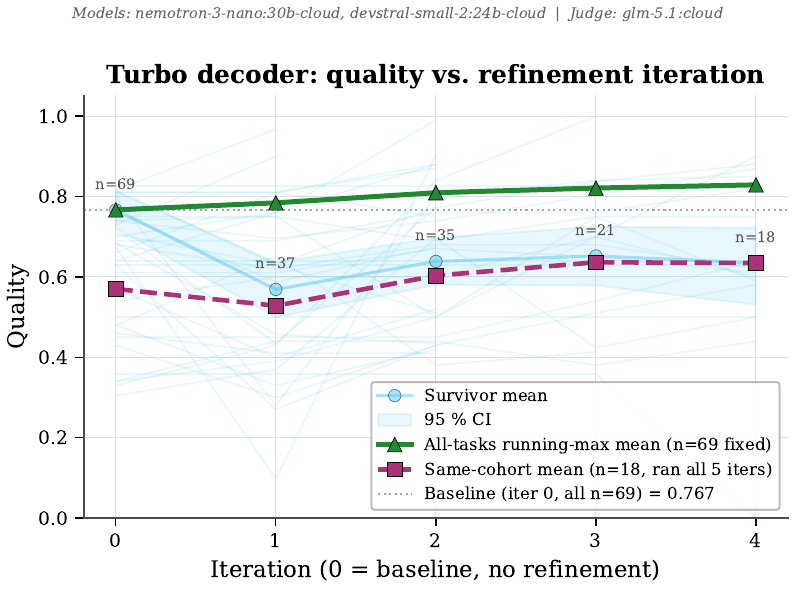}
    \caption{Turbo iteration trajectory.}
  \end{subfigure}
  \caption{Ollama-cloud trio on the curated split: iterative-refinement diagnostics.}
  \label{fig:gallery-ollama69-iter}
\end{figure}

\begin{figure}[h]
  \centering
  \includegraphics[width=\linewidth]{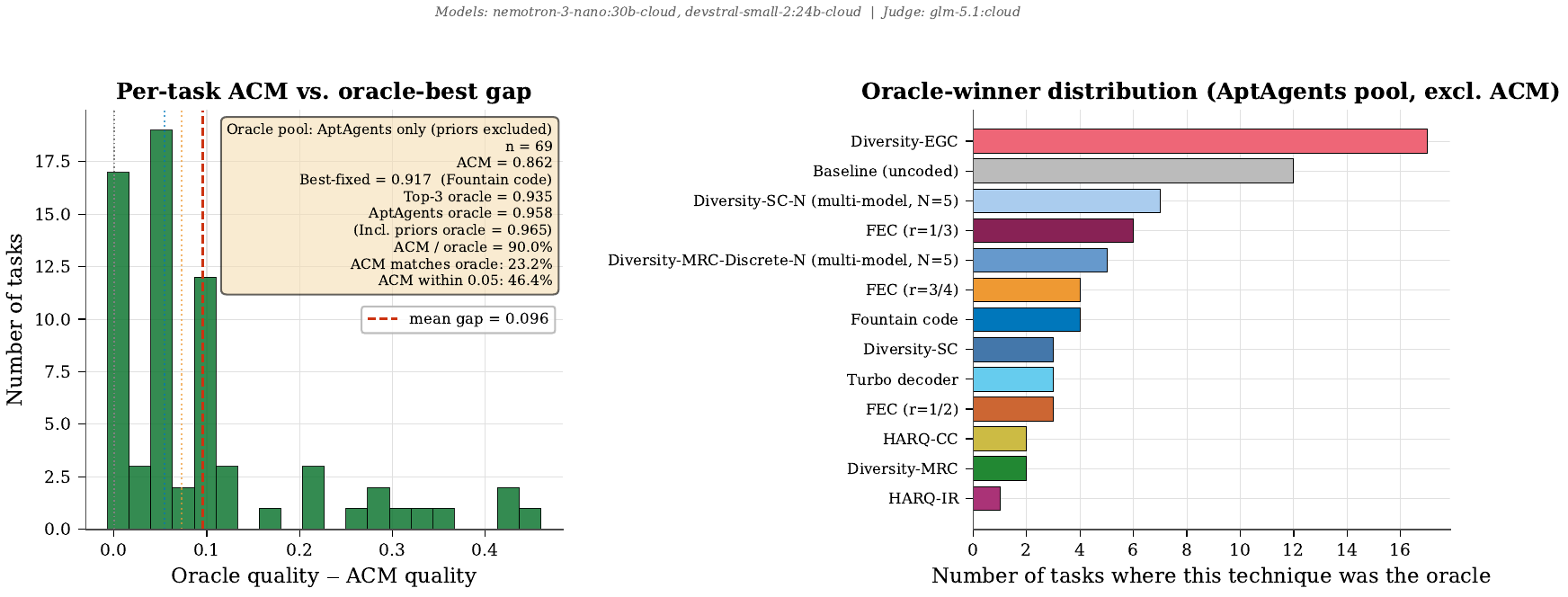}
  \caption{Ollama-cloud trio on the 69 curated tasks: \textsc{ACM}-router oracle-gap decomposition (axes as in \Cref{fig:gallery-14b-acm}).}
  \label{fig:gallery-ollama69-acm}
\end{figure}

\begin{figure}[h]
  \centering
  \includegraphics[width=0.85\linewidth]{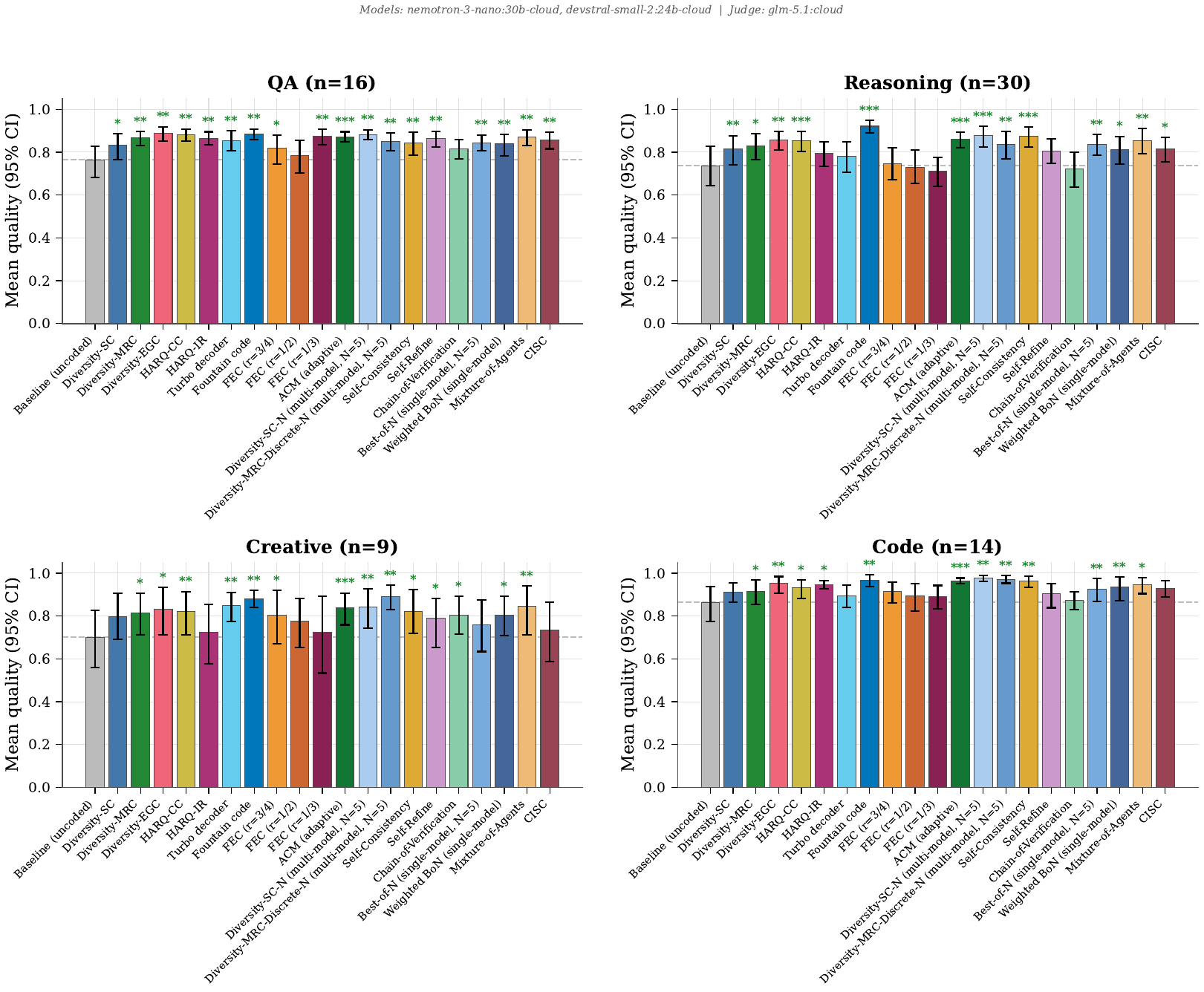}
  \caption{Ollama-cloud trio on the curated split: per-category mean quality (panels as in \Cref{fig:gallery-14b-cat}).}
  \label{fig:gallery-ollama69-cat}
\end{figure}

\subsection{Ollama-cloud trio on the 300-task hard-benchmark split (MMLU/GSM8K/HumanEval)}
\label{app:gallery-ollama300}

This subsection covers the headline 300-task hard-benchmark split. \Cref{fig:gallery-ollama300-qvc} plots quality versus cost on the same axes as the curated subsections; \Cref{fig:gallery-ollama300-bars} reports per-dataset mean quality (one panel per dataset); \Cref{fig:gallery-ollama300-pareto} reports the per-dataset quality--cost Pareto scatters; \Cref{fig:gallery-ollama300-acm} decomposes the realized-router-to-oracle gap on this configuration (and underlies the body's \Cref{fig:acm-oracle-pareto} and the technique pick-frequency profile of \Cref{fig:acm-technique-frequency}); and \Cref{fig:gallery-ollama300-iter} reports the iterative-refinement diagnostics.

\begin{figure}[h]
  \centering
  \includegraphics[width=\linewidth]{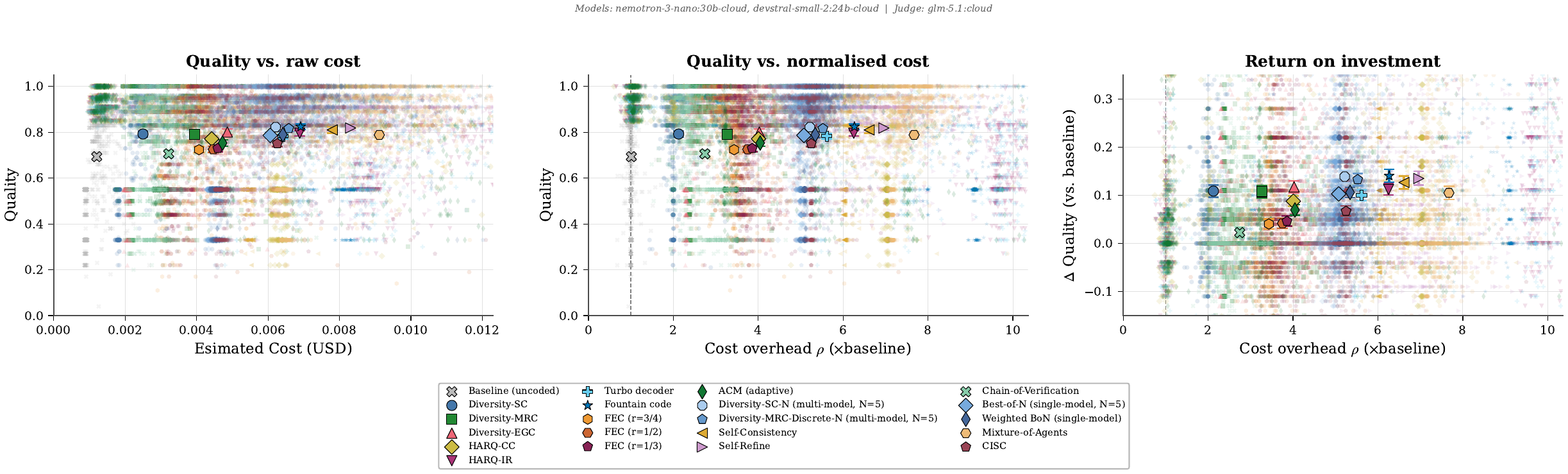}
  \caption{Ollama-cloud trio on the 300-task hard-benchmark split: quality versus cost (panels as in \Cref{fig:gallery-14b-qvc}). This is the configuration on which the headline cost-aware semantic-nearest-neighbor router (\Cref{tab:semknn-pareto}) is evaluated.}
  \label{fig:gallery-ollama300-qvc}
\end{figure}

\begin{figure}[h]
  \centering
  \includegraphics[width=\linewidth]{figures/goods/ollama_nemotron_devstral_glm51_datasets/hard_benchmark_bars.pdf}
  \caption{Per-dataset mean quality on the 300-task hard-benchmark split, one panel per dataset (GSM8K hard, MMLU hard, HumanEval hard). Dashed line is the per-dataset baseline mean; significance stars are paired Wilcoxon vs.\ baseline.}
  \label{fig:gallery-ollama300-bars}
\end{figure}

\begin{figure}[h]
  \centering
  \includegraphics[width=\linewidth]{figures/goods/ollama_nemotron_devstral_glm51_datasets/hard_benchmark_pareto.pdf}
  \caption{Per-dataset quality versus normalized cost overhead $\rho$ on the 300-task hard-benchmark split. Dashed line is the empirical Pareto frontier; the cost-aware semantic-nearest-neighbor router of \Cref{tab:semknn-pareto} sits on or above this frontier on every dataset.}
  \label{fig:gallery-ollama300-pareto}
\end{figure}

\begin{figure}[h]
  \centering
  \includegraphics[width=\linewidth]{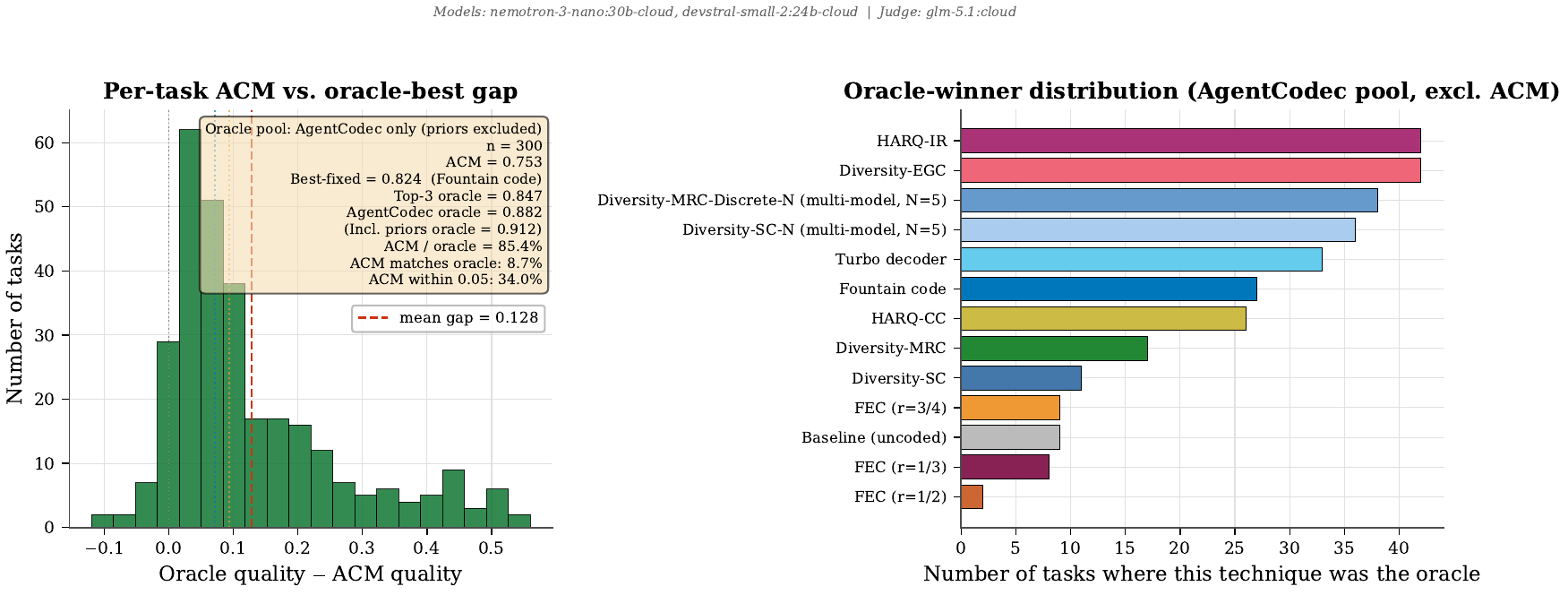}
  \caption{Ollama-cloud trio on the 300-task hard-benchmark split: \textsc{ACM}-router oracle-gap decomposition. The \emph{policy} term collapses once the dispatcher is the cost-aware semantic-nearest-neighbor router; the binding constraint shifts to the feature-set information limit and the finite-sample generalization gap, as noted in \Cref{sec:exp-acm-decomp}.}
  \label{fig:gallery-ollama300-acm}
\end{figure}

\begin{figure}[h]
  \centering
  \begin{subfigure}[t]{0.49\linewidth}
    \includegraphics[width=\linewidth]{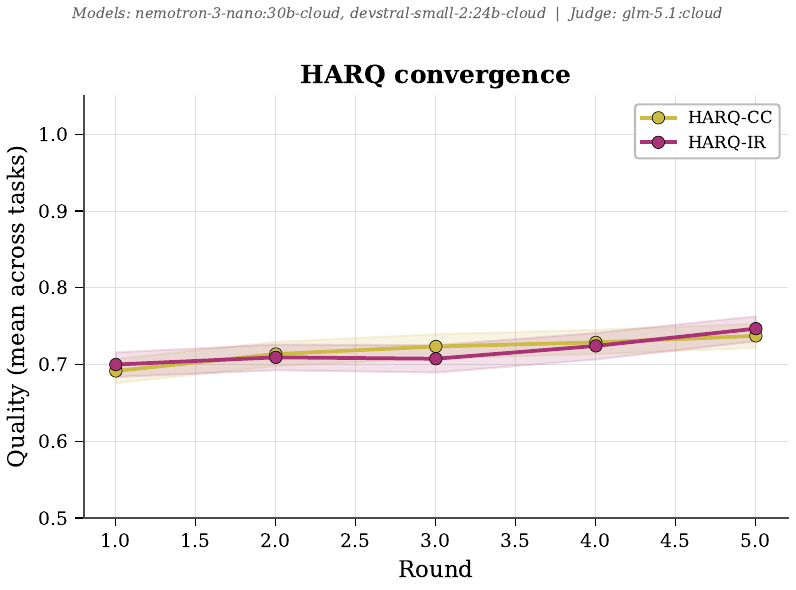}
    \caption{HARQ convergence.}
  \end{subfigure}\hfill
  \begin{subfigure}[t]{0.49\linewidth}
    \includegraphics[width=\linewidth]{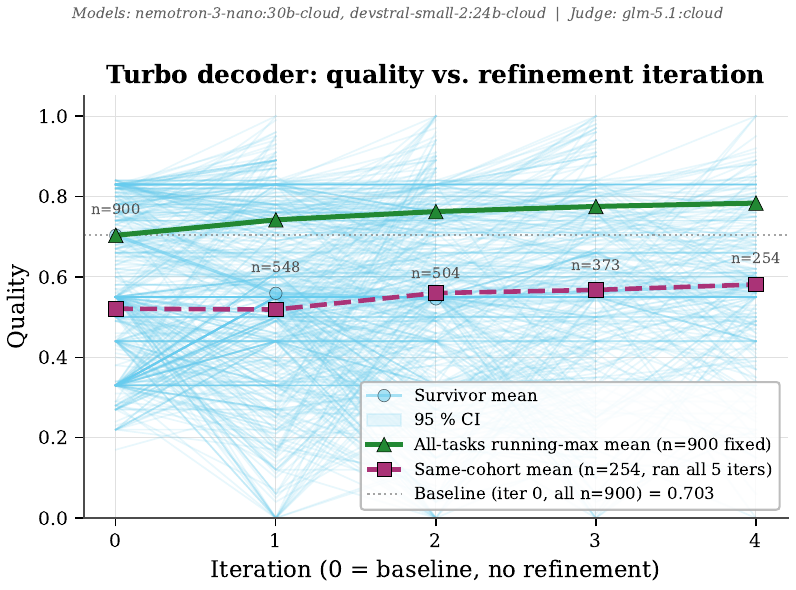}
    \caption{Turbo iteration trajectory.}
  \end{subfigure}
  \caption{Ollama-cloud trio on the 300-task hard-benchmark split: iterative-refinement diagnostics.}
  \label{fig:gallery-ollama300-iter}
\end{figure}

%%%%%%%%%%%%%%%%%%%%%%%%%%%%%%%%%%%%%%%%%%%%%%%%%%%%%%%%%%%%
\clearpage
\newpage

\setcounter{page}{1}
\renewcommand{\thepage}{\arabic{page}}

\end{document}